\PassOptionsToPackage{svgnames}{xcolor}
\documentclass[12pt]{article}
\usepackage{amsmath}
\usepackage{graphicx}
\usepackage{enumerate}
\usepackage{natbib}
\usepackage{graphicx} % Required for inserting images

\usepackage[left=1in,top=1.in,right=1in,bottom=1.in,head=.2in,nofoot]{geometry}

\usepackage[utf8]{inputenc}
\usepackage[bookmarksopen=true,bookmarks=true,pdfencoding=auto,psdextra,colorlinks]{hyperref}
\hypersetup{ 
    colorlinks,
    linkcolor={blue!80!black},
    citecolor={green!40!black},
    urlcolor={red!50!black}
}
\usepackage{bookmark}

% \colorlet{shadecolor}{lightgray!5}
% \usepackage{framed}
\usepackage{mystyle}
\usepackage{natbib}
\usepackage{bibunits}

\allowdisplaybreaks
\usepackage{xspace}

\usepackage[font=small,labelfont=bf]{caption}
\captionsetup{width=1.\textwidth}
\captionsetup{font={small,stretch=1.1}}
\setlength{\footskip}{24pt}

\usepackage{tabularx}
\usepackage{multirow}
\usepackage{array}
\newcolumntype{Y}{>{\centering\arraybackslash}X}
\newcolumntype{P}{>{\raggedleft\arraybackslash}X}
\newcolumntype{C}{ >{\centering\arraybackslash} m{4cm} }
\newcolumntype{D}{ >{\raggedright\arraybackslash} m{10.5cm} }

\usepackage{algorithm}
\usepackage{algpseudocode}% http://ctan.org/pkg/algorithmicx
\usepackage{chngcntr}

% \algnewcommand{\LineComment}[1]{\State \(\triangleright\) #1}

% \usepackage{chngcntr}
% \usepackage{apptools}
% \AtAppendix{\counterwithin{lemma}{section}}

\usepackage{tikz}

\newcommand{\footremember}[2]{%
    \footnote{#2}
    \newcounter{#1}
    \setcounter{#1}{\value{footnote}}%
}
\newcommand{\footrecall}[1]{%
    \footnotemark[\value{#1}]%
}
\setcounter{footnote}{1}

\usepackage{setspace}
% \doublespacing
\usepackage{wrapfig}

\usepackage{etoolbox}
\colorlet{shadecolor}{lightgray!12}
\usepackage{framed}

\newif\ifaftersection
\aftersectionfalse
% Patch sectioning commands to set the flag
\pretocmd{\section}{\aftersectiontrue}{}{}
\pretocmd{\subsection}{\aftersectiontrue}{}{}

\renewenvironment{quote}
  {\list{}{\rightmargin=.0cm \leftmargin=.0cm}%
   \item\relax}
  {\endlist}
\usepackage[xcolor,leftbars]{changebar}
{\begin{quote}%
  \begin{changebar}\cbcolor{lightgray}\color{Teal}}%
  {\end{changebar}%
\end{quote}}

\title{Disentangled Feature Importance}
\author{Jin-Hong Du\footremember{hkusaas}{Department of Statistics and Actuarial Science, The University of Hong Kong, Hong Kong SAR, China.}\footremember{hkuids}{Musketeers Foundation Institute of Data Science, The University of Hong Kong, Hong Kong SAR, China.}
\and 
Kathryn Roeder\footremember{cmustats}{Department of Statistics and Data Science, Carnegie Mellon University, Pittsburgh, PA 15213, USA.}\footremember{cmucbd}{Computational Biology Department, Carnegie Mellon University, Pittsburgh, PA 15213, USA.}
\and 
Larry Wasserman\footrecall{cmustats} \footremember{cmumld}{Machine Learning Department, Carnegie Mellon University, Pittsburgh, PA 15213, USA.}
}
\date{\today}

\begin{document}

\maketitle

% \tableofcontents
\begin{abstract}
    When predictors are statistically dependent, the appropriate definition of feature importance depends on the operational goal.
    Conditional-incremental measures are well-suited for feature selection, acquisition, and compression, where shared predictive information is treated as redundancy. 
    For post-hoc interpretation, however, the goal is often to attribute predictive signals across correlated measurement channels. 
    We introduce Disentangled Feature Importance (DFI), a population-level attribution framework for this setting. DFI maps covariates to an independent latent representation under a specified entropic optimal transport geometry, computes latent importance, and attributes it back to the original covariates through barycentric sensitivities. We show that broad conditional-incremental FI functionals target conditional incremental predictive value under squared-error loss, and therefore answer a different question from attribution of shared predictive signal under dependence. Under fixed transport cost, reference law, and regularization level, DFI defines a well-specified family of estimands. Latent scores admit a functional ANOVA interpretation, and in the Gaussian linear case, the attributed DFI recovers the classical \(R^2\) decomposition for correlated regressors. 
    We derive influence-function-based inference under nuisance-rate and smoothness conditions, and show in simulations and an HIV-1 neutralization-resistance analysis that DFI yields stable, interpretable, uncertainty-quantified attributions of shared predictive signal.
\end{abstract}
\noindent\textbf{Keywords}: 
feature importance attribution, interpretable machine learning, statistical feature importance, Sobol index, variable importance

\section{Introduction}\label{sec:intro}

    Feature importance quantification lies at the heart of interpretable machine learning and statistics, yet dependence among predictors makes the target estimand inherently goal-dependent. 
    For feature selection, acquisition, or compression, it is often appropriate to treat shared predictive information as redundancy; for post-hoc interpretation of a fitted predictor, however, one may instead want to attribute predictive signals across the observed measurement channels through which the model operates \citep{owen2017shapley,verdinelli2024feature}. 
    Such attribution is useful for auditing modern black-box models, monitoring which correlated inputs carry predictive signal, diagnosing failure modes or proxy behavior, and communicating model behavior in scientific applications.
    This distinction is critical in domains ranging from genomics \citep{du2025causal,du2025causarray} to language-model interpretation \citep{kang2025spex}, where features naturally exhibit complex dependencies and several observed variables may reflect the same underlying signal.

    Consider the regression task of predicting an outcome $Y \in \RR$ from covariates $X \in \RR^d$. 
    The regression function \(f(x)=\EE[Y\mid X=x]\) captures the population predictive relationship, while a fitted black-box predictor \(\hat f\) provides an empirical approximation or deployed model whose behavior motivates interpretation.
    Feature importance (FI) methods aim to decompose predictive signal across the input dimensions, revealing which features contribute most to prediction.
    The literature distinguishes between \emph{statistical feature importance}, which measures each feature's role in the population regression function \(f\), and \emph{algorithmic feature importance}, which quantifies contributions to a specific fitted predictor \(\hat f\).
    We adopt the statistical perspective: \(\hat f\) may be used to estimate \(f\), but the target estimands are population-level quantities determined by the joint law of \((X,Y)\).

    Rather than modifying conditional-incremental methods to answer a different question, we introduce a complementary FI estimand for post-hoc attribution under dependence.
    The central idea is to compute importance after passing to an independent latent representation, and then attribute the resulting predictive signal back to the observed covariates.
    Our framework operates in three steps.
    First, under a specified disentangling geometry, we map the original covariates \(X\) to a representation \(Z\sim K(\cdot\mid X)\) with independent coordinates via a Markov kernel \(K\).
    Second, we compute FI in this latent space, where resampling-based importance targets are not distorted by dependence among the latent coordinates.
    Third, we attribute the latent importance back to the original covariates through the sensitivities of the barycentric projection.
    The resulting scores should be interpreted as feature-importance attribution scores under correlation, not as measures of additional predictive utility for feature selection.
    This disentanglement strategy is motivated by a classical result in statistics: the decomposition of \(R^2\) for linear regression with correlated covariates.

    \begin{example}[Linear regression with correlated covariates]\label{ex:R2}
        Consider a multiple linear model:
        \begin{align}
            Y = \beta^{\top} X + \epsilon \label{model:linear}
        \end{align}
        with $X\sim \cN_d(0, \Sigma)$ for some covariance matrix $\Sigma\in\SSS_+^d$ and $\epsilon$ being an independent noise of $X$.
        Without loss of generality, suppose $Y$ and $X_j$'s all have zero mean and unit variance. 
        In this case, $\Sigma$ is also the correlation matrix of $X$, and $\Sigma_{jj}=\sum_{l=j}^d(\Sigma^{\frac{1}{2}})_{jl}^2 = 1$ for $j=1,\ldots,d$.
        The data model can be characterized through two parameters ($\Sigma$ and $\beta$) and the \emph{coefficient of determination} is given by $R^2 = \beta^{\top}\Sigma\beta$.
        \citet{genizi1993decomposition} showed a decomposition:
        \[
            R^2 = \sum_{j=1}^dc_j^2 = \sum_{l=1}^dv_l^2,
        \]
        where $c_j^2=(e_j^{\top}\Sigma^{\frac{1}{2}}\beta)^2$ is the component assigned to the decorrelated feature $Z_j$ and $v_l^2 = \sum_{j=1}^d(\Sigma^{\frac{1}{2}})_{jl}^2c_j^2$ is the component assigned to feature $X_l$.
    \end{example}

    In the above example, $R^2$ also represents the intrinsic variation $\VV[\beta^{\top}X]$.
    Hence, the decomposition attributes each feature an importance that sums up to the intrinsic variation.
    Our disentangled FI framework extends this classical decomposition to nonparametric settings, providing a principled way to attribute FI in the presence of arbitrary dependencies.

\subsection{Motivating applications}
    The applications below share a common structure: a flexible predictive model is trained on correlated measurement channels, and the scientific or operational goal is to understand how predictive signal is expressed through those observed channels. This is distinct from selecting a minimal nonredundant subset of covariates. When several features measure related biological, linguistic, clinical, or behavioral factors, an attribution analysis may legitimately assign importance to multiple correlated inputs, because each may carry or reflect the same predictive signal in the fitted prediction problem.

    \begin{example}[Genomics and systems biology]
        In genetic perturbation and systems-biology studies, modern deep learning models and foundation models predict post-perturbation expression or cellular state from high-dimensional molecular measurements \citep{du2022robust,moon2025augmented,cui2024scgpt}. Genes often exhibit strong co-expression and pathway-level dependence, so the interpretive question is not only which gene would improve prediction beyond all others, but which genes or gene groups carry predictive signal through correlated biological modules.
    \end{example}

    \begin{example}[Natural language processing]
        In text-generation and language-understanding tasks, input tokens, phrases, or embeddings are strongly dependent through syntax, semantics, and context. Attributing a model output to the observed input components is central to interpreting black-box language models \citep{kang2025spex}, but conditional incremental importance may treat predictable tokens or phrases as redundant even when they participate in the model's predictive signal.
    \end{example}

    \begin{example}[Causal inference]
        After identification assumptions such as unconfoundedness are imposed, fitted models of conditional treatment effects, \(\tau(x)=\EE[Y(1)-Y(0)\mid X=x]\), are often interpreted to understand treatment-effect heterogeneity \citep{hines2022variable}. When clinical, demographic, or socioeconomic covariates are correlated, the relevant post-hoc question may be which observed covariates carry heterogeneity signal in the fitted predictor, rather than which covariates provide unique incremental value for selecting a sparse adjustment or targeting rule.
    \end{example}

\subsection{Contributions}
    Our work makes the following contributions.

\begin{itemize}

    \item \textbf{Estimand clarification under dependence.} In \Cref{sec:FI}, we clarify that different FI estimands answer different operational questions. Under squared-error loss, LOCO and CPI target the same population functional, \( \EE\!\left[\VV\{\EE[Y\mid X]\mid X_{-j}\}\right], \) and therefore quantify the conditional incremental value of \(X_j\) beyond \(X_{-j}\) (\Cref{lem:null-feature,lem:equiv}). This target is well aligned with feature selection, acquisition, or compression, but differs from feature-importance attribution, where the goal is to describe how predictive signal is carried by observed covariates under dependence.
    
    \item \textbf{Disentangled Feature Importance for attribution.} In \Cref{sec:DFI}, we introduce \emph{Disentangled Feature Importance (DFI)}, a population FI estimand for post-hoc attribution under correlated inputs. DFI first maps the observed covariates to an independent latent representation through a specified disentangling kernel, computes importance in the latent space, and then attributes latent importance back to the original covariates through barycentric sensitivities. The resulting scores should be interpreted as attribution scores under the chosen disentangling geometry, not as measures of unique incremental predictive utility.

    \item \textbf{Semiparametric inference.} In \Cref{sec:stat-est-inf}, we develop inference theory for both latent and attributed DFI. For entropic optimal transport (EOT) kernels, we derive efficient influence-function representations and establish asymptotic normality under second-order remainder terms (\Cref{thm:error-decom-eot,thm:error-decom-X-eot}). The remainders are negligible when the regression function, transport kernel, and barycentric sensitivity estimators converge at \(\op(n^{-1/4})\) rates. The same framework also covers transport-map special cases, including Bures--Wasserstein and Knothe--Rosenblatt maps (\Cref{app:sec:ot-maps}).

    \item \textbf{Nonparametric extension of the classical \(R^2\) decomposition.} In \Cref{subsec:R2-extension}, we show that latent DFI admits a functional ANOVA interpretation: the latent scores aggregate all variance components involving each latent coordinate. For additive regression functions in the disentangled space, the scores sum to the total predictive variability; more generally, they sum to interaction-order-weighted ANOVA variances (\Cref{lem:latent-importance-decomp} and \Cref{prop:decomp-phi}). In the Gaussian linear case, attributed DFI recovers the classical \(R^2\) decomposition of \citet{genizi1993decomposition}, providing a nonparametric extension to nonlinear regression under dependent covariates.
    
    \item \textbf{Computational and empirical assessment.} DFI avoids repeated reduced-model fitting and conditional covariate distribution estimation, two major computational bottlenecks in LOCO- and CPI-type procedures. Through simulations in \Cref{sec:simu} and an HIV-1 neutralization-resistance analysis in \Cref{sec:real-data}, we show that DFI provides stable feature-importance attribution under correlation, supports uncertainty quantification, and remains computationally practical.
\end{itemize}

\subsection{Notation}

For an integer $d\in\NN$, define $[d] = \{1,2,\ldots,d\}$.
For a vector $X\in \RR^d$, $X_{\cS}\in \RR^{|\cS|}$ denotes a sub-vector of $X$ indexed by $\cS\subseteq[d]$, and $X_{-j} := X_{[d]\setminus j}$. The $j$-th standard basis vector is denoted by $e_j$. The cardinality of a set $\cS$ is denoted by $|\cS|$. The indicator function is denoted by $\ind\{\cdot\}$.
For a symmetric matrix $\Sigma$, its (unique) positive semi-definite square root and its inverse are denoted by $\Sigma^{\frac{1}{2}}$ and $\Sigma^{-\frac{1}{2}}$, respectively. The set of symmetric and positive definite $d \times d$ matrices is denoted by $\SSS_+^d$. The Löwner order is denoted by $\preceq$; that is, $A \preceq B$ means that $B-A$ is positive semi-definite. The operator norm of a matrix is denoted by $\|\cdot\|_{\oper}$, and its trace is denoted by $\tr(\cdot)$.

% The partial derivative of a function $f$ with respect to its $j$-th argument is denoted by $\partial_{x_j}f$. The gradient of $f$ is denoted by $\nabla f$. 
% The Lipschitz constant for a function $f$ is denoted by $\Lip(f)$. 

For a tuple of random vectors $O =(X,Y)$, the expectation and probability over the joint distribution $\PP$ are denoted by $\EE(\cdot)$ and $\PP(\cdot)$, respectively. The marginal measure of $X$ is denoted by $\PP_X$, and its cumulative distribution function is $F_X$. For a measurable space $(\Omega,\cF)$, let $\cP(\Omega)$ denote the set of all probability measures on $(\Omega,\cF)$.
For a statistical estimand $\phi$, we write $\phi(\PP)$ to emphasise its dependence on the underlying distribution $\PP$. 
For (potentially random) measurable functions $f$, we denote expectations with respect to the data-generating distribution of $O$ alone by $\PP f(O) = \int f \rd\PP$, while $\EE [f(O)]$ marginalizes out all randomness from both $O$ and any nuisance functions that $f$ is dependent on. The empirical expectation over $n$ samples is denoted by $\PP_n f(O) = \frac{1}{n}\sum_{i=1}^nf(O_i)$. The $L_2$ norm of a function with respect to a measure $\PP_X$ is denoted by $\|\cdot\|_{L_2(\PP_X)}$.
The population and empirical variances are denoted by $\VV$ and $\VV_n$. Statistical independence is denoted by $\indep$. The $d$-dimensional multivariate normal distribution with mean $\mu$ and covariance $\Sigma$ is denoted by $\cN_d(\mu, \Sigma)$.
We use ``$o$'' and ``$\cO$'' to denote the little-o and big-O notations; ``$\op$'' and ``$\Op$'' are their probabilistic counterparts. For sequences $\{a_n\}$ and $\{b_n\}$, we write $a_n\lesssim b_n$ if $a_n=\cO(b_n)$; and $a_n\asymp b_n$ if $a_n=\cO(b_n)$ and $b_n=\cO(a_n)$. 
Convergence in distribution is denoted by ``$\dto$''.

\section{Feature importance measure}\label{sec:FI}

Feature importance has been explored from several complementary perspectives that differ in both their inferential targets and the assumptions they impose on the data-generating process.  Broadly speaking, \emph{algorithmic} approaches, including Shapley value and its fast approximations \citep{shapley1953value,lundberg2017unified}, attribute the prediction of a fixed (possibly black-box) model $f$ to its input coordinates; the distribution of $(X,Y)$ appears only through the empirical sample on which $f$ was trained.  
In contrast, \emph{statistical} or \emph{population} approaches regard $f$ as the true regression function and seek functionals of the joint law of $(X,Y)$ that decompose predictive risk.

% Compact symbols for comparison tables.
\providecommand{\tabyes}{\ensuremath{\checkmark}}
\providecommand{\tabno}{\ensuremath{\times}}
\providecommand{\tabpart}{$\circ$}
\providecommand{\costL}{$\ast$}
\providecommand{\costM}{$\ast\ast$}
\providecommand{\costH}{$\ast\ast\ast$}

\begin{table}[t]
    \centering
    \scriptsize
    \begin{tabularx}{\textwidth}{@{}
        >{\raggedright\arraybackslash}m{2.1cm}
        >{\raggedright\arraybackslash}m{3.5cm}
        >{\centering\arraybackslash}p{0.07\textwidth}
        >{\centering\arraybackslash}p{0.07\textwidth}
        >{\centering\arraybackslash}p{0.07\textwidth}
        >{\centering\arraybackslash}p{0.07\textwidth}
        >{\centering\arraybackslash}p{0.06\textwidth}
        >{\centering\arraybackslash}p{0.105\textwidth}
    @{}}
    \toprule
    \textbf{Method} & \textbf{Reference}
    & \textbf{Inference}
    & \textbf{Submodel fit}
    & \textbf{Cond. dist.}
    & \textbf{Sampling}
    & \textbf{Cost}
    & \textbf{No corr. distortion} \\
    \midrule
    
    LOCO / nLOCO &
    \citep{lei2018distribution,rinaldo2019bootstrapping,verdinelli2024feature}
    & \tabyes
    & \tabyes
    & \tabno
    & \tabno
    & \costH
    & \tabno \\
    
    Marginal PFI / MDA &
    \citep{hooker2021unrestricted,benard2022mean,gan2022model}
    & \tabpart
    & \tabno
    & \tabno
    & \tabyes
    & \costL
    & \tabno \\
    
    CPI / conditional PFI &
    \citep{strobl2008conditional,chamma2023statistically,reyero2025sobol}
    & \tabyes
    & \tabno
    & \tabyes
    & \tabyes
    & \costH
    & \tabno \\
    
    Shapley VIM / SPVIM &
    \citep{shapley1953value,owen2017shapley,williamson2020efficient,williamson2023general}
    & \tabyes
    & \tabyes
    & \tabpart
    & \tabpart
    & \costH
    & \tabpart \\
    
    Sobol / fANOVA &
    \citep{sobol1990sensitivity,owen2017shapley,herbinger2024decomposing}
    & \tabpart
    & \tabno
    & \tabpart
    & \tabyes
    & \costM
    & \tabpart \\
    
    Projected covariance measure &
    \citep{lundborg2024projected}
    & \tabyes
    & \tabpart
    & \tabno
    & \tabno
    & \costM
    & \tabno \\
    
    dLOCO &
    \citep{verdinelli2024decorrelated}
    & \tabpart
    & \tabyes
    & \tabyes
    & \tabyes
    & \costH
    & \tabpart \\
    
    DFI & this paper
    & \tabyes
    & \tabno
    & \tabno
    & \tabyes
    & \costM
    & \tabyes \\\bottomrule
    \end{tabularx}
    \caption{Comparison of population-level feature-importance functionals. 
    Symbols: \(\checkmark\) = yes, \(\times\) = no, and \(\circ\) = partial or formulation-dependent; cost is qualitative, from \(*\) low to \(*{*}{*}\) high.
    ``Submodel fit'', ``Cond. dist.'', and ``Sampling'' indicate whether score construction requires reduced-model fitting, conditional covariate simulation, or permutation/Monte Carlo sampling, respectively.
    ``No corr. distortion'' refers to post-hoc attribution of predictive signal under dependent covariates, not to feature-selection validity.
    }\label{tab:stat-fi-method-comparison}
\end{table}

Classical representatives include leave-one-covariate-out (LOCO) and its Shapley-value reinterpretation \citep{verdinelli2024feature}, Conditional Permutation Importance (CPI) and its block- or Sobol-style refinements \citep{chamma2023statistically,reyero2025sobol,chamma2024variable}, as well as doubly robust, influence-function based estimators of variable importance (VIM, SPVIM) under general loss functions \citep{williamson2021nonparametric,williamson2023general,williamson2020efficient}.  
For causal inference, extensions to local FI recast these ideas, either through LOCO-type refitting \citep{hines2022variable,dai2024moving,lundborg2024projected} or CPI-type permutation schemes that circumvent nuisance estimation \citep{paillard2025measuring}.
A parallel literature in global sensitivity analysis \cite{sobol1990sensitivity} distributes variance via functional ANOVA or Sobol indices of independent inputs and has recently been connected to partial dependence \citep{owen2017shapley,herbinger2024decomposing}.
See \Cref{tab:stat-fi-method-comparison} for a summary of statistical FI functionals and their computational requirements.
For literature review, the readers are referred to \citet{hooker2021unrestricted}.

Three conceptually simple yet widely used importance measures, LOCO, CPI, and Shapley value, serve as building blocks for many of the above methods.  
The remainder of this section formalizes their definitions and highlights their connections.
Consider a regression problem when $Y\in\RR$ is the response and $X\in\RR^d$ is the feature.
Let $\mu(x) = \EE[ Y \mid X=x]$ denote the regression function using all features, and $\mu_{\cS}(x_{\cS}) := \EE[Y \mid X_{\cS}=x_{\cS}]$ denote the regression function using a subset of features $X_{\cS}$ for any $\cS\subseteq[d]$.
In particular, for a given coordinate $j \in [d]$, define the regression function of $Y$ given $X_{-j}$ as $\mu_{-j}(x_{-j}) = \EE[ Y \mid X_{-j}=x_{-j}]$.

\subsection{LOCO}
    
    Leave-One-Covariate-Out (LOCO) metric and assumption-lean inference framework such as conformal prediction for LOCO have been studied by \citet{lei2018distribution,rinaldo2019bootstrapping}.
    Consider a loss function $\ell$ (e.g., the negative squared error loss, classification loss, etc.), the LOCO parameter for $X_j$ is defined as
    \[ \psi^{\loco}_{X_j}= \EE[ \ell(\mu_{-j}(X_{-j}),Y) - \ell(\mu(X),Y)]
    ,\]
    which quantifies the change of expected loss when dropping $X_j$ from the full model.
    Up to scaling, $\psi_{loco}$ is a nonparametric version of the usual $R^2$ from standard regression.

    Consider fitting estimated regression function $\hat{\mu}$ and $\hat{\mu}_{-j}$ from $m$ independent and identically distributed samples of $(X,Y)$.
    We then compute an estimate of the LOCO parameter on $n$ additional samples of $(X,Y)$:
    \[ \hat{\psi}^{\loco}_{X_j} = \ell(\hat{\mu}_{-j}(X_{-j}),Y) - \PP_n\{ \ell(\hat{\mu}(X),Y) \}. \]
    For LOCO, there are two nuisance functions $(\mu,\mu_{-j})$ to be estimated from data.

\subsection{CPI}
    The traditional permutation importance approach ignores the correlation among features and may lead to uncontrolled type-1 errors.
    To mitigate this issue, conditional permutation importance has been proposed in different contexts:
    for random forests \citep{strobl2008conditional} and for general predictors under $\ell_2$ loss \citep{chamma2023statistically} and general loss functions \citep{reyero2025sobol}, by sampling from the conditional distribution $p(X_j \mid X _{-j})$.
    It can also be interpreted as replace-one-covariate or swap-one-covariate FI.

    Define a random vector $X^{(j)}$ such that $X^{(j)}_{-j}=X_{-j}$ and $X^{(j)}_j \sim p(X_j\mid X_{-j})$ is an independent copy of $X_j$ given all the other covariate $X_{-j}$, which is also independent of the outcome $Y$.
    The CPI parameter is then defined as:
    \[ 
    \psi^{\cpi}_{X_j} = \frac{1}{2}\EE[ \ell(\mu(X^{(j)}),Y) - \ell(\mu(X),Y)],
    \]
    Similarly, we can compute an estimate of the CPI parameter as:
    \[ \hat{\psi}^{\cpi}_{X_j} = \frac{1}{2}\PP_n\{ \ell(\hat{\mu}(X^{(j)}),Y) - \ell(\hat{\mu}(X),Y) \}, \]
    with $\hat{X}^{(j)}_j \sim \hat{p}(X_j \mid X_{-j})$.

    Instead of estimating the conditional density function $\hat{p}(X_j \mid X_{-j})$ directly, we can sample $X^{(j)}_j$ by estimating the regression function $\EE[X_j \mid X_{-j}]$ and bootstrapping the residuals. This approach requires that $X_j$ can be written as $X_j=\nu_j (X_{-j} ) + \epsilon_j$ where $\epsilon_j \indep X_{-j}$ and $\EE[\epsilon_j ] = 0$ \citep[Lemma 3.2]{reyero2025sobol}. Under this additive structure, one draws an independent sample $W$ of $X$ and sets $\hat{X}^{(j)}_j = \hat{\nu}_j(X_{-j}) + (W_j - \hat{\nu}_j(W_{-j}))$. While this independence assumption is restrictive in general, it holds naturally when features are independent or when covariates follow a multivariate Gaussian distribution. For CPI, this approach requires estimating two nuisance functions: the regression functions of the response and the $j$th covariate $ (\mu, \nu_{j})$.

\subsection{Shapley value}
    Originating from cooperative game theory \citep{shapley1953value}, the \emph{Shapley value} attributes a model’s predictive ability to individual covariates.
    Let $[d]=\{1,\dots,d\}$ index the features and let $v(\cS)$ be a value function that measures the expected utility obtained when only the subset $\cS\subseteq[d]$ is available.
    Throughout this paper, we adopt the squared error as the value function:
    \[
        v(\cS)\;:=\;\EE\left[\ell\left(\mu_{\cS}(X_{\cS}),Y\right)\right],
        \qquad
        \ell(\hat y,y)=(\hat y-y)^{2},
    \]
    where $\mu_{\cS}(x_{\cS})=\EE[Y\mid X_{\cS}=x_{\cS}]$ is the oracle regression trained on $\cS$ alone.
    The Shapley importance of feature $j$ is
    \[
        \psi^{\shap}_{X_j}
        \;=\;
        \sum_{\cS\subseteq[d]\setminus\{j\}}
        \frac{|\cS|!\,(d-|\cS|-1)!}{d!}\,
        \left\{\,v(\cS\cup\{j\})-v(\cS)\right\},
    \]
    or, equivalently, the expectation over a random permutation $\pi$ of the incremental gain achieved when $j$ enters the model immediately after its predecessors in~$\pi$.
    The vector $(\psi^{\shap}_{X_1},\dots,\psi^{\shap}_{X_d})$ is the unique attribution satisfying the four canonical Shapley axioms (efficiency, symmetry, dummy, additivity).
    
    As shown by \citet{verdinelli2024feature}, each term $v(\cS\cup\{j\})-v(\cS)$ coincides with LOCO for subset $\cS$, i.e., $\psi^{\loco}_{X_j}(\cS\cup\{j\}) :=  \EE[ (\mu_{\cS\cup\{j\}}(X_{\cS\cup\{j\}}) - \mu_{\cS}(X_{\cS}) )^2 ]$ under $\ell_2$ loss.
    Hence, the Shapley value is a weighted average of LOCO scores across all $2^{d-1}$ submodels: 
    \[
        \psi^{\shap}_{X_j}
        \;=\;
        \sum_{\cS\subseteq[d]\setminus\{j\}}
        w_{\cS}\,
        \psi^{\loco}_{X_j}(\cS\cup\{j\}),
        \qquad
        w_{\cS}:=\frac{(|\cS|+1)!\,(d-|\cS|)!}{d!}.
    \]
    Even under linear models, this averaging yields a highly non-linear functional of the joint distribution of ${(X,Y)}$.

    Exact evaluation of $\psi^{\shap}_{X_j}$ is infeasible beyond very small $d$; practical implementation relies on submodel subsampling \citep{williamson2020efficient}.
    Because the value function itself must be estimated for every sampled coalition, the resulting estimator is both statistically and computationally intensive, complicating further tasks such as variance estimation and formal inference.
    These difficulties motivate the search for importance measures that permit computationally tractable estimation and valid uncertainty quantification.
    For these reasons, we focus primarily on analyzing the properties of LOCO and CPI in the following subsection, while Shapley values are retained as a benchmark in the simulation study.

\subsection{Properties of LOCO and CPI}\label{subsec:properties-loco-cpi}

    Throughout the rest of the paper, we focus on the $\ell_2$ loss $\ell(\hat{y},y) = (\hat{y} - y)^2$. We begin by establishing two fundamental properties of LOCO and CPI.

    First, both measures correctly assign zero importance to truly conditionally irrelevant features.
    In fact, they characterize null features through a conditional mean independence property, as formalized in \Cref{lem:null-feature}.
    Since LOCO/CPI under $\ell_2$ loss are risk functionals for mean regression, their null hypothesis corresponds to conditional mean independence rather than the stronger conditional independence condition $X_j\indep Y\mid X_{-j}$.
    \begin{lemma}[Null features and conditional mean independence]\label[lemma]{lem:null-feature}
    Under the $\ell_2$ loss $\ell(\hat y,y)=(\hat y-y)^2$, $\psi^{\cpi}_{X_j}=\psi^{\loco}_{X_j}=0$ if and only if $\mu(X)=\mu_{-j}(X_{-j})$ almost surely.
    \end{lemma}

    Second, LOCO and CPI are equivalent under squared-error loss and can be expressed in terms of expected conditional variance.

    \begin{lemma}[Equivalence under $\ell_2$ loss]\label[lemma]{lem:equiv}
        For $\ell_2$ loss $\ell(\hat{y},y) = (\hat{y} - y)^2$, the two importance measures coincide: $ \psi^{\loco}_{X_j}= \psi^{\cpi}_{X_j} = 
        \EE[\VV[\mu(X) \mid X_{-j}]]$.
    \end{lemma}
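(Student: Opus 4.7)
The plan is to prove both identities by exploiting the tower property with respect to $X_{-j}$ together with the defining conditional independence structures, reducing each importance measure to $\EE[\VV[\mu(X)\mid X_{-j}]]$.

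For LOCO, I would first write $\psi^{\loco}_{X_j} = \EE[(\mu_{-j}(X_{-j})-Y)^2 - (\mu(X)-Y)^2]$ and expand the difference of squares using $a^2-b^2 = (a-b)(a+b)$ with $a = \mu_{-j}(X_{-j})-Y$ and $b = \mu(X)-Y$. Conditioning on $X$ and using $\EE[Y\mid X]=\mu(X)$ collapses the factor $\mu_{-j}(X_{-j})+\mu(X)-2Y$ in expectation to $\mu_{-j}(X_{-j})-\mu(X)$, which yields $\psi^{\loco}_{X_j} = \EE[(\mu(X)-\mu_{-j}(X_{-j}))^2]$. The tower property gives $\mu_{-j}(X_{-j}) = \EE[\mu(X)\mid X_{-j}]$, so this is exactly $\EE[\VV[\mu(X)\mid X_{-j}]]$ by the definition of conditional variance.

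For CPI, I would instead condition on $X_{-j}$, since by construction $X^{(j)}_{-j}=X_{-j}$, $X^{(j)}_j$ is drawn independently from the conditional law of $X_j$ given $X_{-j}$, and $X^{(j)}_j \perp Y \mid X_{-j}$. Expanding $(\mu(X^{(j)})-Y)^2 - (\mu(X)-Y)^2$ term by term and taking conditional expectations given $X_{-j}$, the squared terms $\EE[\mu(X^{(j)})^2 \mid X_{-j}]$ and $\EE[\mu(X)^2 \mid X_{-j}]$ agree because $X$ and $X^{(j)}$ have identical conditional distributions given $X_{-j}$. The cross terms split as $\EE[\mu(X^{(j)})Y\mid X_{-j}] = \mu_{-j}(X_{-j})^2$ by conditional independence, whereas $\EE[\mu(X)Y\mid X_{-j}] = \EE[\mu(X)^2\mid X_{-j}]$ by the tower property applied to $\EE[Y\mid X]$. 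Combining these pieces yields $\EE[(\mu(X^{(j)})-Y)^2 - (\mu(X)-Y)^2 \mid X_{-j}] = 2\VV[\mu(X)\mid X_{-j}]$, and dividing by two and taking the outer expectation finishes the identity.

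The main subtlety is the CPI computation: one must be careful about which random variables are coupled, in particular that $X^{(j)}_j$ is independent of $Y$ only \emph{conditionally} on $X_{-j}$ (not marginally), and that the cross term $\EE[\mu(X)Y\mid X_{-j}]$ does \emph{not} factor but instead reduces via the inner tower $\EE[Y\mid X]=\mu(X)$. Once the correct conditioning is chosen, however, the algebra is a one-line use of the law of total variance, and the resulting expression matches the LOCO functional derived above, establishing the claimed equivalence.
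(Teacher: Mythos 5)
Your proposal is correct, and the underlying computation is essentially the same as the paper's: both reduce $\psi^{\loco}_{X_j}$ to $\EE[(\mu(X)-\mu_{-j}(X_{-j}))^2]$ and $2\psi^{\cpi}_{X_j}$ to a conditional second‐moment calculation using only the tower property and the conditional independence $X^{(j)}_j\perp Y\mid X_{-j}$. The one cosmetic difference is that the paper packages the CPI step via the ``independent copy'' variance identity $\VV[W\mid X_{-j}]=\tfrac12\EE[(W-W')^2\mid X_{-j}]$ applied to $W=\mu(X)$, whereas you expand the squares directly and identify $\EE[\mu(X)^2\mid X_{-j}]-\mu_{-j}(X_{-j})^2=\VV[\mu(X)\mid X_{-j}]$; these are the same algebra stated in different order.
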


    The conditional variance $\EE[\VV[\mu(X) \mid X_{-j}]]$ is known as the upper Sobol index \citep{sobol1990sensitivity,owen2017shapley}, which measures how much uncertainty in the prediction $\mu(X)$ remains after observing all features except $X_j$.
    This connection to sensitivity analysis provides additional interpretability to these importance measures.
    While \citet[Theorem 2]{hooker2021unrestricted} established a similar equivalence result\footnote{We note that the definition of ``conditional variable importance'' in \citet{hooker2021unrestricted} appears to have a factor of $1/2$ missing compared to CPI formulation.} using functional analysis, our proof relies only on elementary probability theory.
    Specifically, we exploit the alternative variance representation $\VV[W] = \EE[(W- \EE[W])^2] = \EE[(W- W')^2]/2$, where $W'$ is an independent copy of $W$, for any random variable $W$.
    This representation reveals a fundamental equivalence under $\ell_2$ loss: LOCO's approach of refitting the response submodel $Y\mid X_{-j}$ yields the same importance measure as CPI's approach of resampling from the conditional covariate distribution $X_j\mid X_{-j}$, with CPI having an additional factor of $1/2$ to account for the doubled variability from comparing two random realizations. 
    Lastly, the two approaches may yield different importance measures under other loss functions.

    The notion of FI is used for different objectives in the literature. 
    When the goal is feature selection, LOCO and CPI are well-aligned with the objective of identifying features that provide incremental predictive value beyond the remaining covariates, because of the above appealing properties.
    In this paper, however, our focus is feature-importance attribution under dependence: how much predictive signal is generated by independent latent factors and expressed through an observed covariate, even when that signal is shared with correlated covariates.
    Under this attribution objective, conditional-incremental measures can exhibit correlation distortion.
    As noted by \citet{verdinelli2024feature}, ``\emph{its value depends not just on how strongly $\mu(X)$ depends on $X_j$, but also on the correlation between $X_j$ and the other features. In the extreme case of perfect dependence, it is 0. This could lead users to erroneously conclude that some features are irrelevant even when $\mu(X)$ strongly depends on $X_j$.}'' The following examples illustrate this mismatch.

    \begin{example}[Dependent features with a bijective mapping]\label{ex:perfect-dependent}
        Consider a simple linear model: 
        \begin{align} Y = \beta_1 X_1 + \beta_2 X_2 + \epsilon \label{eq:bivar-linear-model} 
        \end{align} 
        with $X_1 = X_2 = U $. Under the $\ell_2$ loss, 
        \[\psi^{\loco}_{X_1}=\psi^{\loco}_{X_2}=\psi^{\cpi}_{X_1}=\psi^{\cpi}_{X_2}=0.
        \]
        Thus, LOCO/CPI consistently declare each feature conditionally (incrementally) unimportant given the other, regardless of the value of $(\beta_1,\beta_2)$. This near-zero score reflects redundancy, that $X_1$ provides no additional predictive utility once $X_2$ is available, but it does not reflect the attribution of predictive signal of $X_1$ to the regression function $\mu(X)$.
    \end{example}

    \begin{example}[Dependent features with an injective mapping]\label{ex:perfect-dependent-inj}
        Now suppose $X_{1}=X_{2}^{2}=U^{2}$ with $U$ symmetric in the same model \eqref{eq:bivar-linear-model}.
        Then
        \[
          \psi^{\loco}_{X_1}=\psi^{\cpi}_{X_1}=0,\qquad
          \psi^{\loco}_{X_2}=\psi^{\cpi}_{X_2}=\beta_{2}^{2}\VV(U).
        \]
        Thus, LOCO/CPI down-weights $X_{1}$ entirely: when $\beta_1\neq 0$ but $\beta_2=0$, neither of the features is viewed as important by LOCO/CPI.
        Furthermore, although $X_2$ is assigned nonzero importance, the assigned value only captures the linear contribution $\beta_2^2\VV(U)$, which underestimates the total predictive signal $\VV(\EE[Y\mid U]) = \VV(\beta_1U^2+\beta_2U)$. 
        Thus, the full predictive variability is still not accurately reflected.
    \end{example}

    These examples do not show that LOCO and CPI are universally inadequate. 
    Rather, they show that under strong feature dependence, LOCO/CPI can be poorly aligned with the goal of attributing marginal relevance to a fixed regression function. Because these methods quantify conditional incremental value, they appropriately treat shared signal as redundancy for feature selection, but in attribution settings this can assign zero or attenuated importance to features that still carry substantial predictive signal through dependence. As shown by \cite{verdinelli2024decorrelated}, Shapley-based incremental measures can inherit the same issue in high dimensions. This motivates a complementary notion of FI that targets predictive relevance under dependence.

\section{Disentangled feature importance}\label{sec:DFI}

    \subsection{Disentangled representations}\label{subsec:def-phiZ}

    To target attribution under dependence, we propose to transform the features into a disentangled space before computing FI.
    We seek to construct a new representation \( Z = (Z_1, \dots, Z_d) \) of the original features \( X \), with regular conditional distribution \(K(\cdot\mid X)\) such that the coordinates of \( Z \) are independent.
    Specifically, let $(\Omega_X,\cF_X)$ and $(\Omega_Z,\cF_Z)$ be Borel spaces, and let $K:\Omega_X\times \cF_Z\to[0,1]$ be a Markov kernel.
    Let $\PP_{XY}\in\cP(\Omega_X\times\RR)$ govern $(X,Y)$ with marginal $\PP_X$, and define $Z\mid X=x\sim K(\cdot\mid x)$; equivalently,
    $\PP_Z(B)=\int K(B\mid x)\,\PP_X(\rd x)$ for all $B\in\cF_Z$.
    Without loss of generality, we assume $Z_j$ has zero mean and unit variance for all $j=1,\ldots,d$.
    We denote by $\PP$ the induced joint law of $(X,Z,Y)$; under $\PP$, $Z\sim \PP_Z=\bigotimes_{j=1}^d \PP_{Z_j}$ and thus has independent coordinates. We call such a $Z$ a disentangled representation of $X$, or simply latent features.
    In particular, we use standard Gaussian latent features $\PP_Z=\cN_d(0,I_d)$ for our numerical implementation.

    Define the regression functions in the two feature spaces:
    \begin{align*}
        \mu(x):=\EE[Y\mid X=x],
        \qquad
        \eta(z):=\EE[Y\mid Z=z] =
        \int \mu(x)\,\PP(X\in \rd x \mid Z=z),
        % \int \mu(x)\rd K(z\mid x)\,\PP_X(\rd x)
    \end{align*}
    where the conditional law $\PP(X\in\cdot\mid Z=z)$ exists for $\PP_Z$-a.e.\ $z$.
    Equivalently, $\eta$ is characterized as the Radon--Nikodym derivative \(\int_B \eta(z)\,\PP_Z(\rd z) = \int \mu(x)\,K(B\mid x)\,\PP_X(\rd x)\), \(\forall\,B\in\cF_Z\).
    If, in addition, $K(\cdot\mid x)\ll \PP_Z$ for $\PP_X$-a.e.\ $x$, we may write
    \[
    K(\rd z\mid x)=k(z\mid x)\,\PP_Z(\rd z)
    \]
    for the Radon--Nikodym derivative $k(\cdot\mid x)=\rd K(\cdot\mid x)/\rd\PP_Z$. In this case, $\eta$ admits the pointwise representation
    \begin{align}
        \eta(z)=\int \mu(x)\,k(z\mid x)\,\PP_X(\rd x),\qquad \PP_Z\text{-a.e.\ }z.    \label{eq:eta}
    \end{align}
    Once the disentangled representation \( Z \) is obtained, we define the latent FI measure under $\ell_2$ loss for each feature \( j \in [d] \):
    \begin{align}
        \phi_{Z_j}(\PP) := \EE\left[ \VV(\eta(Z) \mid Z_{-j})\right], \label{eq:phi-Z}
    \end{align}
    where the expectation is taken under the joint law $\PP$ of $(X,Z,Y)$ and the chosen transition kernel $K$.
    Thus, although suppressed in the notation, the latent importance measure depends on the disentangled representation determined by $K$ and $\PP_Z$; when needed, we write this explicitly as $\phi_{Z_j}^{(K,\PP_Z)}(\PP)$.
    
    From \Cref{lem:equiv}, this measure coincides with both LOCO and CPI defined for the latent feature $Z$ instead of the raw feature $X$.
    Because the coordinates of \( Z \) are independent, the FI measure \(\phi_{Z_j}\) defined in \eqref{eq:phi-Z} admits a clear interpretation under independent latent coordinates, as formalized in the following proposition.

    \begin{proposition}[Free of correlation distortion]\label[proposition]{prop:free-cor}
        For $\ell_2$ loss, $\phi_{Z_j}= 0 $ if and only if $\eta(Z) = \eta_{-j}(Z_{-j})$ almost surely.
    \end{proposition}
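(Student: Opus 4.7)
The plan is to reduce the statement to a basic fact about conditional variance. Since the transport map $T$ is assumed bijective (it is needed even to define $\eta(z) := \mu(T^{-1}(z))$), the $\sigma$-algebras $\sigma(X)$ and $\sigma(Z)$ coincide, so that
\[
    \EE[Y \mid Z] = \EE[Y \mid X] = \mu(X) = \mu(T^{-1}(Z)) = \eta(Z) \qquad \PP\text{-a.s.}
\]
This identification lets me replace $\EE[Y\mid Z]$ by $\eta(Z)$ throughout and phrase the claim entirely in terms of $\eta$.

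Next, since $\VV(\eta(Z)\mid Z_{-j}) \geq 0$ pointwise, the expectation $\phi_{Z_j} = \EE[\VV(\eta(Z)\mid Z_{-j})]$ vanishes if and only if $\VV(\eta(Z)\mid Z_{-j}) = 0$ for $P_{Z_{-j}}$-almost every value of $Z_{-j}$. Applying the standard characterization of zero conditional variance, this in turn is equivalent to $\eta(Z)$ being $\sigma(Z_{-j})$-measurable, i.e.\ the existence of a measurable $g$ with $\eta(Z) = g(Z_{-j})$ almost surely.

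Combining the two equivalences yields $\phi_{Z_j} = 0$ iff $\EE[Y\mid Z]$ agrees almost surely with a function of $Z_{-j}$ alone, which is exactly the statement that $\EE[Y\mid Z]$ does not depend on $Z_j$ almost surely. The converse direction is immediate: if such a $g$ exists, then the inner conditional variance is identically zero and the outer expectation vanishes.

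The only real subtlety—and thus the ``main obstacle''—is the interpretation of the phrase ``does not depend on $Z_j$ almost surely.'' In the disentangled setting where the coordinates of $Z$ are independent, the support of $Z$ is a product and the ``function of $Z_{-j}$ almost surely'' formulation coincides with the intuitive notion that $\eta$ is constant in its $j$-th argument on the support. Without independence the two notions can diverge (as in the $Z_2 = Z_1^2$ example), so the statement must be read in the almost-sure sense described above; with that reading the proof is otherwise a direct application of elementary properties of conditional expectation and variance.
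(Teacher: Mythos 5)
Your proof is correct and follows essentially the same route as the paper's: nonnegativity of the conditional variance reduces $\phi_{Z_j}=0$ to $\VV(\eta(Z)\mid Z_{-j})=0$ a.s., which gives $\eta(Z)=\EE[Y\mid Z_{-j}]$ a.s., and the independence of $Z_j$ and $Z_{-j}$ converts this into "$\EE[Y\mid Z]$ does not depend on $Z_j$." Your treatment is arguably a bit cleaner, since you pass directly to $\sigma(Z_{-j})$-measurability rather than through the intermediate identity $\EE[Y\mid Z]=\EE[Y\mid Z_{-j}]$, and you make the role of coordinate independence in interpreting "does not depend on $Z_j$" more explicit than the paper does.
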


    \subsection{Importance attribution for original features}
    After computing FI for the disentangled features \( Z \), our goal is to attribute importance back to the original features \( X \).
    To define a meaningful importance score \(\phi_{X_l}\) for each original feature \(X_l\), we seek to satisfy the following properties:
    \begin{enumerate}[(i)]
        \item Null calibration under conditional mean irrelevance and independence: 
        If the outcome \( Y \) does not depend on \( X_l \), and \( X_l \) is independent of the other covariates, that is, $\EE[Y \mid X] = \EE[Y \mid X_{-l}]$ and $X_l \indep X_{-l}$, then the importance score satisfies \(\phi_{X_l} = 0\).
        
        \item Sensitivity to predictive relevance under dependence:
        If \( X_l \) carries predictive signal, then \(\phi_{X_l}\) should be positive, even when \(X_l\) has no incremental value given \(X_{-l}\).
    \end{enumerate}
    In other words, these desiderata motivate an attribution score that vanishes for conditionally mean-irrelevant independent covariates and remains responsive to predictive signal carried through dependence.

    Towards this goal, we define the FI score \(\phi_{X_l}\) for each original feature \(X_l\) by transferring importance from the disentangled features \(Z\) back to \(X\) through the sensitivity of \(Z_j\) with respect to \(X_l\).    
    Formally, we define the disentangled feature importance (DFI) measure as
    \begin{align}
        \phi_{X_l}(\PP)
        ~:=~
        \sum_{j=1}^d
        \EE\!\left[
        \VV\!\left(\eta(Z)\mid Z_{-j}\right)\,
        \Big(\partial_{z_j} S_{l}(Z)\Big)^2
        \right],
        \label{eq:def-phi-X}
    \end{align}
    where \(S_{l}(z) = \EE[X_l \mid Z=z]\) denotes the barycentric projection and  \(\partial_{z_j} S_{l}(\cdot)\) denotes the partial derivative of \(S_{l}(z)\) with respect to \(z_j\).
    Thus, the attributed importance in the original feature space is also defined relative to the same disentangled representation; when needed, we write it as $\phi_{X_\ell}^{(K,\PP_Z)}(\PP)$.
    
    The inner term $\VV(\EE[Y \mid Z] \mid Z_{-j})$ measures the predictive variation remaining when the disentangled coordinate $Z_j$ is varied while $Z_{-j}$ is fixed.
    Multiplying by \((\partial_{z_j}S_\ell(Z))^2\) gauges how strongly fluctuations in \(Z_j\) are expressed through \(X_\ell\) on average; integrating over the data distribution averages these local sensitivities into a global importance score.
    Thus, \(\phi_{X_l}\) quantifies how much of the irreducible signal carried by all latent directions is channelled through $X_l$.

    DFI is designed to satisfy the two desiderata above. 
    First, it assigns zero importance to null features. 
    Specifically, for any \(\cS\subseteq[d]\), suppose that
    \(
    \EE[Y\mid X]=\EE[Y\mid X_{\cS}]
    \qquad\text{and}\qquad
    X_{\cS}\indep X_{\cS^c},
    \)
    so that the features in \(X_{\cS^c}\) are null. 
    Under the EOT coupling with a product reference measure, the coupling kernel respects this block structure. Consequently, \(X_{\cS^c}\) is null if and only if \(\phi_{Z_j}=0\) for all \(j\in\cS^c\), which in turn is equivalent to \(\phi_{X_j}=0\) for all \(j\in\cS^c\). 
    Second, DFI remains sensitive to relevant features even when their predictive information is shared through dependence. 
    We illustrate this next using the perfectly correlated examples in \Cref{ex:perfect-dependent,ex:perfect-dependent-inj}.

    \setcounter{excont}{4}
    \begin{excont}[{\hyperref[ex:perfect-dependent]{Continued}}]\label{ex:perfect-dependent-cont}
        For $Z_1=X_1=X_2=U$ and $Z_2\sim\cN(0,1)$ independent of $Z_1$, we have 
        \[\phi_{X_1} = \phi_{X_2} = \phi_{Z_1} = (\beta_1+\beta_2)^2\VV(U),\quad \phi_{Z_2} =0 .\]
        For bijective disentanglement, both $X_1$ and $X_2$ have importance measures equal to the total variation of the regression function $\VV(\EE[Y \mid X_1, X_2]) = \VV(\EE[Y \mid Z])$ using $Z$ alone.
        This reflects that both observed covariates carry the same predictive signal; it should not be read as saying that both are needed for prediction after one of them is already observed.
    \end{excont}

    \begin{excont}[{\hyperref[ex:perfect-dependent-inj]{Continued}}]\label{ex:perfect-dependent-inj-cont}
        To make the scale of the attributed scores explicit, suppose
        \(\mathbb E[U]=\mathbb E[U^3]=0\), \(\mathbb E[U^2]=1\), and \(m_4:=\mathbb E[U^4]>1\). 
        Consider the standardized features
        \(
        X_1=\frac{U^2-1}{\sqrt{m_4-1}}
        \), \(
        X_2=U,
        \)
        and the regression model
        \(
        Y=\beta_1X_1+\beta_2X_2+\epsilon,
        \qquad
        \mathbb E[\epsilon\mid U]=0.
        \)
        Let \(Z_1=U\) and let \(Z_2\sim N(0,1)\) be independent of \(Z_1\). Then
        \(Z=(Z_1,Z_2)\) has independent coordinates, and the barycentric decoder is
        \[
        S_1(z)=\frac{z_1^2-1}{\sqrt{m_4-1}},
        \qquad
        S_2(z)=z_1.
        \]
        Hence
        \[
        \partial_{z_1}S_1(z)=\frac{2z_1}{\sqrt{m_4-1}},
        \qquad
        \partial_{z_1}S_2(z)=1,
        \qquad
        \partial_{z_2}S_1(z)=\partial_{z_2}S_2(z)=0.
        \]
        The latent regression function is
        \(
        \eta(z)
        =
        \beta_1\frac{z_1^2-1}{\sqrt{m_4-1}}
        +
        \beta_2 z_1 .
        \)
        Therefore,
        \[
        \phi_{Z_1}
        =
        \mathbb V\{\eta(Z)\}
        =
        \beta_1^2+\beta_2^2,
        \qquad
        \phi_{Z_2}=0.
        \]
        The attributed DFI scores are
        \[
        \phi_{X_1}
        =
        \frac{4}{m_4-1}\,(\beta_1^2+\beta_2^2),
        \qquad
        \phi_{X_2}
        =
        \beta_1^2+\beta_2^2.
        \]
        Thus, although the predictive signal is entirely represented by the single latent direction $Z_1$, both observed coordinates receive nonzero attribution because both vary along this latent direction. 
        This contrasts with LOCO and CPI for $X_1$, which are zero because $X_1$ is deterministic given $X_2$. 
        The DFI score should therefore be interpreted as attribution of latent predictive variation through the observed measurement channels, not as evidence that $X_1$ provides additional predictive utility once $X_2$ is observed.
    \end{excont}

    In the above examples, DFI targets attribution of predictive relevance under dependence, rather than additional predictive utility relative to a baseline set. 
    Thus, a large DFI score should not be interpreted as evidence that a feature is indispensable, nor as evidence that the feature should be retained in a minimal selected subset. 
    When several observed covariates measure the same predictive signal, DFI may assign positive, or even large, importance to more than one of them. 
    This behavior is intended for model interpretation: it indicates that the predictive signal is expressed through these observed covariates, not that each covariate provides unique predictive information beyond the others.

    \subsection{Disentangled transformation}\label{subsec:disentangled-transformation}
    The key component of DFI is the transition kernel $k$.
    One natural way to specify $k$ is via optimal transport (OT) between $\PP_X$ and an independent reference law $\PP_Z:=\otimes_{j=1}^d \PP_{Z_j}$ \citep{kantorovich1942translocation}.
    In the classical (unregularized) Monge regime \citep{brenier1991polar}, the transport is induced by a deterministic map $T$, yielding the deterministic kernel $k(\rd z\mid x)=\delta_{T(x)}(\rd z)$, which is analyzed in \Cref{app:sec:ot-maps} for absolute continuous covariate distribution.
    Entropic optimal transport (EOT) is a compelling alternative to unregularized OT in settings where the Monge map may fail to exist or be non-unique (e.g., mixed or discrete covariates), while also enabling
    efficient computation \citep{cuturi2013sinkhorn,eckstein2022quantitative}.
    
    We adopt the KL-regularized formulation:
    \begin{equation}
        \gamma
        \in
        \argmin_{\pi\in\Gamma(\PP_X,\PP_Z)}
        \int c(x,z)\,\rd\pi(x,z)
        +\varepsilon\,\KL \!\left(\pi\,\middle\|\,\PP_X\otimes\PP_Z\right),
        \label{eq:EOT}
    \end{equation}
    where $\Gamma(\PP_X,\PP_Z)$ denotes the set of couplings with marginals $\PP_X$ and $\PP_Z$, and $\varepsilon>0$ is fixed throughout.
    The KL term makes the objective strictly convex in $\pi$, yielding a unique EOT coupling $\gamma$.
    Moreover, $\KL(\gamma\|\PP_X\otimes\PP_Z)<\infty$ implies $\gamma\ll \PP_X\otimes\PP_Z$, so that $\gamma$ admits a density with respect to the product measure $\PP_X\otimes\PP_Z$:
    \[
        \gamma(\rd x,\rd z)= r(x,z)\,\PP_X(\rd x)\PP_Z(\rd z),\qquad r(x,z):=\frac{\rd\gamma}{\rd(\PP_X\otimes\PP_Z)}.
    \]
    Define the forward and backward kernels as regular conditional distributions
    \[
    K_f(B\mid x):=\PP(Z\in B\mid X=x),\qquad
    K_b(A\mid z):=\PP(X\in A\mid Z=z),
    \]
    for $B\in\cF_Z$ and $A\in\cF_X$ (defined $\PP_X$-a.e.\ in $x$ and $\PP_Z$-a.e.\ in $z$).
    Then, for any $A\in\cF_X$ and $B\in\cF_Z$,
    \(    \gamma(A\times B)
    =\PP(X\in A, Z\in B)
    =\int_A K_f(B\mid x)\,\PP_X(\rd x)
    =\int_B K_b(A\mid z)\,\PP_Z(\rd z).
    \)
    Equivalently, the disintegrations satisfy
    \(
    \gamma(\rd x,\rd z)=\PP_X(\rd x)\,K_f(\rd z\mid x)=\PP_Z(\rd z)\,K_b(\rd x\mid z).
    \)

    Since $\gamma\ll \PP_X\otimes\PP_Z$, both kernels are dominated by their base measures, and in fact admit the explicit forms
    \[
    K_f(\rd z\mid x)=k_f(z\mid x)\,\PP_Z(\rd z),\qquad
    K_b(\rd x\mid z)=k_b(x\mid z)\,\PP_X(\rd x),
    \]
    with $k_f(z\mid x)=r(x,z)$ $\PP_Z$-a.e.\ and $k_b(x\mid z)=r(x,z)$ $\PP_X$-a.e.
    This representation enables randomized disentanglement: drawing $Z\sim K_f(\cdot\mid X)$ guarantees that the marginal of $Z$ is exactly $\PP_Z$ with independent coordinates.
    For decoding, we use the barycentric projection
    \[
    S(z) ~:=~ \EE_\gamma[X\mid Z=z]
    ~=~
    \int x\,K_b(\rd x\mid z)
    ~=~
    \int x\,k_b(x\mid z)\,\PP_X(\rd x),
    \]
    which provides a deterministic proxy for sampling $X$ from $Z$ under coupling $\gamma$.
    The next section studies estimation and inference of the target estimands defined in \eqref{eq:phi-Z} and \eqref{eq:def-phi-X} with the EOT specification $(c,\PP_Z,\varepsilon)$ treated as fixed throughout.

    \begin{remark}[Estimand specification and identifiability]\label{rem:estimand}
        DFI should be interpreted as a family of population estimands, not as a single map-free quantity. 
        Under the EOT construction, the target is indexed by the transport cost $c$, the independent reference law $\PP_Z$, and the entropic regularization level $\varepsilon$:
        \(\phi^{(c,\PP_Z,\varepsilon)}_{X_\ell}(\PP)\) and \(\phi^{(c,\PP_Z,\varepsilon)}_{Z_j}(\PP)\).
        For any fixed $(c,\PP_Z,\varepsilon)$ with $\varepsilon>0$, the EOT problem has a unique coupling, and therefore a unique transition kernel. 
        Consequently, $\phi^{(c,\PP_Z,\varepsilon)}_{X_\ell}(\PP)$ is a well-defined functional of the joint law $\PP$. 
        Changing $c$, $\PP_Z$, or $\varepsilon$ changes the target estimand, just as changing the loss function, baseline distribution, or perturbation scheme changes the target of other FI methods.

        For a default EOT specification, we use the quadratic cost $c(x,z)=\|x-z\|_2^2$ and an independent reference law $\PP_Z$. 
        In our numerical implementation, this reference law is taken to be $\PP_Z=\cN_d(0,I_d)$ after standardizing the observed features. 
        The entropic level $\varepsilon$ is treated as part of the estimand specification: it is reported in all experiments, and sensitivity to this choice is assessed over a pre-specified grid of values.
    \end{remark}

\section{Statistical estimation and inference}\label{sec:stat-est-inf}
    
    \subsection{Estimation of disentangled feature importance}\label{subsec:ot-latent}

    From this point on, $k(x\mid z)$ denotes the backward kernel unless otherwise noted.
    Suppose we observe i.i.d.\ data $\{(X_i,Y_i)\}_{i=1}^n$. Let $\hat\mu$ be an estimator of the regression function $\mu(x):=\EE[Y\mid X=x]$, and let $\hat k(x\mid z)$ be an estimator of the EOT kernel $k(x\mid z)$.
    When $\hat k$ is obtained from an empirical Sinkhorn coupling between the empirical training law $\PP_{\mathrm{tr}}$ and the reference law $\PP_Z$, its normalization is naturally with respect to $\PP_{\mathrm{tr}}$:
    \[
    \int \hat k(x\mid z)\,\PP_{\mathrm{tr}}(\rd x)=1, \qquad \PP_Z\text{-a.e.\ } z .
    \]
    Using the same estimated coupling, generate latent features by drawing
    $Z_i\sim \hat K(\cdot\mid X_i)$, and let $\PP_n$ denote the empirical measure of
    $O_i=(X_i,Z_i,Y_i)$. We estimate $\eta$ defined in \eqref{eq:eta} by
    \[
        \hat\eta(z):=\PP_{\mathrm{tr}}\{\hat\mu(X)\,\hat k(X\mid z)\}.
    \]    
    We then define the CPI-type estimator of $\phi_{Z_j}$ in \eqref{eq:phi-Z} as
    \begin{align}
    \hat{\phi}_{Z_j}(\PP)
    := \frac{1}{2}\,\PP_n\!\left[(Y-\hat{\eta}(Z^{(j)}))^2-(Y-\hat{\eta}(Z))^2\right],
    \qquad j=1,\ldots,d, \label{def:hat-phi-Z}
    \end{align}
    where $Z^{(j)}=(Z_{-j},Z_j')$ with $Z_j'\sim \PP_{Z_j}$ independent of $(X,Z,Y)$.

    By \Cref{lem:equiv}, both LOCO and CPI target the same population functional, so the choice between them is primarily computational.
    We therefore work with the CPI-style estimator in the latent space.
    First, it avoids repeated refitting: perturbing the $j$th latent coordinate does not require re-estimating a reduced regression $\eta_{-j}(Z_{-j})=\EE[Y\mid Z_{-j}]$ for each $j$.
    Instead, we estimate the full regression $\eta$ once, via the plug-in construction based on $\hat\mu$ and the estimated EOT kernel.
    Second, the disentangling construction renders the coordinates of $Z$ independent, so sampling from $\PP(Z_j\mid Z_{-j})$ reduces to drawing an independent copy $Z_j'\sim \PP_{Z_j}$ for estimator \eqref{def:hat-phi-Z}.
    This independence eliminates conditional-density estimation or Markov sampling steps that are otherwise needed in CPI under dependent covariates.

    To present our main results, we introduce assumptions on the data-generating process and the nuisance function estimators.

    \begin{assumption}[Data model and estimator]\label[assumption]{asm:data}
    There exist a constant $C>0$ such that:
    \begin{enumerate}[(i)]

        \item\label{asm:data:cov} Covariate $X\in\RR^d$ has zero mean $\EE[X]=0$ and bounded second moment $\|X\|_{L_2}\le C$;

        \item\label{asm:data:response} Response $Y\in\RR$ has zero mean $\EE[Y]=0$ and bounded moment $\|Y\|_{L_\infty}\le~C$;
        
        \item\label{asm:data:k} Transform density $k$ is bounded in $L_2(\PP_X\otimes \PP_Z)$, $\|k\|_{L_2(\PP_X\otimes \PP_Z)}\leq C$;

        \item\label{asm:data:estimator} Nuisance function estimators $\hat\mu$ and $\hat k$ are estimated on auxiliary samples independent of the $n$ observations used in $\PP_n$ such that \( \|\hat\mu\|_{L_{\infty}(\PP_X)}\leq C\) and \(\|\hat\mu \hat k\|_{L_{\infty}(\PP_X\otimes\PP_Z)}\leq C\) with probability tending to one.
    \end{enumerate}
    \end{assumption}

    The moment conditions in \Cref{asm:data} are primarily technical and serve to guarantee that the transformed regression function \(\eta(z)=\int \mu(x)k(x\mid z)\rd \PP_X(x)\) and the associated score terms are well-defined in \(L_2\), and that the remainder terms admit simple norm bounds.
    Specifically, \Cref{asm:data}~\ref{asm:data:cov} and \ref{asm:data:k} ensure basic integrability of products involving \(X\) and \(k\) under \(\PP_X\otimes\PP_Z\), while \Cref{asm:data}~\ref{asm:data:response} and the \(L_\infty\) envelope conditions in \ref{asm:data:estimator} are used only to streamline the control of second-order remainder terms; these \(L_\infty\) conditions can be replaced by suitable finite-moment assumptions (e.g., uniform \(L_4/L_8\) bounds ensuring the relevant products remain in \(L_2\)).
    Finally, \Cref{asm:data}~\ref{asm:data:estimator} can be enforced in practice by cross-fitting \citep{du2025causal,kennedy2022semiparametric} and clipping \(\hat\mu\) and \(\hat\mu\hat k\) to a fixed envelope.
    % The independence condition between nuisance estimation and the final estimating equation can be achieved via sample splitting or cross-fitting \citep{du2025causal,kennedy2022semiparametric}.
    % Importantly, this does not require refitting a new regression model for each $j$; it only requires computing the empirical integral defining $\hat\eta$ on the nuisance fold(s).

Under \Cref{asm:data}, we derive a simultaneous first-order approximation for the latent DFI estimators \(\hat\phi_{Z_j}\), and separate the impact of nuisance estimation into errors in estimating the regression function \(\mu\) and the EOT kernel \(k\).
% Under these assumptions, we establish a simultaneous linear expansion for the latent DFI estimators \(\hat\phi_{Z_j}\) that holds uniformly over \(j=1,\ldots,d\), and decompose the estimation error into two primary sources: error in estimating regression function $\mu$ and EOT kernel $k$.

\begin{theorem}[Simultaneous linear expansion of latent DFI estimators]\label{thm:error-decom-eot}
    Consider the statistical model on $O=(Z,X,Y)$ restricted by \Cref{asm:data}(i)-(ii).
    Denote $\Delta_j(z):=\eta(z)-\eta_{-j}(z)$ and the induced law of $O$ under the EOT coupling by $\PP$.
    Under \Cref{asm:data}, for $d$ fixed, the latent DFI estimators \eqref{def:hat-phi-Z} satisfy the simultaneous linear expansion
    \[
    \hat\phi_{Z_j}(\PP)-\phi_{Z_j}(\PP)
    =
    (\PP_n-\PP)\{\varphi_{Z_j}(O;\PP)\}
    +\Op\!\left(n^{-1/2}\cE_Z+\cE_Z^2\right),
    \]
    uniformly over $j=1,\ldots,d$, where the efficient influence function (EIF) is
    \[
    \varphi_{Z_j}(O;\PP) = 2 (Y - \eta(Z))\Delta_j(Z) + \Delta_j(Z)^2 - \phi_{Z_j}(\PP),
    \]
    and the remainder depends on
    \(
    \cE_Z := \|\hat\mu-\mu\|_{L_2(\PP_X)} + \| \hat k-k\|_{L_2(\PP_X\otimes\PP_Z)}.
    \)
\end{theorem}

From \Cref{thm:error-decom-eot}, the remainder term is $\Op(n^{-1/2}\cE_Z+\cE_Z^2)$, and hence is $\op(n^{-1/2})$ whenever $\cE_Z=\op(n^{-1/4})$.
For many nonparametric estimators, this holds under standard smoothness assumptions \citep{kennedy2022semiparametric}.
For instance, if $\mu$ belongs to an $\alpha$-H\"older class in $\RR^d$, minimax-optimal methods (e.g., local polynomials or splines) yield
$\|\hat\mu-\mu\|_{L_2(\PP_X)}=\Op(n^{-\alpha/(2\alpha+d)})$, which is faster than $n^{-1/4}$ when $2\alpha>d$.
For the coupling estimation, the regularized OT problem is strongly convex for any fixed entropic regularization parameter and its empirical Sinkhorn solution enjoys parametric $n^{-1/2}$ statistical rates for the regularized OT value and (under mild boundedness conditions) for the associated EOT potentials; see, e.g., \citet{genevay2019sample,mena2019statistical}.
% Moreover, because entropic OT penalizes $\mathrm{KL}(\pi\,\|\,\PP_X\otimes\PP_Z)$, the population coupling $\gamma_\varepsilon$ is automatically absolutely continuous with respect to $\PP_X\otimes\PP_Z$, so the ``kernel'' $k$ can be identified with the Radon--Nikodym derivative
% \[
% k(x,z)=\frac{d\gamma_\varepsilon}{d(\PP_X\otimes \PP_Z)}(x,z).
% \]
Recent work on empirical EOT couplings establishes parametric density-estimation guarantees in this same metric: for the plug-in coupling computed from empirical marginals (via Sinkhorn), one has
\(\|\hat k-k \|_{L_2(\PP_X\otimes \PP_Z)} \lesssim n^{-1/2}\) with high probability under bounded-cost conditions (e.g., compact support); see, for instance, \citet[Theorem~6]{rigollet2025sample}.
In particular, this does not require $\PP_X$ to admit a Lebesgue density bounded away from zero.
Consequently, in such regimes $\|\hat k-k\|_{L_2(\PP_X\otimes \PP_Z)}=\Op(n^{-1/2})$ (up to logarithmic factors), the empirical-process term in the decomposition from \Cref{thm:error-decom-eot} dominates, yielding the following asymptotic normality result used for inference.

\begin{corollary}[Statistical inference]\label{cor:inference}
    Under the same setup as in \Cref{thm:error-decom-eot} with $\|\hat\mu-\mu\|_{L_2(\PP_X)} = \op(n^{-1/4})$ and $\| \hat k-k\|_{L_2(\PP_X\otimes\PP_Z)} = \op(n^{-1/4})$. 
    For $j=1,\ldots,d$, if $\phi_{Z_j}>0$, then 
    \begin{align}
        \sqrt{n}(\hat\phi_{Z_j}(\PP)-\phi_{Z_j}(\PP)) \dto \cN(0, \VV[\varphi_{Z_j}(O;\PP)]) . \label{eq:asym-normality-phi-Z-eot}
    \end{align}    
\end{corollary}

The influence function can be estimated via
\begin{align}
    \hat{\varphi}_{Z_j}(O;\hat{\PP})
= 2 (Y - \hat\eta(Z))\Big(\hat\eta(Z) - \EE_{Z^{(j)}_j}[\hat\eta(Z^{(j)})]\Big)
+ \Big(\hat\eta(Z) - \EE_{Z^{(j)}_j}[\hat\eta(Z^{(j)})]\Big)^2
- \hat\phi_{Z_j}.\label{eq:eif-phi-Z-eot-est}
\end{align}
Consequently, the asymptotic variance $\VV[\varphi_{Z_j}(O;\PP)]/n$ can be estimated by
$\hat{\sigma}^2 =\VV_n[\hat{\varphi}_{Z_j}(O;\hat{\PP})] / n$.

\begin{remark}[Confidence intervals near the null]\label[remark]{rmk:null}
When $\VV[\varphi_{Z_j}(O;\PP)]>0$, \Cref{eq:asym-normality-phi-Z-eot} yields the Wald interval $\cC_{n,\alpha}= (\hat{\phi}_{Z_j}-z_{\alpha/2}\hat{\sigma}, \ \hat{\phi}_{Z_j}+ z_{\alpha/2}\hat{\sigma})$ with asymptotic coverage $1-\alpha$.
When $\phi_{Z_j}=0$, the influence function vanishes and asymptotic normality does not hold anymore.
        In more general cases when $\phi_{Z_j}$ is a function of $n$ and converges to zero, finding a uniformly covered confidence interval is unsolved, due to the behavior of quadratic functionals.
        Though one way to mitigate such an issue is to expand the confidence interval by $\cO(n^{-1/2})$ to maintain validity at the expense of efficiency at the null; see \citet[Section 6]{verdinelli2024feature}.
\end{remark}

\subsection{Estimation of attributed feature importance}\label{subsec:ot-attributed}
    We consider a class of attributed importance scores that generalizes \eqref{eq:def-phi-X}.
    By the tower property, \eqref{eq:def-phi-X} can be written in a ``weighted'' form in which the sensitivity factor is averaged over the perturbed coordinate.
    Specifically, let \(w_{jl}(Z_{-j})\) be an arbitrary weight that is measurable with respect to \(Z_{-j}\).
    For a given collection \(\{w_{jl}\}_{j,l\in[d]}\), define
    \begin{align}
        \phi_{X_l}(\PP)
        =\sum_{j=1}^d \EE\!\left[v_j(Z_{-j})\,w_{jl}(Z_{-j})\right],
        \label{eq:def-phi-X-weighted}
    \end{align}
    where \(v_j(z_{-j}) := \EE\!\left[\Delta_j(Z)^2\mid Z_{-j}=z_{-j}\right]\) and \(\Delta_j(z):=\eta(z)-\eta_{-j}(z)\).
    The DFI functional in \eqref{eq:def-phi-X} is recovered by taking
    \(w_{jl}(z_{-j})=\EE\!\left[s_{jl}(Z)^2\mid Z_{-j}=z_{-j}\right]\), where \(s_{jl}(z):=\partial_{z_j}S_l(z)\).

    A key technical difference from the latent analysis in \Cref{subsec:ot-latent} is that the weights \(w_{jl}\) may themselves be functionals of the data law \(\PP\) (as in the DFI choice above, where \(w_{jl}\) depends on the coupling through \(S\)).
    As a result, the influence function of \(\phi_{X_l}(\PP)\) includes an additional contribution capturing the first-order impact of estimating \(w_{jl}\).
    We encode this requirement via a high-level EIF condition for the auxiliary functionals \(\theta_{jl}(\PP):=\EE[w_{jl}(Z_{-j})]\).

    \begin{assumption}[EIF for attributed weights]\label[assumption]{asm:eif-theta}
        For all $j,l\in[d]$, the functional \(\theta_{jl}(\PP)= \EE\!\big[w_{jl}(Z_{-j})\big]\) with weight \( w_{jl} \in L_4(\PP_{Z_{-j}})\) is pathwise differentiable at $\PP$ relative to the nonparametric model on $O$ and admits an EIF $\varphi_{\theta_{jl}}(O;\PP)$;
        and the estimator $\hat\theta_{jl}$ of $\theta_{jl}(\PP)$ constructed from a nuisance estimate $\hat w_{jl}$ satisfies the linear expansion uniformly in $(j,l)$:
            \[
            \hat\theta_{jl}-\theta_{jl}(\PP)=(\PP_n-\PP)\varphi_{\theta_{jl}}(O;\PP)+ R_{\theta_{jl}},
            \]
        such that the following conditions hold:
        \begin{enumerate}[(i)]
            \item\label{asm:eif-theta-i} \(
            \sup_{j,l\in[d]}\;\|\varphi_{\theta_{jl}}(O;\PP)\|_{L_4(\PP)}<\infty
            \).
            \item\label{asm:eif-theta-ii} $R_{\theta_{jl}} = \Op\left(\max_{j,l\in[d]}\|\hat w_{jl}-w_{jl}\|_{L_2(\PP_{Z_{-j}})}^2\right)$.
        \end{enumerate}
    \end{assumption}
    
    \Cref{asm:eif-theta} formalizes the regularity required to treat the weight functionals \(\theta_{jl}(\PP)=\EE[w_{jl}(Z_{-j})]\) as pathwise differentiable and to control the remainder term with estimation error of \(w_{jl}\).
    With this, \Cref{prop:eif-eot-weighted} derives an influence-function representation for \(\phi_{X_l}(\PP)\).

    \begin{proposition}[Efficient influence function for attributed DFI]\label[proposition]{prop:eif-eot-weighted}
        Under Assumptions \ref{asm:data} and \ref{asm:eif-theta}~\ref{asm:eif-theta-i}, the EIF of $\phi_{X_l}(\PP)$ in \eqref{eq:def-phi-X-weighted} is pathwise differentiable and the EIF is given by
        \begin{align}
            \varphi_{X_l}(O;\PP)
            &:=
            \sum_{j=1}^d
            \Big[
            w_{jl}(Z_{-j}) ( 2(Y-\eta(Z))\Delta_j(Z)+\Delta_j(Z)^2 ) \notag\\
            &\qquad
            +
            v_j(Z_{-j}) ( \varphi_{\theta_{jl}}(O;\PP)- w_{jl}(Z_{-j}) + \theta_{jl}(\PP) )
            \Big]
            -\phi_{X_l}(\PP).    \label{eq:eif-phi-X-eot}
        \end{align}
    \end{proposition}

    The influence function in \Cref{prop:eif-eot-weighted} decomposes into (i) a weighted component that treats \(w_{jl}\) as known and propagates uncertainty only through the LOCO contrast \(v_j\), and (ii) an adjustment that accounts for the fact that \(w_{jl}\) is estimated from \(\PP\).

    \begin{remark}[EIF for attributed weights in \Cref{asm:eif-theta}]\label[remark]{rmk:asm:eif}
    On finite spaces (e.g., discrete OT with finitely supported marginals), the entropic optimizer and Sinkhorn map are smooth, and CLTs for entropy-regularized OT costs/divergences yield pathwise differentiability for many smooth OT-derived functionals; see \citet{bigot2019central}.
    For continuous marginals, Hadamard differentiability and limit theory for EOT potentials/maps support functional delta-method CLTs and efficiency statements, while stability results justify the required continuity/differentiability under perturbations of the marginals and cost; see \citet{goldfeld2024limit,ghosal2022stability,eckstein2022quantitative}.

    In several canonical regimes, \Cref{asm:eif-theta} can be substantially weakened in practice.
    When the coupling (or induced map) is available in closed form (e.g., Gaussian/Bures--Wasserstein transport and Gaussian Schr\"odinger bridges), the weights \(w_{jl}\) and hence \(\theta_{jl}(\PP)\) are explicit; see \citet{bunne2023schrodinger}.
    Conversely, when \(w_{jl}\) is computed via Sinkhorn iterations, one may approximate \(\phi_{\theta_{jl}}\) by differentiating the Sinkhorn fixed point or via approximations (e.g., infinitesimal jackknife / leave-one-out), avoiding symbolic EIF derivations in implementations; see \citet{cuturi2013sinkhorn,genevay2018learning}.
    \end{remark}

    To specialize \Cref{prop:eif-eot-weighted} to DFI score \eqref{eq:def-phi-X} under EOT, we require that the barycentric decoder \(S_l(z)=\EE[X_l\mid Z=z]\) and its coordinatewise sensitivities \(s_{jl}(z)=\partial_{z_j}S_l(z)\) are well-defined and sufficiently integrable under the EOT coupling.
    \Cref{asm:bary-smooth} imposes smoothness and differentiability in the latent coordinate $z$, which is needed to ensure that the sensitivity weight \(w_{jl}(z_{-j})=\EE[s_{jl}(Z)^2\mid Z_{-j}=z_{-j}]\) is a well-defined target.
    
    \begin{assumption}[Differentiability and moments for barycentric sensitivities]\label[assumption]{asm:bary-smooth}
        There exists a constant $C>0$, such that for all $l,j\in[d]$:
        \begin{enumerate}[(i)]
        \item\label{asm:bary-smooth:i}
        For $\PP_X$-a.e.\ $x$, the map $z\mapsto k(x\mid z)$ is weakly differentiable in the coordinate $z_j$,
        with derivative $\partial_{z_j}k(x\mid z)$ satisfying
        \(
            \EE[X_l^2\big(\partial_{z_j}k(X\mid Z)\big)^2] < \infty
        \).
        
        \item\label{asm:bary-smooth:ii}
        The barycentric projection $S_l(z)$ is weakly differentiable in $z_j$ for $\PP_Z$-a.e.\ $z$,
        and its derivative admits the integral representation
        \(
        s_{jl}(z)
        =
        \int x_l\,\partial_{z_j}k(x\mid z)\,\PP_X(\rd x)\),
        $\PP_Z\text{-a.e. } z$.

        \item\label{asm:bary-smooth:iii}
        The coordinatewise sensitivities $s_{jl}$ have bounded moment
        \(
        \|s_{jl}(Z)\|_{L_{\infty}(\PP_Z)}< C
        \).
    
        \item\label{asm:bary-smooth:estimator} Nuisance functions \(\partial_z\hat k\) is estimated on an auxiliary sample independent of the $n$ observations used in $\PP_n$ and \(\|\partial_z\hat k\|_{L_\infty(\PP_X\otimes\PP_Z)}\leq C\) with probability tending to one.
        \end{enumerate}
    \end{assumption}

    \Cref{asm:bary-smooth} is a regularity condition for sensitivity-based attribution under EOT.
    Conditions~\ref{asm:bary-smooth:i}--\ref{asm:bary-smooth:ii} ensure that the barycentric decoder \(S_l\) is differentiable in \(z_j\) and that \(s_{jl}\) is square-integrable, which makes the weight \(w_{jl}(z_{-j})=\EE[s_{jl}(Z)^2\mid Z_{-j}=z_{-j}]\) well-defined and estimable from \(\partial_{z_j}\hat k\).
    In entropic OT with regularization \(\varepsilon>0\), the optimal coupling admits the Sinkhorn form
    \(
    \gamma(\rd x,\rd z)=u(x)v(z)\exp\{-c(x,z)/\varepsilon\}\PP_X(\rd x)\PP_Z(\rd z),
    \) and \(k(x\mid z)=\frac{\gamma(\rd x,\rd z)}{\PP_Z(\rd z)}\).
    While EOT yields a strictly positive conditional density $k(x\mid z)$, it does not automatically provide differentiability of $z\mapsto k(x\mid z)$ (and hence of $z\mapsto S_l(z)$); so \(k(x\mid z)\) inherits differentiability from \(z\mapsto c(x,z)\) under standard dominated-convergence conditions; see, e.g., stability/differentiability results for EOT potentials and barycentric maps in \Cref{rmk:asm:eif}.
    Condition~\ref{asm:bary-smooth:iii} supplies moment control needed only for bounding second-order remainder terms in the EIF expansion.
    Condition~\ref{asm:bary-smooth:estimator} is a restriction on the nuisance function similar to \Cref{asm:data}~\ref{asm:data:estimator}.
    
    Under \Cref{asm:bary-smooth}, we estimate the barycentric decoder and its coordinatewise sensitivities via the estimated backward kernel $\hat k$ and its partial derivative.
    Define, for $l\in[d]$,
    \[
        \hat S_l(z):=\int x_l\,\hat k(x\mid z)\,\PP_X(\rd x),
        \qquad
        \hat s_{jl}(z):=\int x_l\,\partial_{z_j}\hat k(x\mid z)\,\PP_X(\rd x).
    \]
    We then form the sensitivity weight
    \[
        \hat w_{jl}(z_{-j})
        :=\EE\!\left[\hat s_{jl}(Z)^2 \mid Z_{-j}=z_{-j}\right] =\int \hat s_{jl}(z_{-j},z_j)^2\,\PP_{Z_j}(\rd z_j),
    \]
    which simplifies under independent coordinates to the one-dimensional average.
    Otherwise, $\hat w_{jl}$ can be learned as a conditional mean of $\hat s_{jl}(Z)^2$ given $Z_{-j}$ (with cross-fitting).
    Finally, we estimate the attributed DFI by
    \begin{align}
        \hat\phi_{X_l}
        :=
        \frac12\sum_{j=1}^d
        \PP_n\!\left[
        \hat w_{jl}(Z_{-j})
        \Big\{(Y-\hat\eta(Z^{(j)}))^2-(Y-\hat\eta(Z))^2\Big\}
        \right]. \label{def:hat-phi-X-eot}
    \end{align}
    \Cref{thm:error-decom-X-eot} establishes the simultaneous linear expansion for estimator $\hat\phi_{X_l}$, which yields Wald-type intervals from the empirical variance of the estimated EIF.

    \begin{theorem}[Simultaneous linear expansion of attributed DFI under EOT]\label{thm:error-decom-X-eot}
        Consider the statistical model on $O=(Z,X,Y)$ where $(X,Z)$ arise from the EOT coupling $\gamma$ and the moment conditions in Assumptions \ref{asm:data}~\ref{asm:data:cov}-\ref{asm:data:response} and \ref{asm:bary-smooth} hold.
        Under Assumptions \ref{asm:data}, \ref{asm:eif-theta}, and \ref{asm:bary-smooth}, for $d$ fixed, the estimator $\hat\phi_{X_l}$ in \eqref{def:hat-phi-X-eot} admits the simultaneous linear expansion
        \[
        \hat\phi_{X_l}-\phi_{X_l}(\PP)
        =
        (\PP_n-\PP)\,\varphi_{X_l}(O;\PP)
        +
        \Op\!\Big(n^{-1/2}\cE_X+\cE_X^2\Big),
        \]
        uniformly over $l=1,\ldots,d$, with EIF $\varphi_{X_l}(O;\PP)$ defined in \eqref{eq:eif-phi-X-eot} and the remainder
        \[
        \cE_X
        :=
        \|\hat\mu-\mu\|_{L_2(\PP_X)}
        +
        \|\hat k-k\|_{L_2(\PP_X\otimes\PP_Z)}
        +
        \|\partial_z\hat k-\partial_z k\|_{L_2(\PP_X\otimes\PP_Z)}.
        \]
        In particular, if $\cE_X=\op(n^{-1/4})$, then for each fixed $l$, \(\sqrt{n}\,(\hat\phi_{X_l}-\phi_{X_l}(\PP)) \dto \cN(0, \VV(\varphi_{X_l}(O;\PP)))\).
    \end{theorem}

    For any fixed specification \((c,\PP_Z,\epsilon)\) with \(\epsilon>0\), the EOT construction defines a unique coupling and hence a well-defined population DFI estimand, including for discrete or mixed covariate laws. 
    This should be distinguished from the stronger regularity conditions used for inference. 
    \Cref{thm:error-decom-eot} requires \(L_2\)-consistent estimation of the backward kernel for latent FI, while \Cref{thm:error-decom-X-eot} additionally requires differentiability of the barycentric decoder and consistent estimation of its coordinatewise sensitivities for attributed FI. Thus, EOT removes map non-uniqueness at the estimand level, but root-\(n\) inference for attributed DFI is established under smoothness and nuisance-rate conditions.
    
    \subsection{Properties}\label{subsec:R2-extension}

    In \Cref{subsec:ot-latent,subsec:ot-attributed}, the estimands $\phi_{Z_j}$ and $\phi_{X_\ell}$ are shown to admit EIF-based linear expansions and standard Wald-type inference away from the null.
    By construction, $\PP_Z$ is a product measure; consequently, any $\eta\in L_2(\PP_Z)$ admits the unique functional ANOVA (Hoeffding--Sobol) decomposition \citep{sobol1990sensitivity,owen2017shapley}:
    \[
    \eta(Z)=\sum_{\cS\subseteq[d]}\eta_{\cS}(Z_{\cS}),
    \]
    where $\EE[\eta_{\cS}(Z_{\cS})]=0$ for all $\cS\neq\emptyset$, and the summands are orthogonal in $L_2(\PP_Z)$: $\EE[\eta_{\cS}(Z_{\cS})\eta_{\cS'}(Z_{\cS'})]=0$ for $\cS\neq\cS'$.
    Thus $\VV(\eta(Z))=\sum_{\emptyset\neq\cS\subseteq[d]}\VV\!\big(\eta_{\cS}(Z_{\cS})\big)$, with each $\eta_{\cS}$ capturing the contribution of the interaction among coordinates in $\cS$.

    This decomposition yields an immediate interpretation of the latent importance target $\phi_{Z_j}(\PP)=\EE[\VV(\eta(Z)\mid Z_{-j})]$.
    Since conditioning on $Z_{-j}$ removes precisely the ANOVA components that do not involve $j$, $\phi_{Z_j}(\PP)$ aggregates the variance contributions of all interaction terms that include $j$:
    \[
    \phi_{Z_j}(\PP)=\sum_{\cS:\,j\in\cS}\VV\!\big(\eta_{\cS}(Z_{\cS})\big),
    \]
    which coincides with the unnormalized total Sobol effect for coordinate $Z_j$ \citep{sobol1990sensitivity}.
    This connection yields the following decomposition lemma.

    \begin{lemma}[Decomposition of latent feature importance]\label[lemma]{lem:latent-importance-decomp}
    The following properties hold:
    \begin{enumerate}[(i)]
        \item (Weighted variance decomposition) The sum of the latent feature importances equals the sum of variances of the ANOVA terms, weighted by their interaction order: $ \sum_{j=1}^d \phi_{Z_j} (\PP) = \sum_{ \cS \subseteq [d]} |\cS| \cdot \VV[\eta_{\cS}(Z_{\cS})]$.
        
        \item (Total variance under additivity) The sum of the latent importances equals the total predictive variability if and only if the regression function $\eta(Z)$ is purely additive, i.e., $ \sum_{j=1}^d \phi_{Z_j}(\PP) = \VV[\EE[Y \mid Z]] $ if and only if $\eta(Z) = c + \sum_{j=1}^d g_j(Z_j), $ for some constant $c$ and deterministic functions $g_1,\ldots,g_d$.
    \end{enumerate}
    \end{lemma}

    \Cref{lem:latent-importance-decomp} highlights a fundamental distinction between DFI and methods that simply decompose variance. 
    While the total predictive variance sums the contributions from all interaction terms equally, the sum of DFI scores is a weighted sum where higher-order interactions contribute proportionally to their size. This implies that the total importance captured by DFI exceeds the predictive variance in the presence of interactions. The equality only holds for purely additive models, where all predictive signal comes from main effects.

    Next, we relate the attributed importance in the original feature space to the latent importance in the disentangled space.
    Because $Z$ is generated from $X$ through the EOT coupling, the regression signal in $Z$ cannot exceed that in $X$ by the law of total variance:
    \[
    \VV(\EE[Y\mid Z])\le \VV(\EE[Y\mid X]) ,
    \]
    with equality when $Z$ is almost surely an invertible function of $X$.
    In particular, if $\gamma$ is concentrated on the graph of an invertible differentiable map $T:X\mapsto Z$, then $L(\cdot\mid z)=\delta_{T^{-1}(z)}$, $\eta(z)=\mu(T^{-1}(z))$, and $S(z)=T^{-1}(z)$.
    Hence \eqref{eq:phi-Z} is unchanged, while \eqref{eq:def-phi-X} reduces to
    \[
    \phi_{X_\ell}(\PP)
    =
    \sum_{j=1}^d
    \EE\!\left[
    \VV\!\big(\EE[Y\mid Z]\mid Z_{-j}\big)
    \left(\frac{\partial X_\ell}{\partial Z_j}\right)^2
    \right].
    \]
    When $T$ is linear, this relationship further simplifies, yielding an explicit link between importances in the $X$- and $Z$-spaces.

    \begin{proposition}[Gaussian transport decomposition]\label[proposition]{prop:decomp-phi}
        Assume \(X\sim \cN(0,\Sigma)\) with \(\Sigma \succeq 0\).
        Suppose the unregularized OT coupling \(\gamma\) between \(\PP_X\) and \(\PP_Z=\cN_d(0,I)\) is induced by
        \[
            Z=\Sigma^{-\frac{1}{2}}X,
        \]
        then the attributed feature importance \(\phi_{X_l}(\PP)\) in \eqref{eq:def-phi-X} satisfies that for all \(l\in[d]\),
        \begin{align}
            \phi_{X_l}(\PP) = \sum_{j=1}^d (\Sigma_{jl}^{\frac{1}{2}})^2 \phi_{Z_j} (\PP)  
        \quad\text{and}\quad
            \sum_{j=1}^d
            \Sigma_{jj}\phi_{Z_j}(\PP)
            \;=\;
            \sum_{l=1}^d\phi_{X_l}(\PP).\label{eq:phi-X-R2}
        \end{align}
    \end{proposition}

    A key consequence of \Cref{prop:decomp-phi} is that for standardized features, where $\VV[X_l]=\Sigma_{ll}=1$ for all $l$, the total importance is conserved across spaces: \(\sum_{l=1}^d\phi_{X_l}(\PP) = \sum_{j=1}^d\phi_{Z_j}(\PP)\).
    This identity, combined with \Cref{lem:latent-importance-decomp}, provides a crucial link to classical regression analysis. For a multiple linear regression model, the function $\eta(Z)$ is purely additive, meaning the total importance equals the total predictive variability. 
    Our DFI measure for each feature, $\phi_{X_l}$, precisely recovers the corresponding term in the $R^2$ decomposition discussed in \Cref{ex:R2}. 
    This result establishes our framework as a principled nonparametric generalization of the classical $R^2$ decomposition \citep{genizi1993decomposition}, extending its logic to models with arbitrary nonlinearities and feature dependencies.    
    
    \setcounter{excont}{0}
    \begin{excont}[{\hyperref[ex:R2]{Continued}}]
        Consider the multiple linear model \eqref{model:linear}, for the disentangled feature $Z = \Sigma^{-\frac{1}{2}}X \sim \cN_d(0,I)$, one can show that $\phi_{Z_j} = (e_j^{\top}\Sigma^{\frac{1}{2}}\beta)^2$.
        Then, we attribute importance to original features \(\phi_{X_l} = \sum_{j=1}^d(\Sigma^{\frac{1}{2}})_{jl}^2 \phi_{Z_j}\) based on \eqref{eq:phi-X-R2}, which coincides with the decomposition of the coefficient of determination $R^2$ for linear regression with correlated regressors \citep{genizi1993decomposition}.
        
        More generally, for any positive definite covariance matrix $\Sigma$, one has $\phi_{Z_j} = \Sigma_{jj}^{-1/2}(e_j^{\top}\Sigma^{\frac{1}{2}}\beta)^2$, while the attribution relationship \eqref{eq:phi-X-R2} remains the same.
    \end{excont}

\section{Simulation}\label{sec:simu}
    \subsection{Comparative analysis of importance attribution}
    In the first simulation study, we present the following four simulated examples. In each case, we plot the estimated importance of all features (averaged over 100 simulations) with different values of correlation $\rho$ between the features. In each case, the sample size is $n = 2000$.

    \begin{enumerate}[(M1)]

    \item\label{M1} Linear Gaussian model:
    \(
    Y = 5X_1+\epsilon,
    \) where $\epsilon\sim\cN(0,1)$ and \(X\sim\cN_{10}(0,\Sigma)\) with \(\diag(\Sigma)=1, \Sigma_{12}=\Sigma_{21}=\rho\) and zero otherwise.
    
    \item\label{M2} Nonlinear model:
    \(
    Y = 5\cos(X_1)+5\cos(X_{2})+\epsilon,
    \) with the same covariate and noise as \hyperref[M1]{(M1)}.

    \item\label{M3} Piecewise interaction with indicators:    
    \(
    Y = 1.5X_1X_{2}\,\ind\{X_{3}>0\} + X_{4}X_{5}\,\ind\{X_{3}<0\} + \epsilon ,
    \)
    where $\epsilon\sim\cN(0,0.4)$ and \(
    X\sim\cN_{5}(0,\Sigma),\;
    \Sigma_{jj}=1,\;
    \Sigma_{12}=\Sigma_{21}=\Sigma_{45}=\Sigma_{54}=\rho ,
    \) and $0$ otherwise.
    
    \item\label{M4} Low-density model:
    \(
    Y = 5X_1+\epsilon,
    \) where
    \(
    X_{-2}\sim\cN_9(0,1),
    X_{2}=3X_1^{2}+\delta
    \) and \((\delta,\epsilon)\sim\cN_2(0,I)\).

    \end{enumerate}

    \begin{figure}[!t]
        \centering
        \includegraphics[width=0.494\linewidth]{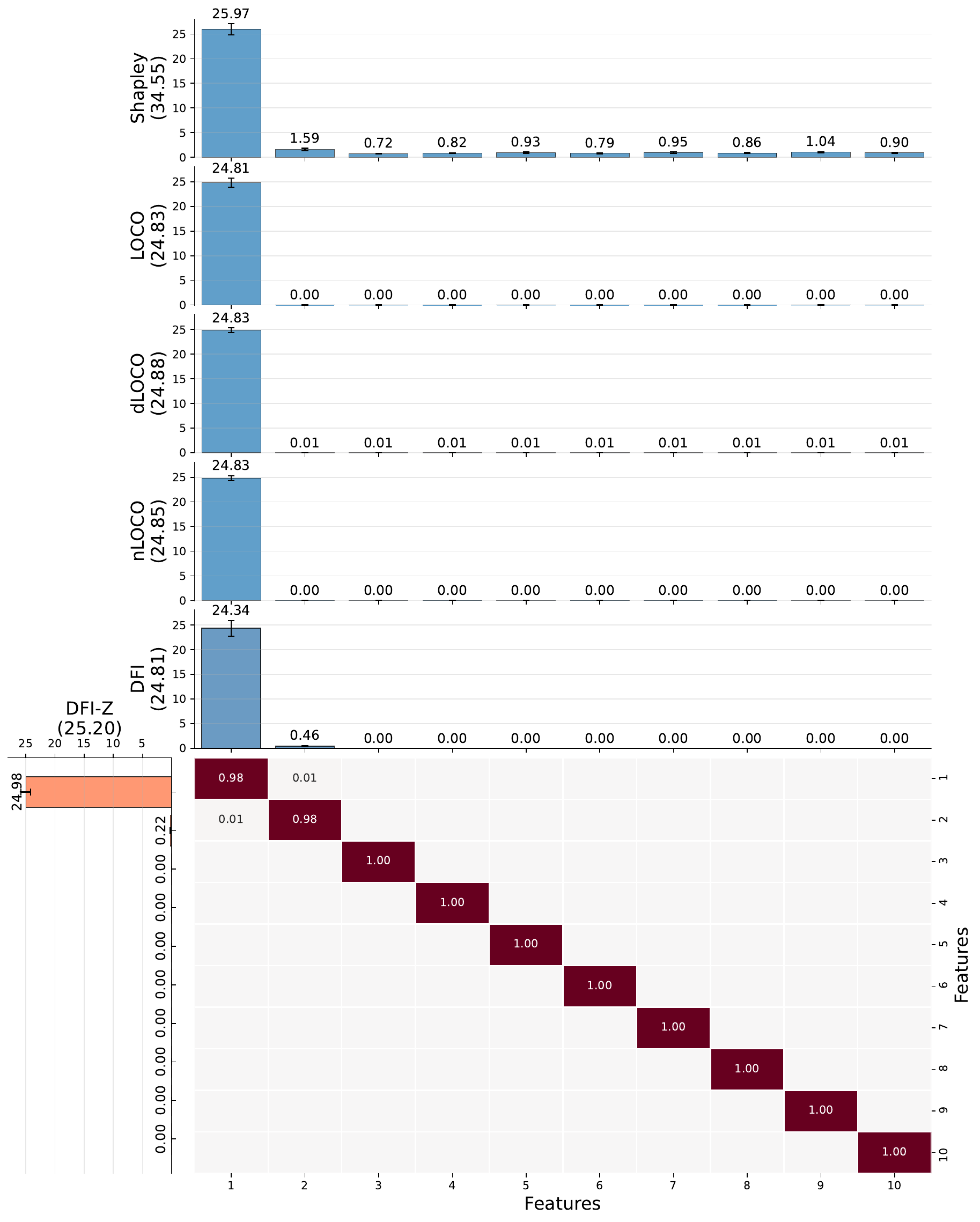}
        \includegraphics[width=0.494\linewidth]{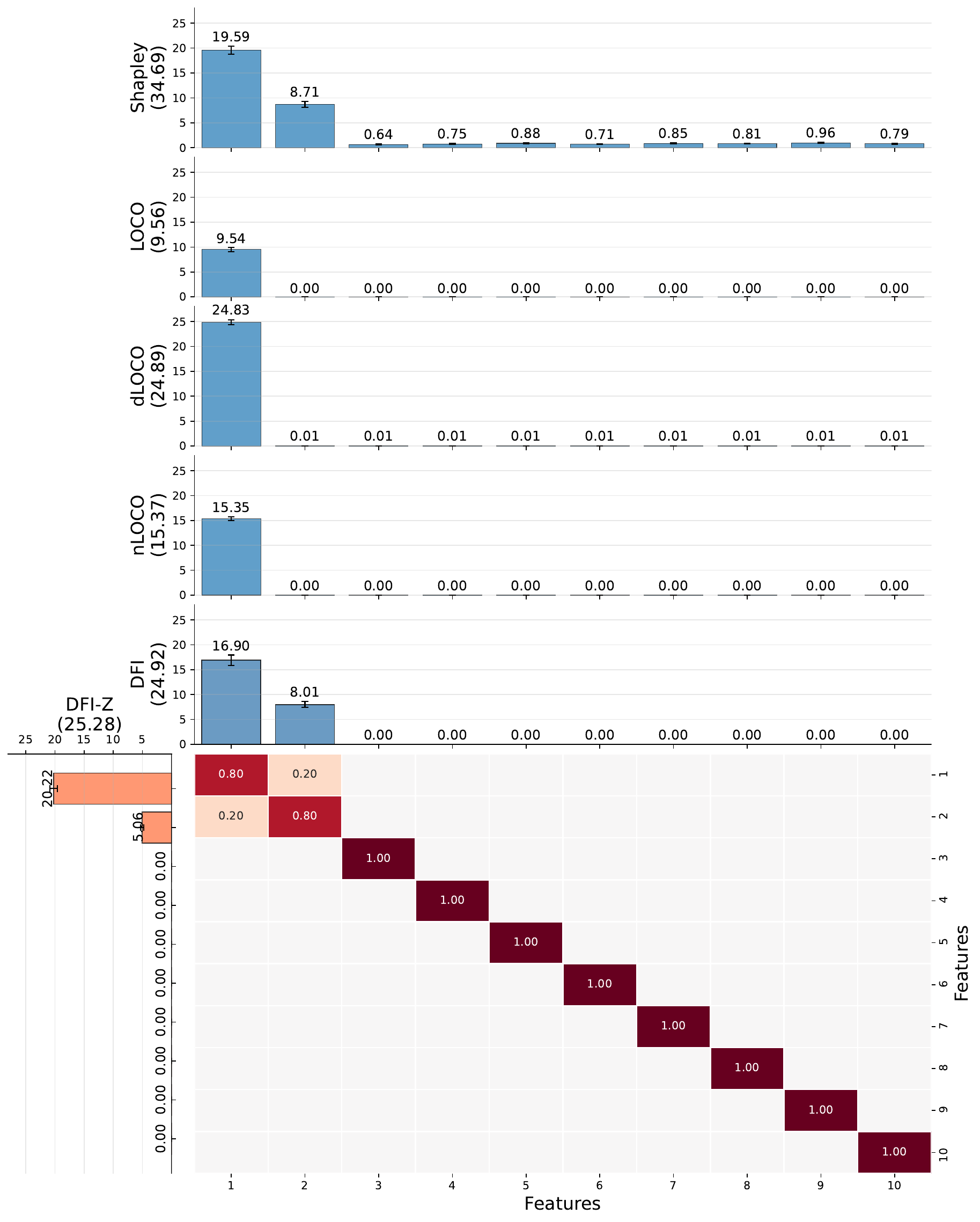}
        \caption{Simulation results under \hyperref[M1]{(M1)}.
        (a) weak correlation $(\rho = 0.2)$; 
        (b) strong correlation $(\rho = 0.8)$. 
        For every method, the number shown in parentheses is the total importance, which sums up to the signal variance $\VV\!\bigl[\EE[Y \mid X]\bigr]=25$ when the covariates are independent $(\rho = 0)$.
        Bars give the mean over $100$ random seeds, and the error bars indicate the corresponding standard deviations. 
        The heatmap on the right visualizes the weights $(\Sigma^{\frac{1}{2}})_{jl}^{\,2}$ that transfer importance from the latent coordinates to the observed features.}
        \label{fig:simu-1}
    \end{figure}

   \begin{figure}[!t]
        \centering
        \includegraphics[width=0.494\linewidth]{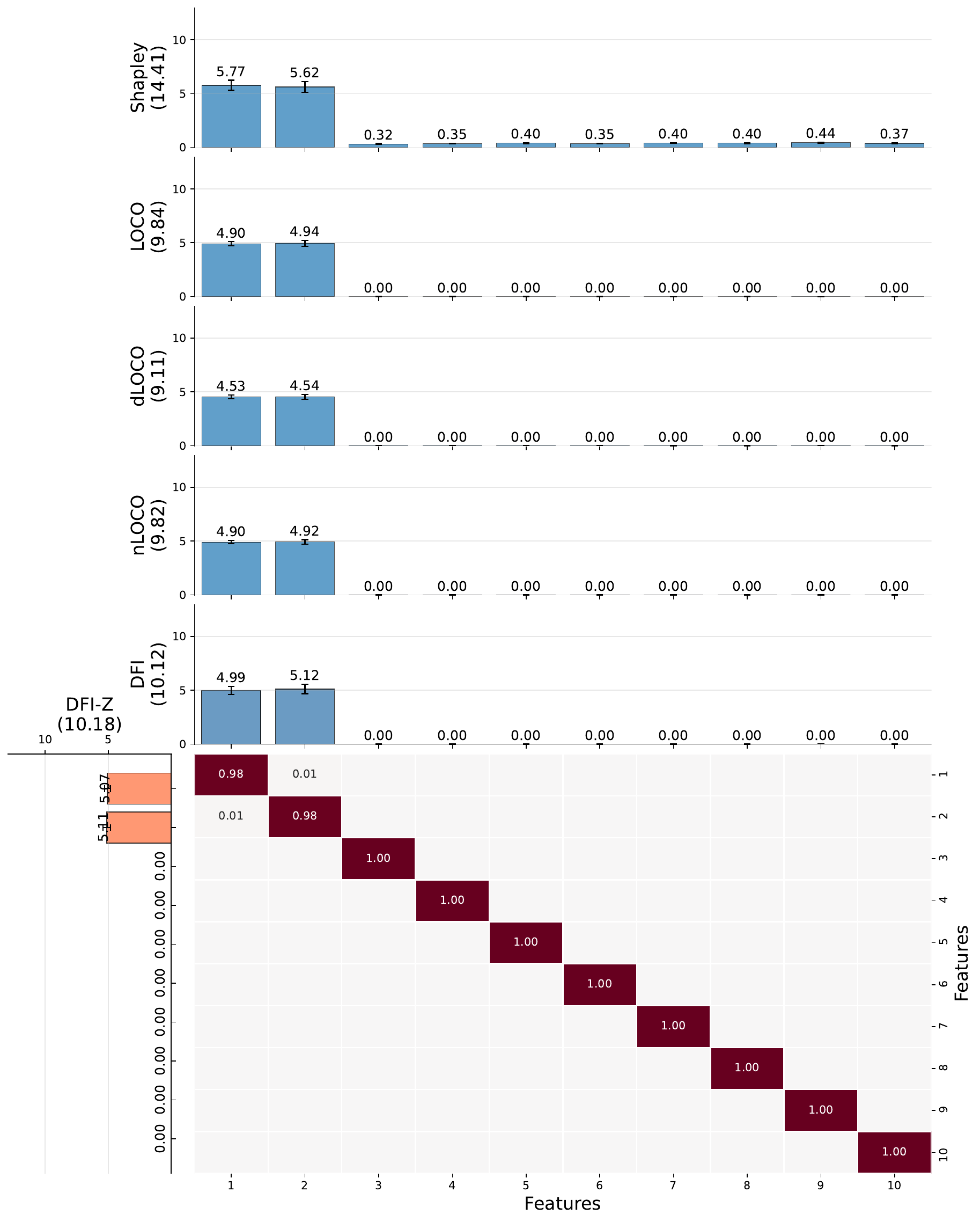}
        \includegraphics[width=0.494\linewidth]{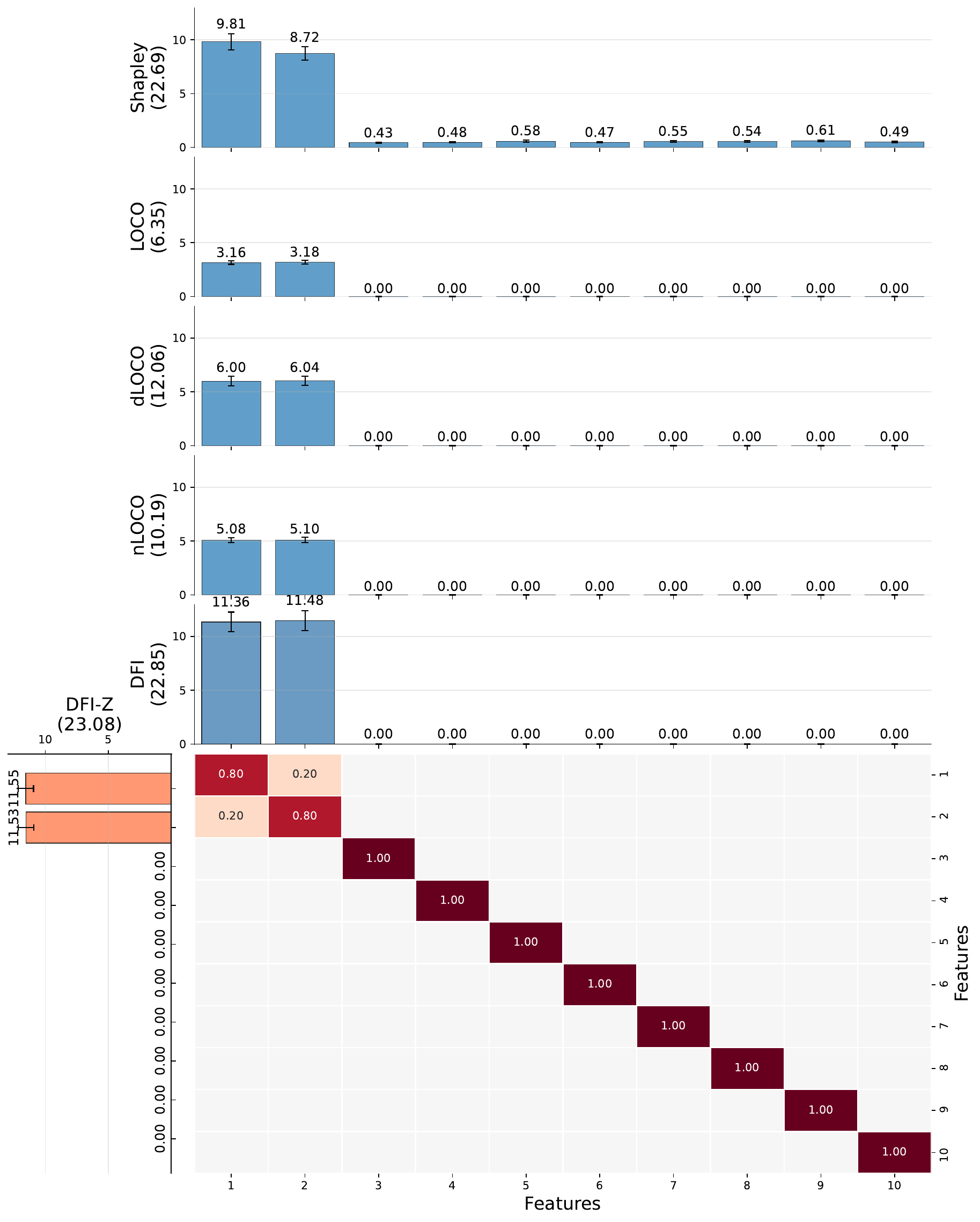}
        \caption{Simulation results for model \hyperref[M2]{(M2)}.  
        (a) weak correlation ($\rho = 0.2$);  
        (b) strong correlation ($\rho = 0.8$).  
        The interpretation of the bars, error bars, and heat-map is identical to \Cref{fig:simu-1}.
        The number in parentheses is the total estimated importance, which should be close to the signal variance  
        \(
        \VV\!\bigl[\EE[Y\mid X]\bigr]
          = 25 + 25\mathrm{e}^{-2} - 100\mathrm{e}^{-1} + 50\mathrm{e}^{-1}\cosh(\rho)\) (which is $\approx 10$ when $\rho = 0$, and $\approx 20$ when $\rho = 1$).
        }
        \label{fig:simu-2}
    \end{figure}

    \begin{figure}[!t]
        \centering
        \includegraphics[width=0.494\linewidth]{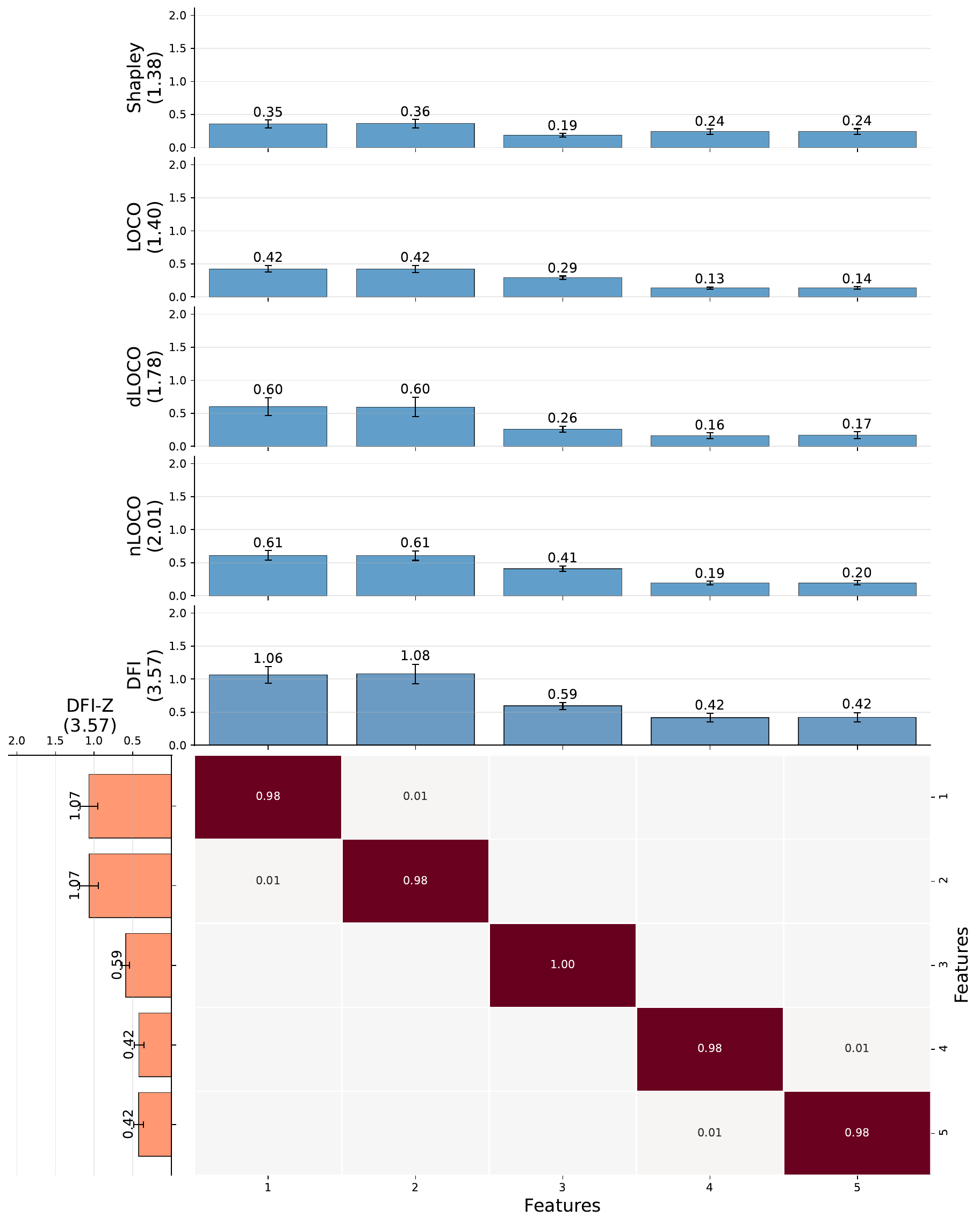}
        \includegraphics[width=0.494\linewidth]{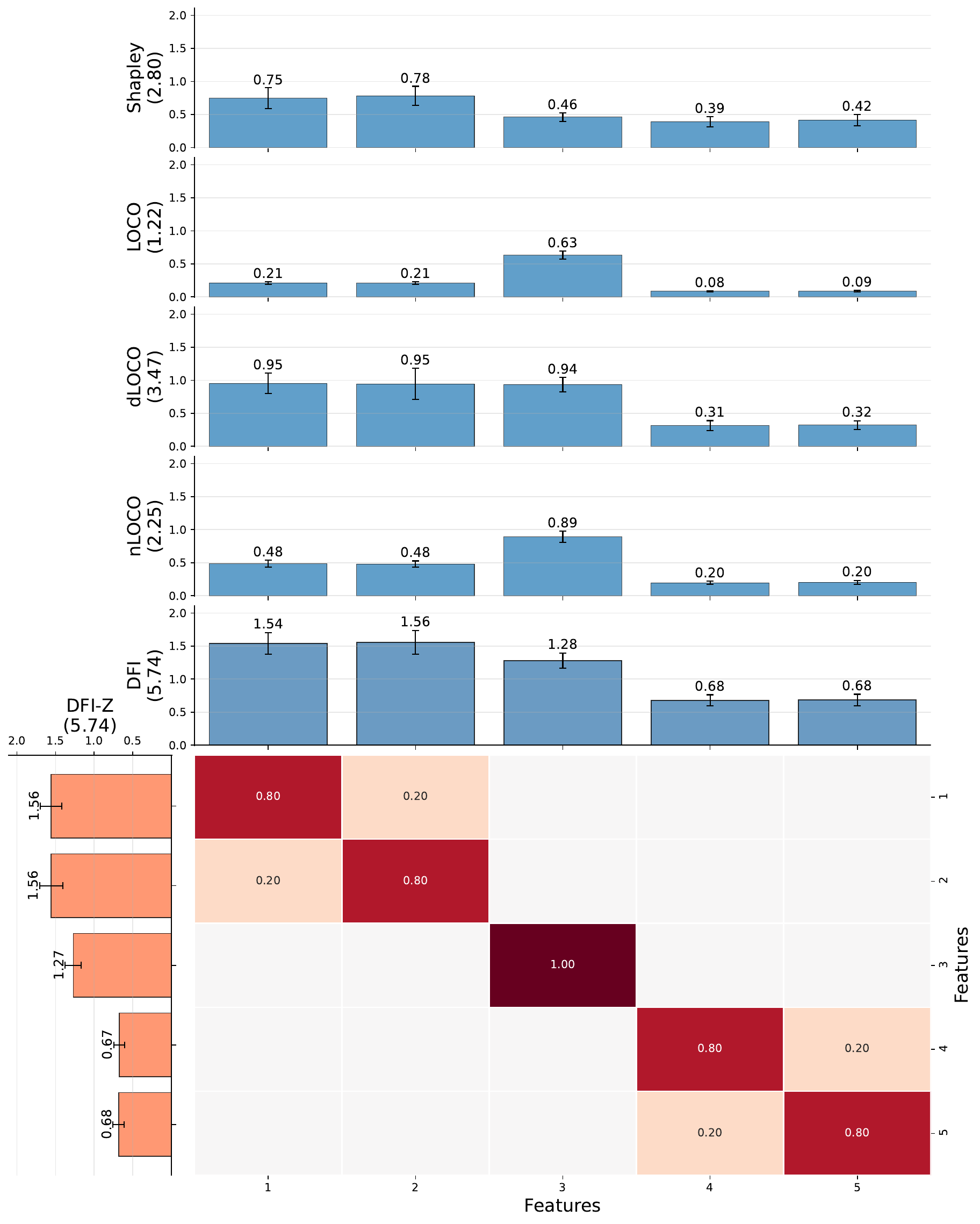}
        \caption{Simulation results for model \hyperref[M3]{(M3)}.  
        (a) weak correlation ($\rho = 0.2$);  
        (b) strong correlation ($\rho = 0.8$).  
        The interpretation of the bars, error bars, and heat-map is identical to \Cref{fig:simu-1}.
        % The number in parentheses is the total estimated importance, which should be close to the signal variance $\VV[\EE[Y\mid X]] = (26+27\rho^2)/16$.
        }
        \label{fig:simu-3}
    \end{figure}

    In our analysis, we denote the FI measures derived from our framework as DFI ($\phi_{X_l}$) and DFI-Z ($\phi_{Z_j}$) by EOT with regularization parameter $\varepsilon=10^{-3}$ and 50 samples of $Z^{(j)}$ to improve the estimation, as described in \Cref{app:sec:simu}.
    We compare their performance with several established methods: LOCO, nLOCO (normalized LOCO; \citealp{verdinelli2024feature}), dLOCO (decorrelated LOCO; \citealp{verdinelli2024decorrelated}), and the Shapley value \citep{shapley1953value}. 
    For dLOCO and Shapley values, we use sampling approximation as described in \citet{verdinelli2024feature}, with 1000 and 100 samples, respectively, to reduce computational burden.
    All methods employ two-fold cross-fitting, where one fold estimates the nuisance functions (regression models and transport maps) and the other fold computes the importance measures.
    For nuisance function estimation, we use a random forest with 500 trees and a minimum of five samples per leaf.
    The comparative results for the four simulation models are presented in \Crefrange{fig:simu-1}{fig:simu-3} and \ref{fig:simu-4}.

    In the linear model \hyperref[M1]{(M1)}, as shown in \Cref{fig:simu-1}, all methods perform well under weak correlation ($\rho = 0.2$), correctly identifying $X_1$ as the important feature. 
    However, under strong correlation ($\rho = 0.8$), the importance value of LOCO and nLOCO diminishes for $X_1$ in the presence of a correlated feature $X_2$, illustrating the correlation distortion issue noted by \citet{verdinelli2024decorrelated}. 
    In contrast, dLOCO, Shapley, and DFI correctly assign a larger importance value ($\geq 16$) to $X_1$, while only the latter two measures acknowledge $X_2$ to be informative, though less important than $X_1$. 
    In addition, DFI is among the methods whose total importance remains close to the total predictive variability $\VV(\EE[Y\mid X])$, while also preserving the intended attribution interpretation.
    Similarly, for the nonlinear model \hyperref[M2]{(M2)}, presented in \Cref{fig:simu-2}, all methods correctly identify the two equally important features, $X_1$ and $X_2$. However, the LOCO variants and the Shapley value substantially underestimate their importance. 
    In this additive latent setting, DFI is the only method among those compared whose total score remains close to the corresponding predictive variability while distributing importance symmetrically between \(X_1\) and \(X_2\).
    
    For the piecewise interaction model \hyperref[M3]{(M3)}, which involves complex dependencies and interaction terms \citep{benard2022mean}, the results are presented in \Cref{fig:simu-3}. 
    The findings highlight that most LOCO-based methods, with the exception of dLOCO, underestimate the importance of the correlated features $(X_1,X_2)$. In contrast, DFI correctly identifies the relative importance of the features. 
    A theoretical calculation detailed in \Cref{app:subsec:M3} establishes the population DFI ranking as $\phi_{X_1}=\phi_{X_2}>\phi_{X_3}>\phi_{X_4}=\phi_{X_5}$. 
    Our empirical results successfully recover this correct ordering for different correlation levels. Nevertheless, the overall magnitude of the DFI estimates is slightly lower than the theoretical values, an outcome attributable to the challenge of accurately estimating the interaction-heavy regression function using random forests at the given sample size. 
    In an oracle setting where the regression function is known, the DFI estimates are consistent with the theoretical values, as demonstrated in \Cref{fig:simu-3-supp}.

    Finally, we consider model \hyperref[M4]{(M4)}, where $X_1$ and $X_2$ exhibit a non-linear relationship but are linearly uncorrelated. 
    This represents a misspecified setting for our method, as DFI seeks the best linear projection to explain the data generating process. 
    As shown in \Cref{fig:simu-4}, the LOCO variants assign almost all importance to $X_1$, while both Shapley and DFI recognize $X_2$'s relevance.
    Although the true model depends solely on $X_1$, knowing $X_2$ reduces uncertainty about the magnitude of $X_1$, which may be valuable for prediction.
    Under EOT, DFI's partial attribution to $X_2$ represents the best linear approximation to the true disentangled representations under model misspecification, yielding that $\sum_j \phi_{Z_j} = \VV[\EE[Y\mid Z]]=25$ but $\sum_l \phi_{X_l} = \VV[\EE[Y\mid X]] = 25$.

    Across these scenarios, DFI is better aligned with the post-hoc attribution objective studied in this paper, particularly when predictive signal is shared across correlated covariates.

    \subsection{Inferential validity and computational performance}

    Beyond providing accurate point estimates, a practical FI measure must allow for valid statistical inference and be computationally tractable. 
    In our second simulation study, we evaluate these quantitative aspects of DFI and its competitors. 
    We assess inferential validity by examining the empirical coverage rates of the confidence intervals constructed for the importance measures. 
    Our inferential comparison excludes the Shapley value and dLOCO due to known challenges in tractable uncertainty quantification.
    Estimating the influence function of Shapley values is computationally intensive \citep{williamson2020efficient}, while the standard error of dLOCO is analytically intractable because of its formulation as a U-statistic \citep{verdinelli2024decorrelated}. 
    As shown in \Cref{tab:simu}, DFI, along with the other methods, achieves average coverage at or above the nominal 90\% level, confirming the validity of our asymptotic theory.

    However, a key practical advantage of DFI lies in its computational efficiency. As illustrated in \Cref{fig:time}, DFI is significantly faster than competing methods that also account for feature correlations, such as dLOCO and sampling-based approximations of the Shapley value. This efficiency stems from DFI's design, which avoids both the need to refit multiple submodels (a drawback of LOCO-based approaches) and the direct estimation of complex conditional distributions (a challenge for standard CPI). This combination of statistical robustness, inferential validity, and computational efficiency makes DFI a practical tool for researchers analyzing complex, high-dimensional datasets.
    
    \begin{table}[!t]\tiny
        \centering
        \begin{tabular}{lcccccccccccc}
        \toprule
          $\rho$ & \multicolumn{4}{c}{0.0} & \multicolumn{4}{c}{0.4} & \multicolumn{4}{c}{0.8} \\
        \cmidrule(lr){2-5} \cmidrule(lr){6-9} \cmidrule(lr){10-13}
         $n$ & 100 & 250 & 500 & 750 & 100 & 250 & 500 & 750 & 100 & 250 & 500 & 750 \\
        % estimator & epsilon &  &  &  &  &  &  &  &  &  &  &  &  \\
        \midrule
        DFI ($\varepsilon=0$) & 0.942 & 0.968 & 0.987 & 0.990 & 0.942 & 0.981 & 0.987 & 1.000 & 0.962 & 0.974 & 0.994 & 1.000 \\
        % \cline{1-14}
        DFI ($\varepsilon=0.001$) & 0.907 & 0.946 & 0.978 & 0.978 & 0.917 & 0.952 & 0.958 & 0.987 & 0.929 & 0.939 & 0.978 & 0.990 \\
        DFI ($\varepsilon=0.01$) & 0.907 & 0.939 & 0.974 & 0.971 & 0.920 & 0.952 & 0.958 & 0.984 & 0.946 & 0.942 & 0.987 & 0.984 \\
        DFI ($\varepsilon=0.1$) & 0.904 & 0.939 & 0.971 & 0.968 & 0.913 & 0.952 & 0.965 & 0.987 & 0.946 & 0.942 & 0.987 & 0.987 \\
        DFI ($\varepsilon=1$) & 0.910 & 0.939 & 0.974 & 0.971 & 0.913 & 0.955 & 0.965 & 0.984 & 0.946 & 0.946 & 0.987 & 0.987 \\
        \cmidrule(lr){2-13}
        LOCO & 0.987 & 0.955 & 0.942 & 0.958 & 0.987 & 0.968 & 0.952 & 0.942 & 0.939 & 0.971 & 0.968 & 0.955 \\
        \cmidrule(lr){2-13}
        % Shapley &  & 0.000 & 0.000 & 0.000 & 0.000 & 0.000 & 0.000 & 0.000 & 0.000 & 0.000 & 0.000 & 0.000 & 0.000 \\
        % \cmidrule(lr){3-14}
        % dLOCO & 1.000 & 1.000 & 1.000 & 1.000 & 1.000 & 1.000 & 1.000 & 1.000 & 1.000 & 1.000 & 0.997 & 1.000 \\
        \cmidrule(lr){2-13}
        nLOCO & 0.984 & 0.955 & 0.942 & 0.955 & 0.990 & 0.968 & 0.952 & 0.939 & 0.942 & 0.974 & 0.968 & 0.952 \\
        % \cline{1-14}
        \bottomrule
        \end{tabular}
        \caption{
            Coverage of different importance measures for the null features $(X_{3},\ldots,X_{10})$ under model \hyperref[M1]{(M1)} with a nominal level of $\alpha=0.1$.}
        \label{tab:simu}
    \end{table}

    \begin{figure}[!ht]
        \centering
        \includegraphics[width=\linewidth]{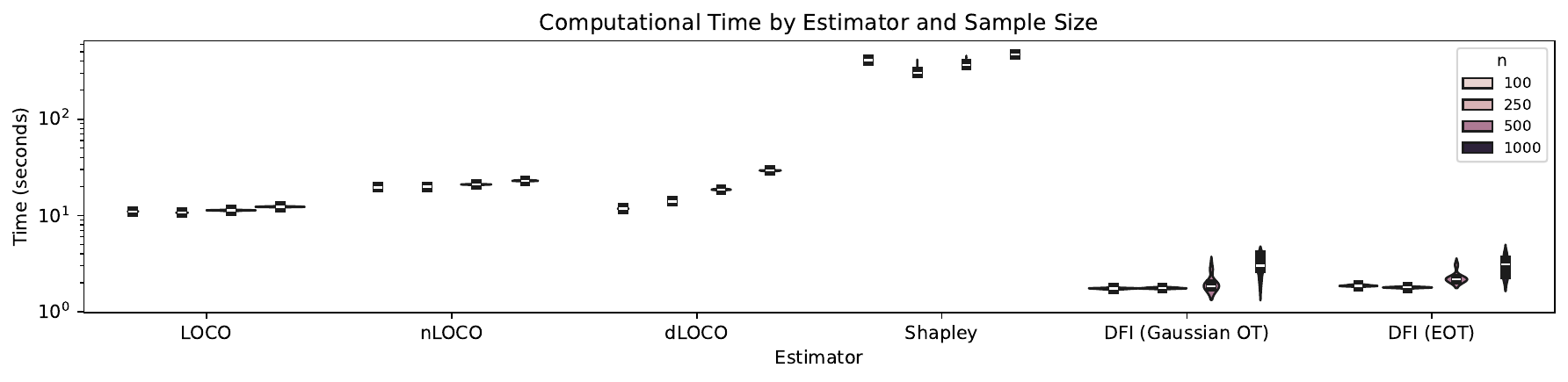}
        \caption{
                Computational time of different FI measures in the logarithmic scale.
                For DFI, the computational time includes the estimation of transport plans and importance scores.}
        \label{fig:time}
    \end{figure}

\section{Real data}\label{sec:real-data}
    In this section, we use DFI to attribute predictive signal in a real-world HIV-1 neutralization-resistance dataset \cite{montefiori2009measuring}, which is a critical step toward designing effective HIV-1 vaccines and antibody-based therapies.  
    \citet{williamson2023general} analyzed the same dataset using an algorithm-agnostic variable-importance framework based on a different estimand. 
    Here, we use DFI to target a complementary question: how shared predictive signal is attributed across correlated biological feature groups.
    
    Following the preprocessing steps of \cite{williamson2023general} and additional standardization of features and the outcome, the data comprises 611 samples and 832 features organized into 14 groups. 
    The outcome is a binary indicator of whether the 50\% inhibitory concentration ($\mathrm{IC}_{50}$) was right-censored, where high $\mathrm{IC}_{50}$ values signify viral resistance to neutralization \citep{montefiori2009measuring}. 
    The 14 feature groups represent functionally distinct aspects of viral envelope structure and biology, including the VRC01 binding footprint, CD4 binding sites, etc (\Cref{fig:real-data}). 
    These groups exhibit varying degrees of correlation due to biological relationships; for instance, features related to binding sites and glycosylation patterns are naturally correlated through their shared impact on antibody-virus interactions.

\begin{figure}[!t]
    \centering
    \includegraphics[width=\linewidth]{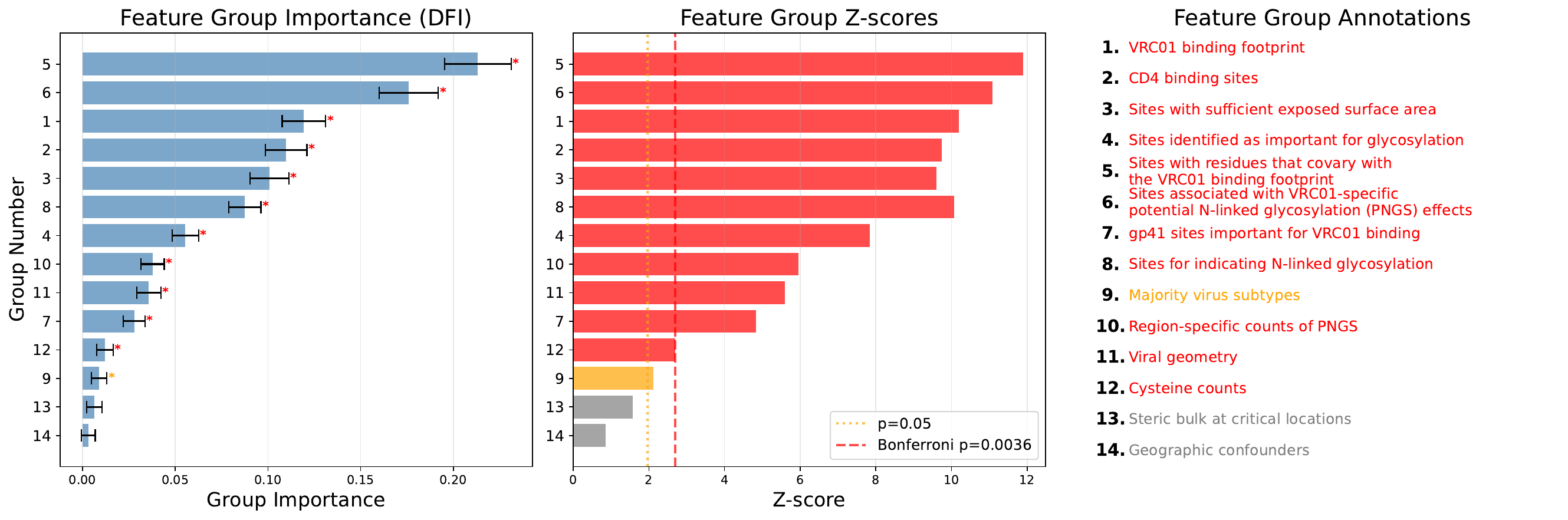}
    \includegraphics[width=\linewidth]{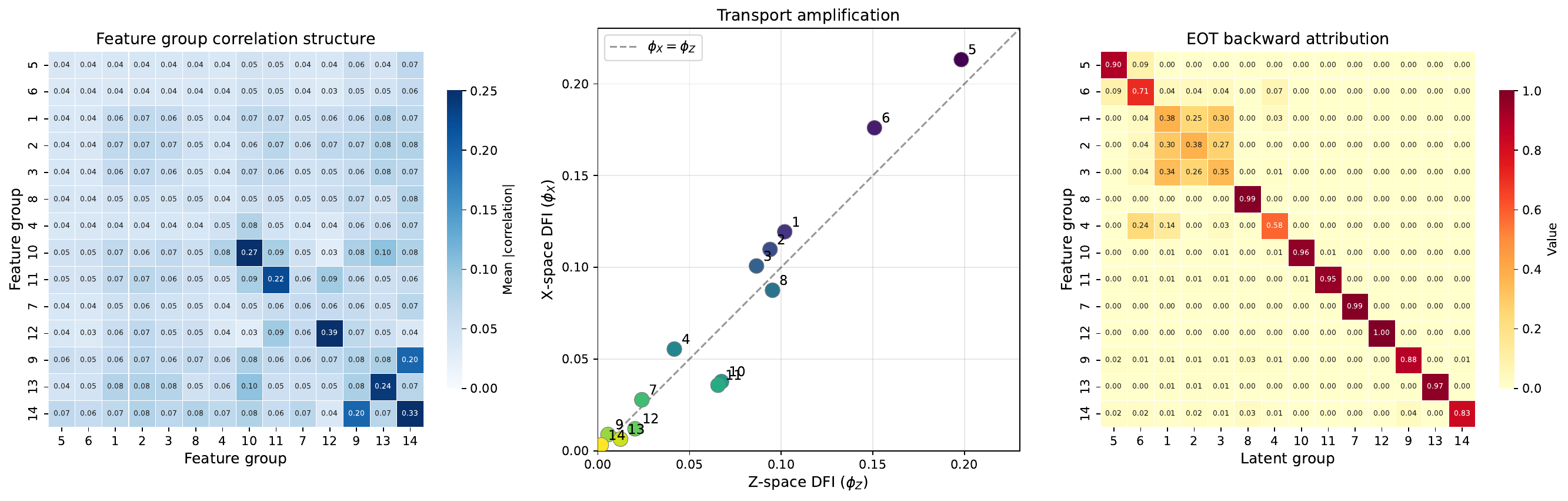}
    \caption{Feature importance of antibody against HIV-1 infection.
    DFI with EOT ($\varepsilon=10^{-3}$) is used to compute the group FI scores.
    (a) The bars show the point estimate, while the error bars indicate the values within one estimated standard deviation.
    (b) The corresponding Z-score for one-sided tests of $\cH_{0l}:\phi_{X_l}(\PP)\leq 0$.
    (c) Feature group description. 
    Stars ${\color{orange}\ast}$ and ${\color{red}\ast}$ denote importance deemed statistically significantly different from zero at the 0.05 and 0.0036 (0.05/14) levels, respectively.
    (d) Heatmap of absolute correlation between feature groups.
    (e) Scatter plot of transport amplification levels.
    (f) Heatmap of EOT backward attribution weights.
    }
    \label{fig:real-data}
\end{figure}

When individual features are strongly correlated or belong to a common scientific unit, individual importance scores may be less stable or less meaningful than the importance of the whole group.
The additive structure of the DFI measure \eqref{eq:def-phi-X} facilitates the assessment of group importance for a feature set $X_{\cG}$ with $\cG\subseteq[d]$, defined as 
\begin{align*}
\phi_{X_{\cG}}(\PP) := \sum_{l\in\cG}\sum_{j=1}^d \EE\left[ \VV\left( \EE[Y \mid Z] \, \Big| \, Z_{-j} \right) \left( \frac{\partial X_l}{\partial Z_j} \right)^2 \right].
% \label{eq:def-phi-X-group}
\end{align*}
These group-level scores should be interpreted as attribution of predictive signal in the fitted prediction problem, not as evidence that the corresponding groups are causally necessary or uniquely predictive after conditioning on all other groups.
The statistical estimation and inference framework from \Cref{sec:stat-est-inf} extends directly to this group setting. 
To ensure stable estimates, we estimate the EOT coupling on the full dataset and use 2-fold cross-fitting for the regression function and importance score estimation. To address the issue of inference near the null noted in \Cref{rmk:null}, we inflate the estimated variance by $n^{-1/2}/z_{1-\alpha}$ when computing p-values at a significance level of $\alpha$.

For $\alpha=0.05$ (with Bonferroni correction for one-sided tests), DFI identifies 19 of 832 features as significant in the disentangled space and 30 of 832 in the original feature space.
The group-level importance results are summarized in \Cref{fig:real-data,fig:sensitivity-epsilon-sens50} for sensitivities in EOT regularization parameter $\varepsilon$.
The analysis reveals that a few feature groups are significantly associated with VRC01 neutralization resistance. The three most influential groups are Group 5 (Sites with residues covarying with the VRC01 binding footprint), Group 6 (Sites linked to VRC01-specific potential N-linked glycosylation effects), and Group 1 (VRC01 binding footprint). These groups exhibit the highest importance and Z-scores, indicating strong statistical significance that far exceeds the Bonferroni-corrected threshold. This finding suggests that the interplay between the VRC01 binding footprint and glycosylation patterns is a primary determinant of neutralization sensitivity.

Groups 2 (CD4 binding sites), 3 (sites with sufficient exposed surface area), 8 (sites indicating N-linked glycosylation), and 4 (sites identified as important for glycosylation) also show significant, though smaller, attributed importance. The remaining groups, including those related to viral geometry and geographic confounders, have comparatively minor importance, with several failing to reach statistical significance. 
These findings are scientifically concordant with the analysis of \citet{williamson2023general}, in the sense that both analyses identify VRC01-footprint and glycosylation-related features as central to neutralization resistance. 
The distinction is interpretive: \citet{williamson2023general} analyze this dataset through an algorithm-agnostic variable-importance framework based on a different estimand, whereas DFI targets the attribution of shared predictive signal across correlated observed feature groups. 
Thus, the agreement in leading biological signals should be viewed as external validation of the relevance of these groups, while the DFI analysis provides an additional decomposition of how this signal is distributed under feature dependence.

More specifically, \Cref{fig:real-data}~(d) shows that the leading groups form a correlated biological block: Group~5 (sites with residues covarying with the VRC01 binding footprint), Group~6 (VRC01-specific PNGS effects), Group~1 (VRC01 binding footprint), Group~2 (CD4 binding sites), and Group~3 (sites with sufficient exposed surface area) are mutually dependent measurements of VRC01-footprint and glycosylation biology.
Standard conditional-incremental measures may treat such shared signal as redundancy, and only provide scalar summaries of importance scores.
By contrast, \Cref{fig:real-data}~(e) compares latent-space importance $\phi_Z$ with observed-space attribution $\phi_X$, revealing which groups are amplified or attenuated by the transport from independent latent factors to the original correlated measurements. 
\Cref{fig:real-data}~(f) further decomposes each observed group score by its latent source: Groups~5 and~6 are largely self-attributed, whereas Groups~1-3 share importance across a tightly interlocked VRC01-footprint/CD4-binding/exposed-surface module. DFI therefore does not merely report that these groups are important; it assigns shared predictive signal to biologically interpretable correlated modules, while retaining formal uncertainty quantification.

\section{Discussion}\label{sec:discussion}

We introduced DFI as a population-level framework for feature-importance attribution under dependent covariates. 
The central premise is that feature importance is not a single universal target: when predictors are correlated, feature selection, compression, acquisition, and post-hoc attribution can require different estimands. 
DFI targets the latter problem. 
Rather than treating shared predictive information only as redundancy to be conditioned away, DFI maps the observed covariates to an independent latent representation, computes importance in that latent space, and attributes the resulting predictive signal back to the observed covariates through barycentric sensitivities.

This interpretation is essential. 
DFI is not intended as a criterion for selecting a minimal nonredundant subset of variables. 
A large DFI score indicates that an observed covariate carries or reflects predictive signal under the chosen disentangling geometry; it does not imply that the covariate is indispensable once correlated covariates are available. 
Conversely, conditional-incremental measures remain natural for feature selection or acquisition, where the operational question is whether a feature adds predictive value beyond the others. 
Thus, DFI should be viewed as complementary to conditional-incremental FI methods, not as a replacement for them.

The proposed framework is also estimand-indexed. 
Under the EOT construction, DFI depends on the transport cost \(c\), reference law \(\PP_Z\), and regularization level \(\varepsilon\). 
For any fixed specification, the target is a well-defined population functional, but changing the specification changes the attribution target. 
This dependence is not a defect; it is analogous to the dependence of other FI methods on the loss function, perturbation scheme, conditioning set, or baseline distribution. 
In practice, the chosen specification should be reported, and sensitivity to \(\varepsilon\) or other design choices should be assessed when conclusions depend on fine-grained rankings.

Our theoretical results support this interpretation by establishing efficient influence-function representations and asymptotic normality for both latent and attributed DFI estimators, with second-order remainders that are negligible when the nuisance estimators converge at \(\op(n^{-1/4})\) rates. 
The functional ANOVA characterization clarifies how latent DFI aggregates main effects and interactions, while the Gaussian linear case connects attributed DFI to the classical \(R^2\) decomposition for correlated regressors. 
Computationally, DFI avoids repeated reduced-model fitting and direct conditional covariate distribution estimation, replacing these with estimation of a disentangling coupling and sampling from a product reference law.

Several limitations remain. 
First, estimating the EOT coupling and barycentric sensitivities can be challenging in high dimensions, especially when the sample size is modest relative to the number of covariates. 
Second, the present theory focuses on squared-error loss and global population-level attribution; extensions to classification, other losses, and local or individual-level attribution are natural next steps. 
Third, while DFI can attribute shared predictive signal under dependence, it does not by itself provide causal importance, variable-selection guarantees, or a downstream decision rule. 
Developing scalable implementations, incorporating structural information such as feature groups or networks, and constructing reliable monitoring or ranking procedures under distribution shift are important directions for future work.

\clearpage
\appendix
\crefalias{section}{appendix}
\crefalias{subsection}{appendix}
% \Crefalias{Section}{Appendix}

\setcounter{section}{0} % Reset section counter
\setcounter{theorem}{0} % Reset theorem counter
\counterwithin{theorem}{section}
% \counterwithin{lemma}{section}

\renewcommand{\thesection}{\Alph{section}}
\setcounter{table}{0}
\renewcommand{\thetable}{\thesection\arabic{table}}
\setcounter{figure}{0} 
\renewcommand\thefigure{\thesection\arabic{figure}}
\renewcommand{\thealgorithm}{\thesection.\arabic{algorithm}}

\begin{center}
    {\LARGE\bf Appendix}
\end{center}

\bigskip

The appendix includes the proof for all the theorems, computational details, and extra experimental results.

\bigskip

\noindent\textbf{Outline.} The structure of the appendix is listed below:
\begin{table}[!ht]
\centering \small
\begin{tabularx}{0.95\textwidth}{l l D}
    \toprule    
    \multicolumn{2}{c}{\textbf{Appendix}} & \textbf{Content} \\
    \midrule \addlinespace[0.5ex]
    \multirow{3}{*}{\Cref{app:proof-properties}} & \Cref{app:sub:lem:null-feature} & Proof of \Cref{lem:null-feature}. \\
     & \Cref{app:sub:lem:equiv} & Proof of \Cref{lem:equiv}. \\
     & \Cref{app:sub:prop:free-cor} & Proof of \Cref{prop:free-cor}. \\\addlinespace[0.5ex] \cmidrule(l){1-3}\addlinespace[0.5ex] 
    \multirow{5}{*}{\Cref{app:proof}} & \Cref{app:sub:them:error-decom} & Proof of \Cref{thm:error-decom-eot}. \\
        & \Cref{app:subsec:proof-eif-eot-weighted} & Proof of \Cref{prop:eif-eot-weighted}. \\
        & \Cref{app:subsec:proof-error-decom-X-eot} & Proof of \Cref{thm:error-decom-X-eot}. \\
        & \Cref{app:subsec:decomp} & Proof of \Cref{lem:latent-importance-decomp}. \\
        & \Cref{app:sub-prop:decomp-phi} & Proof of \Cref{prop:decomp-phi}. \\
        \addlinespace[0.5ex] \cmidrule(l){1-3}\addlinespace[0.5ex] 
    \multirow{4}{*}{\Cref{app:lemmas}} & \Cref{app:sub:basic} & Basic properties of estimators: \Cref{lem:L2-boundedness,lem:HS-bound}. \\
        & \Cref{app:sub:loco} & Lemmas related to LOCO estimation: \Cref{lem:decomp-varphi-diff,lem:eta-j}. \\
        & \Cref{app:sub:DVI} & Lemmas related to the disentangled feature importance estimator: \Cref{lem:EIF-product,lem:eif-plug-in-decomp}. \\ \addlinespace[0.5ex] \cmidrule(l){1-3}\addlinespace[0.5ex] 
    \multirow{3}{*}{\Cref{app:sec:ot-maps}} & \Cref{app:subsec:ot-maps} & Optimal transport maps and disentangled representations. \\
        & \Cref{app:subsec:proof-ot} & Proof for results related to optimal transport. \\
        % & \Cref{app:subsec:lemmas} & Additional lemmas. \\
        & \Cref{app:sub:KR} & Extension to Knothe-Rosenblatt transport map. \\
        \addlinespace[0.5ex] \cmidrule(l){1-3}\addlinespace[0.5ex]
    \multirow{3}{*}{\Cref{app:sec:simu}} & \Cref{app:subsec:implement} & Practical considerations. \\
        & \Cref{app:subsec:M3} & Detailed calculation under Model \hyperref[M3]{(M3)} \\
        & \Cref{app:subsec:extra-simu} & Extra simulation results. \\
    \bottomrule
\end{tabularx}
\end{table}

\clearpage

\section{Proof in \Cref{sec:FI,sec:DFI}}\label{app:proof-properties}

\subsection[Proof of Lemma \ref{lem:null-feature}]{Proof of \Cref{lem:null-feature}}\label{app:sub:lem:null-feature}
    \begin{proof}%[Proof of \Cref{lem:null-feature}]
    First suppose that $\EE[Y\mid X]=\EE[Y\mid X_{-j}]$ almost surely, i.e.\ $\mu(X)=\mu_{-j}(X_{-j})$ almost surely.
    Then $\psi^{\loco}_{X_j}= \EE[(Y-\mu_{-j}(X_{-j}))^2]-\EE[(Y-\mu(X))^2]=0$.
    Moreover, by construction $X^{(j)}_{-j}=X_{-j}$ and $\mu(X^{(j)})=\mu_{-j}(X_{-j})$ almost surely as well, so
    \[
    \psi^{\cpi}_{X_j}
    =\frac{1}{2}\EE\!\left[(Y-\mu(X^{(j)}))^2-(Y-\mu(X))^2\right]=0.
    \]

    Conversely, under $\ell_2$ loss, \Cref{lem:equiv} implies
    \[
    \psi^{\loco}_{X_j}=\psi^{\cpi}_{X_j}=\EE\!\left[\VV\!\left(\mu(X)\mid X_{-j}\right)\right]\ge 0.
    \]
    Hence $\psi^{\loco}_{X_j}=\psi^{\cpi}_{X_j}=0$ holds if and only if $\VV(\mu(X)\mid X_{-j})=0$ almost surely, which is equivalent to
    $\mu(X)=\EE[\mu(X)\mid X_{-j}]=\mu_{-j}(X_{-j})$ almost surely, i.e.\ $\EE[Y\mid X]=\EE[Y\mid X_{-j}]$ almost surely.
    \end{proof}

\subsection[Proof of Lemma \ref{lem:equiv}]{Proof of \Cref{lem:equiv}}\label{app:sub:lem:equiv}

\begin{proof}%[Proof of \Cref{lem:equiv}]
        For any random variable $W$, its variance can be expressed as $\VV[W] = \EE[(W- \EE[W])^2] = \EE[(W- W')^2]/2$ where $W'$ is an independent copy of $W$.
        Conditioned on $X_{-j}$, it follows that
        \begin{align*}
            \VV[\mu(X) \mid X_{-j}] &= \EE[ (\mu(X) - \EE[\mu(X)\mid X_{-j}])^2\mid X_{-j}] = \EE[(\mu(X) - \mu_{-j}(X_{-j}))^2\mid X_{-j}]  \\
            \VV[\mu(X) \mid X_{-j}] &= \frac{1}{2} \EE[ (\mu(X) - \EE[\mu(X^{(j)}) \mid X_{-j}])^2\mid X_{-j}] = \frac{1}{2}\EE[(\mu(X) - \mu(X^{(j)}))^2\mid X_{-j}].
        \end{align*}
        On the other hand, for $\ell_2$ loss, we have
        \begin{align*}
            \psi^{\loco}_{X_j}&= \EE[(Y - \mu_{-j}(X_{-j}))^2] - \EE[(Y- \mu(X) )^2]\\
            &= 
            \EE[(\mu(X) - \mu_{-j}(X_{-j}))^2] - 2\EE[(\mu(X) - Y)(\mu(X) - \mu_{-j}(X_{-j}))]    \\
            &= \EE[(\mu(X) - \mu_{-j}(X_{-j}))^2] ,
        \end{align*}
        because $\EE[Y(\mu(X) - \mu_{-j}(X_{-j})) ] = \EE[\EE[Y(\mu(X) - \mu_{-j}(X_{-j})) \mid X]] = 
        \EE[\mu(X) (\mu(X) - \mu_{-j}(X_{-j}))]
        $ by iterative expectation.
        Analogously, one has
        \begin{align*}
            2 \psi^{\cpi}_{X_j} &= \EE[(Y - \mu(X^{(j)}))^2] - \EE[(Y- \mu(X) )^2]\\
            &= \EE[(\mu(X) - \mu(X^{(j)}))^2] - 2\EE[(\mu(X) - Y)(\mu(X) - \mu(X^{(j)}))]    \\
            &= \EE[(\mu(X) - \mu(X^{(j)}))^2] .
        \end{align*}
        Combining the above results finishes the proof.
    \end{proof}

\subsection[Proof of Proposition \ref{prop:free-cor}]{Proof of \Cref{prop:free-cor}}\label{app:sub:prop:free-cor}
    \begin{proof}%[Proof of \Cref{prop:free-cor}]
        By definition,
        \(
        \phi_{Z_j}
        =
        \mathbb E\left[
        \mathbb V\{\eta(Z)\mid Z_{-j}\}
        \right].
        \)
        Since conditional variance is nonnegative, \(\phi_{Z_j}=0\) if and only if
        \(
        \mathbb V\{\eta(Z)\mid Z_{-j}\}=0
        \) almost surely.
        This is equivalent to
        \(
        \eta(Z)
        =
        \mathbb E\{\eta(Z)\mid Z_{-j}\}
        =
        \eta_{-j}(Z_{-j})
        \) almost surely.
        The converse follows immediately from the same display.
    \end{proof}

\clearpage
\section{Proof of main results}\label{app:proof}

\subsection[Proof of Theorem \ref{thm:error-decom-eot}]{Proof of \Cref{thm:error-decom-eot}}\label{app:sub:them:error-decom}
    \begin{proof}%[Proof of \Cref{thm:error-decom-eot}]
    We split the proof into three steps.

    \paragraph{Step 1: Error decomposition for the LOCO functional.}
    From \citet[Lemma 1]{williamson2021nonparametric}, the efficient influence function for $\phi_{Z_j} = \EE[ (\eta(Z) - \eta_{-j}(Z_{-j}))^2 ]$ is $\varphi(O;\PP) = 2 (Y - \eta(Z))(\eta(Z) - \eta_{-j}(Z_{-j})) + (\eta(Z) - \eta_{-j}(Z_{-j}))^2 - \phi_{Z_j}$.
    Under \Cref{asm:data}, $\varphi(O;\PP)$ is well-defined and belongs to $L_2(\PP)$.
    Furthermore, for any given estimator $\hat{\PP}$ (with associated estimators $\hat{\eta}$ and $\hat{\eta}_{-j}$) of the nonparametric model $\PP$, the LOCO estimator
    \begin{align}
        \hat{\phi}_{Z_j}^{\loco}(O;\PP \mid \hat{\eta},\hat{\eta}_{-j}) := \PP_n \left[ (Y - \hat{\eta}_{-j}(Z_{-j}))^2 - (Y - \hat{\eta}(Z))^2 \right],\label{eq:phi-hat-loco}
    \end{align}
    admits the following error decomposition (see the proof of \citet[Theorem 1]{williamson2021nonparametric}):
    \begin{align*}
        \hat{\phi}_{Z_j}^{\loco}(O;\PP \mid\hat{\eta},\hat{\eta}_{-j}) - \phi_{Z_j} &= (\PP_n-\PP) \{\varphi(O;\PP)\} + R(\hat{\PP}, \PP) + H(\hat{\PP}, \PP),
    \end{align*}
    where the first term scaled by $\sqrt{n}$ is asymptotically normal by the Central Limit Theorem (since $\varphi$ is mean-zero and square-integrable), and the remaining terms read as
    \begin{align}
        R(\hat{\PP}, \PP) &= \|\hat{\eta}_{-j} - \eta_{-j}\|_{L_2}^2 - \|\hat{\eta} - \eta\|_{L_2}^2 \label{eq:R-loco}\\
        H(\hat{\PP}, \PP) &= (\PP_n-\PP)\{\varphi(O;\hat{\PP}) - \varphi(O;\PP)\}.
    \end{align}
    Define the estimation errors:
    \[
    \delta(Z):=\hat{\eta}(Z)-\eta(Z),
    \qquad
    \delta_{-j}(Z_{-j}):=\hat{\eta}_{-j}(Z_{-j})-\eta_{-j}(Z_{-j}).
    \]
    Then we have 
    \[ | R(\hat{\PP}, \PP)| \leq \|\delta_{-j}\|_{L_2}^2 + \|\delta\|_{L_2}^2 . \]
    By Cauchy-Schwarz inequality and the fact that
    \(\PP_n-\PP\) is an average of \(n\) i.i.d. mean-zero variables,
    \begin{align}
        |H(\hat{\PP},\PP)|
        &\le
        \|\varphi(O;{\hat{\PP}})-\varphi(O;{\PP})\|_{L_2}\,
        \sqrt{\frac{\mathrm{Var}[\varphi(O;{\hat{\PP}})-\varphi(O;{\PP})]}{n}} \notag\\
        &\;=\;
        \Op\left(
        \frac{\|\varphi(O;{\hat{\PP}})-\varphi(O;{\PP})\|_{L_2}}{\sqrt{n}}
        \right). \label{A.4}
    \end{align}
    Further by \Cref{lem:decomp-varphi-diff}, we have
    \[
    \;|H(\hat{\PP},\PP)|
     = \Op\left(
     \frac{1}{\sqrt{n}}\left(
     \|\delta\|_{L_2}+\|\delta_{-j}\|_{L_2}
     \right)
     \;+\;
     \frac{1}{\sqrt{n}}\left(
     \|\delta\|_{L_2}+\|\delta_{-j}\|_{L_2}
     \right)^{2} \right).
    \]
    Consequently, we have
    \begin{align}
        \hat{\phi}_{Z_j}^{\loco}(O;\PP \mid\hat{\eta},\hat{\eta}_{-j}) - \phi_{Z_j}(O;\PP) &= (\PP_n-\PP) \{\varphi(O;\PP)\}         
        + \Op(\|\delta\|_{L_2}^2 + \|\delta_{-j}\|_{L_2}^2) \notag\\
        &\qquad  + \frac{1}{\sqrt{n}}\Op(\|\delta\|_{L_2} + \|\delta_{-j}\|_{L_2}) 
        .\label{eq:error-loco}
    \end{align}

    \paragraph{Step 2: Error decomposition for the CPI functional.}
    Next, we relate the CPI functional to the LOCO functional.
    Recall that our estimator is defined as
    \[
    \hat{\phi}_{Z_j} := \frac{1}{2} \PP_n \left[ (Y - \hat{\eta}(Z^{(j)}))^2 - (Y - \hat{\eta}(Z))^2 \right].
    \]
    We decompose the estimation error through an intermediate oracle estimator using the true regression funciton $\hat\eta_{-j}=\eta_{-j}$ (so that $\delta_{-j} = 0$):
    \begin{align}
        \hat\phi_{Z_j}-\phi_{Z_j} & \;=\; [\hat\phi_{Z_j}-\hat{\phi}_{Z_j}^{\loco}(\PP;\hat{\eta},\eta_{-j})] + [\hat{\phi}_{Z_j}^{\loco}(\PP;\hat{\eta},\eta_{-j}) -\phi_{Z_j}    ] \notag\\
        &= \frac{1}{2}\PP_n\left[ (Y - \hat{\eta}(Z^{(j)}))^2 - (Y - \eta_{-j}(Z_{-j}))^2 \right] \notag\\
        &\quad+\frac{1}{2}\PP_n\left[ (Y - \hat{\eta}(Z))^2 - (Y - \eta_{-j}(Z_{-j}))^2 \right] \notag\\
        &\quad + (\PP_n-\PP) \{\varphi(O;\PP)\} + \Op(n^{-1/2}\|\delta\|_{L_2} + \|\delta\|_{L_2}^2) \notag\\
        &= \frac{1}{2}R_e + \frac{1}{2} R_b  + (\PP_n-\PP) \{\varphi(O;\PP)\} + \Op(n^{-1/2}\|\delta\|_{L_2} + \|\delta\|_{L_2}^2), \label{eq:decomp-cpi}
    \end{align}
    where the emprical process terms $R_e$ and remaining bias terms $R_b$ are defined as:
    \begin{align*}
        R_e &= (\PP_n-\PP) [ \hat\phi_{Z_j}-\hat{\phi}_{Z_j}^{\loco}(\PP;\hat{\eta},\eta_{-j})  ]\\
        R_b & = \PP [ \hat\phi_{Z_j}-\hat{\phi}_{Z_j}^{\loco}(\PP;\hat{\eta},\eta_{-j})  ].
    \end{align*}
    Before analyzing each of these terms, we introduce
    \[
      \zeta:=\eta(Z)-\eta_{-j}(Z_{-j}),
      \quad
      \zeta_j:=\eta(Z^{(j)})-\eta_{-j}(Z_{-j}),
      \quad
      \delta_j:=\hat\eta(Z^{(j)})-\eta(Z^{(j)}),
    \]
    and write $\varepsilon:=Y-\eta$, $U:=\varepsilon+\zeta$, $U_j:=\varepsilon+\zeta_j$.
    Because
    \begin{align*}
         (Y-\hat\eta(Z))^2-(Y-\eta_{-j}(Z_{-j}))^2
          &=(U-\delta)^2-U^2
          =\delta^2-2U\delta, \\
        (Y-\hat\eta(Z^{(j)}))^2-(Y-\eta_{-j}(Z_{-j}))^2
          &=(U_j-\delta_j)^2-U^2
          =\delta_j^2-2U_j\delta_j,
    \end{align*}
    we have
    \begin{align*}
        R_e &= \frac12\,(\PP_n-\PP) [(Y-\hat\eta(Z^{(j)}))^2-(Y-\eta_{-j}(Z_{-j}))^2 ]
         +\frac12\,(\PP_n-\PP) [(Y-\hat\eta(Z))^2-(Y-\eta_{-j}(Z_{-j}))^2 ]\\
         &= \frac12\,(\PP_n-\PP) [\delta^2-2U\delta+\delta_j^2-2U_j\delta_j ].
    \end{align*}
    Because $\|\delta\|_{L_\infty}\le 2C$ and $\|U\|_{L_\infty}\le 3C$ from \Cref{lem:L2-boundedness} with \Cref{asm:data}~\ref{asm:data:response}, we have
    \begin{align*}
        \|\delta^2\|_{L_2} &\le \|\delta\|_{L_\infty}\,\|\delta\|_{L_2} \le2C\,\|\delta\|_{L_2} \\
        \|U\delta\|_{L_2} & \le \|U\|_{L_\infty}\,\|\delta\|_{L_2} \leq 3C\|\delta\|_{L_2} .    
    \end{align*}
    Additionally,
    \[
    \|\delta_j\|_{L_2}^2
          =\EE[\EE [(\hat\eta(Z^{(j)})-\eta(Z^{(j)}))^{2}\mid Z_{-j} ]]
          =\EE [(\hat\eta(Z)-\eta(Z))^{2} ]
          =\|\delta\|_{L_2}^{2},
    \]
    Hence, it follows that
    \begin{align}
        R_e \;=\;\Op(n^{-1/2}\,\|\delta^2-2U\delta+\delta_j^2-2U_j\delta_j\|_{L_2} )
            =  \Op(n^{-1/2}\,\|\delta\|_{L_2}).\label{eq:E}
    \end{align}

    For the bias term, note that
    \[
        R_b
       =\frac12\,\EE [(Y-\hat\eta(Z^{(j)}))^2-(Y-\eta_{-j}(Z_{-j}))^2 ]
         +\frac12\,\EE [(Y-\hat\eta(Z))^2-(Y-\eta_{-j}(Z_{-j}))^2 ] =: A_1 + A_2.
    \]    
    Using $\hat\eta(Z^{(j)})-\eta_{-j}(Z_{-j})=\zeta_j+\delta_j$ and
    $\EE[Y-\eta_{-j}(Z_{-j})\mid Z^{(j)},Z_{-j}]=0$, the two terms of $R$ can be written as:
    \begin{align*}
        A_1
        &= \frac{1}{2}\EE[(Y-\eta(Z^{(j)})+ \eta(Z^{(j)})-\hat\eta(Z^{(j)}))^2-(Y-\eta_{-j}(Z_{-j}))^2]    \\
        &= \frac{1}{2}\EE[(Y-\eta(Z^{(j)})^2 + (\eta(Z^{(j)})-\hat\eta(Z^{(j)}))^2-(Y-\eta_{-j}(Z_{-j}))^2]\\
        &= -\frac{1}{2}\phi_{Z_j} +\frac{1}{2}\EE[(\eta(Z^{(j)})-\hat\eta(Z^{(j)}))^2] \\
        &= -\frac{1}{2}\phi_{Z_j} + \frac{1}{2}\|\delta\|_{L_2}^2\\
        A_2
        &= \frac12\,\EE [(Y- \eta(Z) +\eta(Z)-\hat\eta(Z))^2-(Y-\eta_{-j}(Z_{-j}))^2 ]\\
        &= \frac12\,\EE [(Y- \eta(Z))^2 + (\eta(Z)-\hat\eta(Z))^2-(Y-\eta_{-j}(Z_{-j}))^2 ]\\
        &= \frac{1}{2}\phi_{Z_j} + \frac{1}{2}\|\delta\|_{L_2}^2,
    \end{align*}
    Hence, one has
    \begin{align}
        R_b = \|\delta\|_{L_2}^2    \label{eq:R}
    \end{align}

    Combining \eqref{eq:decomp-cpi}, \eqref{eq:E} and \eqref{eq:R} yields that
    \begin{align}
        \hat\phi_{Z_j}-\phi_{Z_j} & \;=\;  (\PP_n-\PP) \{\varphi(O;\PP)\} + \Op(n^{-1/2}\|\delta\|_{L_2} + \|\delta\|_{L_2}^2).
       \label{eq:decomp-cpi-2}
    \end{align}

    \paragraph{Step 3: bounding $\|\delta\|_{L_2(\PP_Z)}=\|\hat\eta-\eta\|_{L_2(\PP_Z)}$.}
    Recall that under the EOT coupling $\gamma$ with marginals $(\PP_X,\PP_Z)$,
    \[
    \eta(z)=\int \mu(x)\,k(x\mid z)\,\PP_X(\rd x).
    \]
    Since $\PP_X$ is not observed, the feasible estimator is defined as
    \[
    \hat\eta(z):=\PP_n\{\hat\mu(X)\hat k(X\mid z)\}
    = \frac1n\sum_{i=1}^n \hat\mu(X_i)\hat k(X_i\mid z).
    \]
    Introduce the intermediate (infeasible) quantity
    \[
    \tilde\eta(z):=\int \hat\mu(x)\hat k(x\mid z)\,\PP_X(\rd x),
    \]
    and decompose
    \begin{align}
        \delta(z)=\underbrace{\{\hat\eta(z)-\tilde\eta(z)\}}_{=:A(z)}
        +\underbrace{\{\tilde\eta(z)-\eta(z)\}}_{=:B(z)}.\label{eq:delta}
    \end{align}

    For the first term, note that
    \(
    A(z)=(\PP_n-\PP)\{g_z(X)\}
    \)
    with $g_z(x):=\hat\mu(x)\hat k(x\mid z)$.
    Conditioning on $(\hat\mu,\hat k)$, for each fixed $z$ the summands
    $g_z(X_i)-\PP g_z$ are i.i.d.\ mean-zero. By Fubini theorem and the usual variance calculation,
    \begin{align*}
    \EE\!\left[\|A\|_{L_2(\PP_Z)}^2 \,\middle|\, \hat\mu,\hat k\right]
    &= \int \VV\!\left((\PP_n-\PP)g_z(X)\,\middle|\,\hat\mu,\hat k\right)\,\PP_Z(\rd z) \\
    &= \frac1n\int \VV_{\PP}\!\big(g_z(X)\big)\,\PP_Z(\rd z)
    \\
    &\le\frac1n\int \EE_{\PP}\!\big[g_z(X)^2\big]\,\PP_Z(\rd z) \\
    &= \frac1n \|\hat\mu(X)\hat k(X\mid Z)\|_{L_2(\PP_X\otimes \PP_Z)}^2.
    \end{align*}
    Hence,
    \begin{align}
        \|A\|_{L_2(\PP_Z)} = \Op\!\left(n^{-1/2}\,\|\hat\mu\,\hat k\|_{L_2(\PP_X\otimes \PP_Z)}\right). \label{eq:delta-A}
    \end{align}
    Under the construction that $\|\hat\mu(X)\hat k(X\mid Z)\|_{L_2(\PP_X\otimes\PP_Z)}=\Op(1)$, this simplifies to $\|A\|_{L_2(\PP_Z)}=\Op(n^{-1/2})$.

    For the second term, write
    \[
    B(z)=\int \{\hat\mu-\mu\}(x) \hat k(x\mid z)\PP_X(\rd x)
    \;+\;\int \mu(x) \{ \hat k - k\}(x\mid z)\PP_X(\rd x)
    =:B_1(z)+B_2(z),
    \]
    and let $\varepsilon_\mu:=\|\hat\mu-\mu\|_{L_2(\PP_X)}$, $\varepsilon_k:=\|\hat k-k\|_{L_2(\PP_X\otimes \PP_Z)}$.
    Then, from \Cref{lem:HS-bound} and \Cref{asm:data}~\ref{asm:data:response}-\ref{asm:data:k}, we have
    \begin{align}
        \|B\|_{L_2(\PP_Z)} &\le \|B_1\|_{L_2(\PP_Z)} + \|B_2\|_{L_2(\PP_Z)} \notag \\
        &\le \|\hat k\|_{L_2(\PP_X\otimes \PP_Z)}\varepsilon_\mu + \|\mu\|_{L_2(\PP_X)}\,\varepsilon_k \notag \\
        &\lesssim \varepsilon_\mu + \varepsilon_k.    \label{eq:delta-B}
    \end{align}

    Finally, combining \eqref{eq:delta}, \eqref{eq:delta-A} and \eqref{eq:delta-B} through the triangle inequality yields
    \begin{align}
        \|\delta\|_{L_2(\PP_Z)}
        \le \|A\|_{L_2(\PP_Z)}+\|B\|_{L_2(\PP_Z)}
        = \Op(n^{-1/2} + \varepsilon_\mu + \varepsilon_k).    \label{eq:delta-final}
    \end{align}
    Combining this with \eqref{eq:decomp-cpi-2} completes the proof.

    \end{proof}

\subsection[Proof of Proposition \ref{prop:eif-eot-weighted}]{Proof of \Cref{prop:eif-eot-weighted}}\label{app:subsec:proof-eif-eot-weighted}
\begin{proof}%[Proof of \Cref{prop:eif-eot-weighted}]
    We split the proof into two steps.

    \paragraph{Step 1: well-definedness of the EIF.}

    From \Cref{lem:L2-boundedness}, under \Cref{asm:data}, we have $\eta_{-j},\eta(Z)\in L_\infty(P)$, $\Delta_j(Z)=\eta(Z)-\eta_{-j}(Z_{-j})\in L_\infty(\PP_{Z})$, $v_j(Z_{-j})=\VV(\eta(Z)\mid Z_{-j})\in L_\infty(\PP_{Z_{-j}})$
    Similarly, we have $\|w_{j,l}(Z_{-j})\|_{L_\infty(\PP_{Z_{-j}})}<\infty$ as assumed.
    In particular, $\psi_{j,l}(O;\PP):=\EE[v_j(Z_{-j})w_{j,l}(Z_{-j})] \in L_2(\PP)$ is well-defined by Cauchy--Schwarz, and so is $\phi_{X_l}(O;\PP)=\sum_{j=1}^d\psi_{j,l}(P)$.

    It remains to check $\varphi_{X_l}(O;P)\in L_2(\PP)$.
    For the first bracket in \eqref{eq:eif-phi-X-eot}, apply H\"older twice:
    \begin{align*}
        \|w_{j,l}(Z_{-j})(Y-\eta(Z))\Delta_j(Z)\|_{L_2}
        &\le \|w_{j,l}(Z_{-j})\|_{L_4}\,\|(Y-\eta(Z))\Delta_j(Z)\|_{L_4}\\
        &\le \|w_{j,l}\|_{L_4}\,\|Y-\eta\|_{L_8}\,\|\Delta_j\|_{L_8}\\
        &<\infty,
    \end{align*}
    and similarly
    \[
    \|w_{j,l}(Z_{-j})\Delta_j(Z)^2\|_{L_2}
    \le \|w_{j,l}\|_{L_4}\,\|\Delta_j^2\|_{L_4}
    = \|w_{j,l}\|_{L_4}\,\|\Delta_j\|_{L_8}^2<\infty.
    \]
    For the second bracket in \eqref{eq:eif-phi-X-eot},
    \[
    \|v_j(Z_{-j})\{\varphi_{\theta_{j,l}}(O;P)-w_{j,l}(Z_{-j})+\theta_{j,l}(P)\}\|_{L_2}
    \le \|v_j\|_{L_4}\,\big(\|\varphi_{\theta_{j,l}}\|_{L_4}+\|w_{j,l}\|_{L_4}+|\theta_{j,l}(P)|\big),
    \]
    where $\theta_{j,l}(P)<\infty$ follows from $w_{j,l}\in L_4$.
    Thus every summand in \eqref{eq:eif-phi-X-eot} is square-integrable, hence $\varphi_{X_l}(O;P)\in L_2(P)$.

    \paragraph{Step 2: EIF derivation via product rule.}
    For fixed $(j,l)$, write
    \(
    \psi_{jl}(\PP):=\EE[v_j(Z_{-j})w_{jl}(Z_{-j})]
    \)
    so that $\phi_{X_l}(\PP)=\sum_{j=1}^d\psi_{jl}(\PP)$.
    Apply \Cref{lem:EIF-product} with \(A(O):=v_j(Z_{-j})\) and \(B(O):=w_{jl}(Z_{-j})\).

    Note that $\psi_A(\PP)=\EE[v_j(Z_{-j})]=\EE[\VV(\eta(Z)\mid Z_{-j})]=\phi_{Z_j}(\PP)$.
    From \Cref{thm:error-decom-eot}, the EIF for $\psi_A(\PP):=\EE[A(O)]$ is given by
    \[
    \varphi_A(O;\PP)=2(Y-\eta(Z))\Delta_j(Z)+\Delta_j(Z)^2-\psi_A(\PP).
    \]
    Therefore the remainder term in \Cref{lem:EIF-product} is
    \[
    \alpha_A(O;\PP)
    =
    \varphi_A(O;\PP)-\{A(O)-\psi_A(\PP)\}
    =
    2(Y-\eta(Z))\Delta_j(Z)+\Delta_j(Z)^2 - v_j(Z_{-j}).
    \]

    By assumption (ii), $\psi_B(\PP)=\theta_{jl}(\PP)$ is pathwise differentiable with EIF $\varphi_{\theta_{jl}}(O;\PP)$, hence
    \[
    \alpha_B(O;\PP)
    =
    \varphi_{\theta_{jl}}(O;\PP)-\{B(O)-\psi_B(\PP)\}
    =
    \varphi_{\theta_{jl}}(O;\PP)-\big(w_{jl}(Z_{-j})-\theta_{jl}(\PP)\big).
    \]

    Combining via \Cref{lem:EIF-product}, the EIF for $\psi_{jl}(\PP)=\EE[A(O)B(O)]$ is
    \[
    \varphi_{jl}(O;\PP)
    =
    A(O)B(O)-\psi_{jl}(\PP)+B(O)\alpha_A(O;\PP)+A(O)\alpha_B(O;\PP).
    \]
    Substituting $A(O)=v_j(Z_{-j})$ and $B(O)=w_{jl}(Z_{-j})$ and simplifying cancels the $A(O)B(O)$ term, yielding
    \begin{align*}
    \varphi_{jl}(O;\PP)
    &=
    w_{jl}(Z_{-j})\{2(Y-\eta(Z))\Delta_j(Z)+\Delta_j(Z)^2\}
    \\
    &\quad
    +
    v_j(Z_{-j})\Big\{\varphi_{\theta_{jl}}(O;\PP)-\big(w_{jl}(Z_{-j})-\theta_{jl}(\PP)\big)\Big\}
    -\psi_{jl}(\PP).
    \end{align*}
    Summing over $j=1,\dots,d$ gives the stated EIF for $\phi_{X_l}(\PP)$, which finishes the proof.
\end{proof}

\subsection[Proof of Theorem \ref{thm:error-decom-X-eot}]{Proof of \Cref{thm:error-decom-X-eot}}\label{app:subsec:proof-error-decom-X-eot}
\begin{proof}%[Proof of \Cref{thm:error-decom-X-eot}]
    Throughout, condition on the auxiliary sample used to construct the nuisance objects
    $(\hat\mu,\hat k,\partial_z\hat k)$ (equivalently $(\hat\eta,\hat w_{jl})$). Under the stated sample-splitting/cross-fitting, these nuisance estimators are independent of the $n$ observations used in $\PP_n$, so they can be treated as fixed when applying empirical-process bounds to $(\PP_n-\PP)$.
    We split the proof into 4 steps.

    \paragraph{Step 1: square-integrability of EIF.}
    We first verify the square-integrability conditions needed to ensure $\varphi_{X_l}(\cdot;\PP)\in L_2(\PP)$.
    \Cref{asm:bary-smooth}~\ref{asm:bary-smooth:iii} gives $s_{jl}\in L_8(\PP_Z)$, and therefore \(w_{jl}(Z_{-j})=\EE[s_{jl}(Z)^2\mid Z_{-j}] \in L_4(\PP_{Z_{-j}})\), uniformly over $(j,l)$. 
    Similar to Step 1 of the proof of \Cref{prop:eif-eot-weighted}, we have that $\varphi_{X_l}(O;\PP)$ in \eqref{eq:eif-phi-X-eot} is well-defined and square-integrable.

    \paragraph{Step 2: von Mises expansion at the EIF.}
    Let $\hat\varphi_{X_l}$ denote the plug-in version of \eqref{eq:eif-phi-X-eot}, obtained by replacing
    $(\eta,\Delta_j,v_j,w_{jl},\theta_{jl})$ by their nuisance estimators
    $(\hat\eta,\hat\Delta_j,\hat v_j,\hat w_{jl},\hat\theta_{jl})$.
    We denote
    \begin{align}
        \hat\phi_{jl}(\PP)
        &:=
        \frac12
        \PP_n\!\left[
        \hat w_{jl}(Z_{-j})
        \Big\{(Y-\hat\eta(Z^{(j)}))^2-(Y-\hat\eta(Z))^2\Big\}
        \right] \label{eq:phi-jl}\\
        \varphi_{jl}(O;\PP)
        &=
        w_{jl}(Z_{-j})\Big\{2\big(Y-\eta(Z)\big)\Delta_j(Z)+\Delta_j(Z)^2\Big\}\notag \\
        &\quad
        +
        v_j(Z_{-j})\Big\{\varphi_{\theta_{jl}}(O;\PP)-\big(w_{jl}(Z_{-j})-\theta_{jl}(\PP)\big)\Big\}
        -\psi_{jl}(\PP), \label{eq:eif-phi-jl}
    \end{align}
    so that $\phi_{X_l}= \sum_{j=1}^d \phi_{jl}(\PP)$ and $\varphi_{X_l}(O;\PP)=\sum_{j=1}^d \varphi_{jl}(O;\PP)$.
    A standard von Mises expansion (conditional on the auxiliary sample) yields
    \begin{equation}\label{eq:vm-psi}
    \hat\psi_{jl}-\psi_{jl}(\PP)
    =
    (\PP_n-\PP)\,\varphi_{jl}(O;\PP)
    +
    H_{n,jl}
    +
    R_{n,jl},
    \end{equation}
    where the empirical-process term is
    $
    H_{n,jl}:=(\PP_n-\PP)\{\hat\varphi_{jl}-\varphi_{jl}\}
    $
    and the plug-in bias is
    $
    R_{n,jl}:=\PP\{\hat\varphi_{jl}-\varphi_{jl}\}.
    $
    Here $\hat\varphi_{jl}$ denotes $\varphi_{jl}$ with $(\eta,\Delta_j,v_j,w_{jl},\theta_{jl})$ replaced by
    $(\hat\eta,\hat\Delta_j,\hat v_j,\hat w_{jl},\hat\theta_{jl})$.

    \paragraph{Step 3: empirical process term $H_{n,jl}$.}
    Conditioning on the auxiliary sample makes $\hat\varphi_{jl}$ fixed, hence
    \[
    |H_{n,jl}|
    =\Op\!\Big(n^{-1/2}\,\|\hat\varphi_{jl}-\varphi_{jl}\|_{L_2(\PP)}\Big).
    \]
    From \Cref{lem:eif-plug-in-decomp}, one has, with probability tending to 1,
    \[
    \|\hat\varphi_{jl}-\varphi_{jl}\|_{L_2(\PP)}
    \lesssim \cE_X,
    \]
    and therefore, we have
    \[
    H_{n,jl}=\Op\!\big(n^{-1/2}\cE_X\big),
    \qquad
    \sum_{j=1}^d H_{n,jl}=\Op\!\big(n^{-1/2}\cE_X\big).
    \]

    \paragraph{Step 4: bias term $R_{n,jl}$ via the product-rule structure in \Cref{prop:eif-eot-weighted}.}
    Recall that $\varphi_{jl}$ is constructed through \Cref{prop:eif-eot-weighted}, by applying \Cref{lem:EIF-product} to $A(O)=v_j(Z_{-j})$ and $B(O)=w_{jl}(Z_{-j})$.
    Concretely, \Cref{lem:EIF-product} shows that the EIF for $\psi_{jl}(\PP)=\EE[AB]$ augments the naive product $AB-\psi_{jl}$ by the two correction terms $B\,\alpha_A$ and $A\,\alpha_B$, where $\alpha_A$ and $\alpha_B$ are exactly the pathwise-derivative contributions from the distribution-dependence of $A$ and $B$. With this construction, the first-order perturbations in $A$ and $B$ cancel at the level of the mean when we plug in estimators, leaving only second-order terms:
    \begin{align*}
        R_{n,jl}
        =\PP\{\hat\varphi_{jl}-\varphi_{jl}\}
        &=
        \PP\big[(\hat v_j-v_j)(\hat w_{jl}-w_{jl})\big]
        \;+\;
        r^{(A)}_{n,j}
        \;+\;
        r^{(B)}_{n,jl},
    \end{align*}
    where $r^{(A)}_{n,j}$ is the second-order remainder coming from estimating the $A$-part
    ($\psi_A(\PP)=\phi_{Z_j}(\PP)$), and $r^{(B)}_{n,jl}$ is the second-order remainder coming from estimating the $B$-part ($\psi_B(\PP)=\theta_{jl}(\PP)$). The term $\PP[(\hat v_j-v_j)(\hat w_{jl}-w_{jl})]$ is the bilinear remainder from expanding $AB$.

    By Cauchy--Schwarz,
    \[
    \Big|\PP\big[(\hat v_j-v_j)(\hat w_{jl}-w_{jl})\big]\Big|
    \le
    \|\hat v_j-v_j\|_{L_2(\PP_{Z_{-j}})}\,\|\hat w_{jl}-w_{jl}\|_{L_2(\PP_{Z_{-j}})}
    =\Op(\cE_X^2),
    \]
    using the bounds from Step 2. Moreover, the $A$-remainder satisfies $r^{(A)}_{n,j}=\Op(\|\hat\eta-\eta\|_{L_2(\PP_Z)}^2)=\Op(\cE_X^2)$ by the explicit second-order remainder analysis in \Cref{thm:error-decom-eot}, and the $B$-remainder satisfies $r^{(B)}_{n,jl}=\Op(\|\hat w_{jl}-w_{jl}\|_{L_2(\PP_{Z_{-j}})}^2)=\Op(\cE_X^2)$ by \Cref{asm:eif-theta}~\ref{asm:eif-theta-ii}.
    Therefore $R_{n,jl}=\Op(\cE_X^2)$ and $\sum_{j=1}^d R_{n,jl}=\Op(\cE_X^2)$.

    \paragraph{Step 5: conclusion.}
    Finally, summing \eqref{eq:vm-psi} over $j=1,\dots,d$ yields
    \[
    \hat\phi_{X_l}-\phi_{X_l}(\PP)
    =
    (\PP_n-\PP)\,\varphi_{X_l}(O;\PP)
    +\Op\!\big(n^{-1/2}\cE_X+\cE_X^2\big),
    \]
    uniformly over $l$ by the uniform moment bounds in Step 1 and \Cref{asm:bary-smooth}. The asymptotic normality under $\cE_X=o(n^{-1/4})$ follows by Slutsky and CLT for $(\PP_n-\PP)\varphi_{X_l}$.
\end{proof}

\subsection[Proof of Lemma \ref{lem:latent-importance-decomp}]{Proof of \Cref{lem:latent-importance-decomp}}\label{app:subsec:decomp}
\begin{proof}%[Proof of \Cref{lem:latent-importance-decomp}]
We split the proof into two parts.

For Part (1), we begin with the definition of the latent feature importance, $\phi_{Z_j} = \EE[\VV(\eta(Z) \mid Z_{-j})]$. Substituting the functional ANOVA decomposition of $\eta(Z)$ yields:
$$ \phi_{Z_j} = \EE\left[\VV\left(\sum_{\cS \subseteq [d]} \eta_\cS(Z_\cS) \;\bigg|\; Z_{-j}\right)\right] $$
Due to the independence of the latent features $Z_k$ and the orthogonality of the ANOVA terms ($\EE[\eta_\cS\eta_{\cS'}] = 0$ for $\cS \neq \cS'$), any term $\eta_\cS(Z_\cS)$ where $j \notin \cS$ is a constant when conditioning on $Z_{-j}$. All terms for which $j \in \cS$ remain as random variables that are functions of $Z_j$, which is independent of the conditioning set $Z_{-j}$. The variance of the sum is therefore the sum of the variances of terms involving $Z_j$:
$$ \VV\left(\sum_{\cS \subseteq [d]} \eta_\cS(Z_\cS) \;\bigg|\; Z_{-j}\right) = \sum_{\cS: j \in \cS} \VV[\eta_\cS(Z_\cS)] $$
Since the right-hand side is a constant, taking the outer expectation is trivial, so $\phi_{Z_j} = \sum_{\cS: j \in \cS} \VV[\eta_\cS(Z_\cS)]$. Summing over all $j$:
$$ \sum_{j=1}^d \phi_{Z_j} = \sum_{j=1}^d \sum_{\cS: j \in \cS} \VV[\eta_\cS(Z_\cS)] $$
By reversing the order of summation, we count how many times each term $\VV[\eta_\cS(Z_\cS)]$ appears. A term $\eta_\cS$ is included in the sum for $\phi_{Z_j}$ for every $j \in \cS$. Thus, the term $\VV[\eta_\cS(Z_\cS)]$ appears exactly $|\cS|$ times in the total sum, which proves the first statement.

For Part (2), the law of total variance for orthogonal ANOVA terms states that the total predictive variability is the sum of the variances of each term:
$$ \VV[\EE[Y \mid X]] = \VV[\eta(Z)] = \sum_{\emptyset \neq \cS \subseteq [d]} \VV[\eta_\cS(Z_\cS)] $$
Comparing this with the result from Part (1), the equality $\sum_j \phi_{Z_j} = \VV[\eta(Z)]$ holds if and only if:
$$ \sum_{\emptyset \neq \cS \subseteq [d]} |\cS| \cdot \VV[\eta_\cS(Z_\cS)] = \sum_{\emptyset \neq \cS \subseteq [d]} \VV[\eta_\cS(Z_\cS)] $$
This equality is satisfied if and only if $\VV[\eta_\cS(Z_\cS)] = 0$ for all sets $\cS$ where $|\cS| > 1$. This is the definition of a purely additive function, $\eta(Z) = \eta_\emptyset + \sum_{j=1}^d \eta_{\{j\}}(Z_j)$, which we write as $c + \sum_j g_j(Z_j)$.
By orthogonality of the ANOVA components in $L_2(\PP_Z)$, the predictive variability in the disentangled space satisfies
\(
\VV\!\big(\eta(Z)\big)
=
\sum_{\emptyset \neq \cS \subseteq [d]} \VV\!\big(\eta_\cS(Z_\cS)\big).
\)
\end{proof}

% \clearpage
\subsection[Proof of Proposition \ref{prop:decomp-phi}]{Proof of \Cref{prop:decomp-phi}}\label{app:sub-prop:decomp-phi}
\begin{proof}%[Proof of \Cref{prop:decomp-phi}]
    Let \(B=\Sigma^{1/2}\), so that \(X=BZ\). Then
    \[
    \partial_{z_j}S_l(z)=B_{lj}.
    \]
    Hence
    \[
    \phi_{X_l}(\mathbb P)
    =
    \sum_{j=1}^d B_{lj}^2 \phi_{Z_j}(\mathbb P).
    \]
    Summing over \(l\) gives
    \[
    \sum_{l=1}^d\phi_{X_l}(\mathbb P)
    =
    \sum_{j=1}^d
    \left(\sum_{l=1}^d B_{lj}^2\right)\phi_{Z_j}(\mathbb P)
    =
    \sum_{j=1}^d
    (B^\top B)_{jj}\phi_{Z_j}(\mathbb P)
    =
    \sum_{j=1}^d
    \Sigma_{jj}\phi_{Z_j}(\mathbb P),
    \]
    which completes the proof.
\end{proof}

\clearpage
\section{Supporting lemmas}\label{app:lemmas}

\subsection{Basic properties of regression estimators}\label{app:sub:basic}

\begin{lemma}[$L_p$ boundedness]\label[lemma]{lem:L2-boundedness}
Let $(Y, Z)$ be a pair of random variables with $Z = (Z_1, \dots, Z_d)$ and $Z_{-j}$ denoting all components except $Z_j$. Define the conditional expectations
\[
\eta(Z) := \EE[Y \mid Z], \qquad \eta_{-j}(Z_{-j}) := \EE[Y \mid Z_{-j}].
\]
If $\|Y\|_{L_p} \leq C$ for some $p \geq 2$, then it holds that
\begin{enumerate}[(1)]
    \item $\|\eta_{-j}(Z_{-j})\|_{L_p} \leq \|\eta(Z)\|_{L_p} \leq C$;
    \item $\|Y - \eta(Z)\|_{L_p} \leq 2C $ and $ \|\eta(Z) - \eta_{-j}(Z_{-j})\|_{L_p} \leq 2C$;
    \item $\|\VV(\eta(Z)\mid Z_{-j}) \|_{L_{p/2}} \leq C^2$.
\end{enumerate}
\end{lemma}
\begin{proof}%[Proof of \Cref{lem:L2-boundedness}]
    For (1), by Jensen's inequality,
    \[
    \|\eta(Z)\|_{L_p} \leq \|\eta_{-j}(Z_{-j})\|_{L_p} \leq \|Y\|_{L_p} \leq C.
    \]

    For (2), by Minkowski's inequality,
    \[
    \|Y-\eta(Z)\|_{L_p}
    \le \|Y\|_{L_p} + \|\eta(Z)\|_{L_p}
    \le C + C
    = 2C.
    \]
    Similarly,
    \[
    \|\eta(Z)-\eta_{-j}(Z_{-j})\|_{L_p}
    \le \|\eta(Z)\|_{L_p} + \|\eta_{-j}(Z_{-j})\|_{L_p}
    \le C + C
    = 2C.
    \]

    For (3), because $\VV(\eta(Z)\mid Z_{-j})\leq\EE[\eta(Z)^2\mid Z_{-j}]$,
    by the definition of conditional variance and Jensen's inequality,
    Jensen,
    \[
    \|\VV(\eta(Z)\mid Z_{-j}) \|_{L_{p/2}}^{p/2}
    \le \EE[\|\EE[\eta(Z)^2\mid Z_{-j}]\|_{L_{p/2}}^{p/2}]
    \le \EE[\EE[\eta(Z)^p\mid Z_{-j}]]
    = \|\eta(Z)\|_{L_p}^p
    \le C^p.
    \]
    This completes the proof.
\end{proof}

\begin{lemma}[Hilbert--Schmidt bound for an integral operator]\label[lemma]{lem:HS-bound}
Let $\PP_X$ and $\PP_Z$ be probability measures, and let
$k \in L_2(\PP_X\otimes \PP_Z)$ and $f\in L_2(\PP_X)$.
Define
\[
(T_k f)(z) \;:=\; \int f(x)\,k(z\mid x)\rd \PP_X(x),
\qquad z\in \cZ.
\]
Then $T_k f\in L_2(\PP_Z)$ and
\[
\|T_k f\|_{L_2(\PP_Z)}
\le
\|k\|_{L_2(\PP_X\otimes \PP_Z)}\,\|f\|_{L_2(\PP_X)}.
\]
The map $T_k:L_2(\PP_X)\to L_2(\PP_Z)$ is a Hilbert--Schmidt integral operator.
The inequality above is exactly the Hilbert--Schmidt norm bound $\|T_k f\|_2\le \|T_k\|_{\mathrm{HS}}\|f\|_2$, with $\|T_k\|_{\mathrm{HS}}=\|k\|_{L_2(\PP_X\otimes \PP_Z)}$.
\end{lemma}

\begin{proof}%[Proof of \Cref{lem:HS-bound}]
    Fix $z$. By Cauchy--Schwarz in $L_2(\PP_X)$,
    \[
    \big|(T_k f)(z)\big|
    =
    \left|\int f(x)\,k(z\mid x)\rd \PP_X(x)\right|
    \le
    \left(\int f(x)^2\rd \PP_X(x)\right)^{1/2}
    \left(\int k(z\mid x)^2\rd \PP_X(x)\right)^{1/2}.
    \]
    Squaring and integrating over $z\sim \PP_Z$ yields
    \begin{align*}
    \int (T_k f)(z)^2\rd \PP_Z(z)
    &\le
    \left(\int f(x)^2\rd \PP_X(x)\right)
    \left(\int \left[\int k(z\mid x)^2\rd \PP_X(x)\right] d\PP_Z(z)\right)\\
    &=
    \|f\|_{L_2(\PP_X)}^2\;
    \int\!\!\int k(z\mid x)^2\rd \PP_X(x)\rd \PP_Z(z)\\
    &=
    \|f\|_{L_2(\PP_X)}^2\;\|k\|_{L_2(\PP_X\otimes \PP_Z)}^2,
    \end{align*}
    where the middle equality uses Tonelli/Fubini (the integrand is nonnegative and
    $k\in L_2(\PP_X\otimes \PP_Z)$).
    Taking square-roots proves the claim.
\end{proof}

% \clearpage
\subsection{LOCO estimation}\label{app:sub:loco}

\begin{lemma}[Estimated influence function for LOCO]\label[lemma]{lem:decomp-varphi-diff}
    Assume that $\hat{\eta}$ and $\eta$ are bounded in $L_{\infty}$ with probability tending to one.
    Under \Cref{asm:data}, the influence function estimation error for the LOCO estimator \eqref{eq:phi-hat-loco} satisfies that
    \begin{align*}
         \|\varphi_{Z_j}(O;\hat{\PP})-\varphi_{Z_j}(O;\PP) \|_{L_2}
        &\;=\;
      \Op(\|\hat\eta-\eta\|_{L_2}
                 +\|\hat\eta_{-j}-\eta_{-j}\|_{L_2}
        + \|\hat\eta-\eta\|_{L_2}^2+\|\hat\eta_{-j}-\eta_{-j}\|_{L_2}^2),
    \end{align*}    
\end{lemma}
\begin{proof}%[Proof of \Cref{lem:decomp-varphi-diff}]
    For the proof below, we drop the subscript $Z_j$ and the dependency in $(Z,Z_{-j})$ for simplicity.
    Recall the efficient influence function
    \[
    \varphi(O;\PP)
    =\;2\,(Y-\eta)(\eta-\eta_{-j})
      +(\eta-\eta_{-j})^2-\phi,
    \qquad
    \eta:=\EE[Y\mid Z],\;
    \eta_{-j}:=\EE[Y\mid Z_{-j}].
    \]
    For the estimated distribution \(\hat{\PP}\) we plug in
    \(\hat\eta,\hat\eta_{-j}\):
    \[
    \varphi(O;\hat{\PP})
    =\;2\,(Y-\hat\eta)(\hat\eta-\hat\eta_{-j})
      +(\hat\eta-\hat\eta_{-j})^2-\hat{\phi}_{Z_j}.
    \]
    Denote $\delta:=\hat\eta-\eta$, $\delta_{-j}:=\hat\eta_{-j}-\eta_{-j}$ and $V:=\delta-\delta_{-j}$.
    Observe
    \(\hat\eta-\hat\eta_{-j}=(\eta-\eta_{-j})+V\).
    Then, we have
    \[
    \varphi(O;\hat{\PP})-\varphi(O;\PP)
    =2[(Y-\hat\eta)(\hat\eta-\hat\eta_{-j})
            -(Y-\eta)(\eta-\eta_{-j}) ]
     +[(\hat\eta-\hat\eta_{-j})^2-(\eta-\eta_{-j})^2 ] + (\phi -\hat{\phi}).
    \]
    Insert
    \(Y-\hat\eta=(Y-\eta)-\delta\)
    and
    \(\hat\eta-\hat\eta_{-j}=(\eta-\eta_{-j})+V\), it follows that
    \[
    \varphi(O;\hat{\PP})-\varphi(O;\PP)
       = 2\,(Y-\eta)\,V
         -2\delta(\eta-\eta_{-j})
         -2\delta V
         +2(\eta-\eta_{-j})V
         +V^{2} + (\phi -\hat{\phi}).
    \]
    Because
    \((\eta-\eta_{-j})V-\delta(\eta-\eta_{-j})
       =-(\eta-\eta_{-j})\delta_{-j}\), we further have
    \[
    \varphi(O;\hat{\PP})-\varphi(O;\PP)
      = 2\,(Y-\eta)\,V
        \;-\;2(\eta-\eta_{-j})\,\delta_{-j}
        \;-\;2\delta V
        \;+\;V^{2}+ (\phi -\hat{\phi}).
    \]
    
    Under the boundedness assumption $\EE[Y^2] \leq C$ with \Cref{lem:L2-boundedness} and using Cauchy-Schwarz:
    \begin{align*}
        \|\varphi(O;\hat{\PP})-\varphi(O;\PP) \|_{L_2}
      &\leq
      2\|Y-\eta\|_{L_2}\|V\|_{L_2}
      \;+\;
      2\|\eta-\eta_{-j}\|_{L_2}\|\delta_{-j}\|_{L_2} +
     2 \|\delta\|_{L_2}\|V\|_{L_2}
      \\
      &\qquad \;+\; \|V\|_{L_2}^2 + \|\phi -\hat{\phi}\|_{L_2}
      \\
      &\leq 4C\|V\|_{L_2} + 2C\|\delta_{-j}\|_{L_2} + \|V\|_{L_2}^2 + (\phi -\hat{\phi}).
    \end{align*}
    Because $\hat{\phi}$ is uniformly bounded in probability and $\hat{\phi}-\phi = \op(1)$, by dominate convergence theorem, we have that $\|\phi -\hat{\phi}\|_{L_2} =\op(1)$.
    Since \(\|V\|_{L_2}\le\|\delta\|_{L_2}+\|\delta_{-j}\|_{L_2}\), the right-hand side is
    \[
        \Op(\|\hat\eta-\eta\|_{L_2}
                 +\|\hat\eta_{-j}-\eta_{-j}\|_{L_2}
        + \|\hat\eta-\eta\|_{L_2}^2+\|\hat\eta_{-j}-\eta_{-j}\|_{L_2}^2),
    \]
    which completes the proof.
\end{proof}

\begin{lemma}[$L_2$-bound for LOO estimator]\label[lemma]{lem:eta-j}
    Let $\hat{\eta}$ be an estimator of $\eta$, estimated from independent samples.
    Suppose $Z_j\indep Z_{-j}$ and define the leave-$j$-out estimator
    \[
    \hat\eta_{-j}(z_{-j})
    \;=\;
    \EE[\hat\eta(Z)\mid \hat{\eta}, Z_{-j}=z_{-j}] = \EE[\hat\eta(Z^{(j)})\mid \hat{\eta},Z_{-j}=z_{-j}],
    \]
    for $\eta_{-j}(z_{-j})=\EE[\eta(Z)\mid Z_j=z_j]$.  
    Then
    \[
    \|\hat\eta_{-j}(Z_{-j})-\eta_{-j}(Z_{-j}) \|_{L_2(P_{Z_{-j}})}
    \le
    \|\hat\eta(Z)-\eta(Z) \|_{L_2(\PP_Z)}.
    \]
\end{lemma}
\begin{proof}%[Proof of \Cref{lem:eta-j}]
    Since $Z_j\indep Z_{-j}$, the joint law of $(Z_{-j},Z_j)$ (the same holds for $(Z_{-j},Z_j^{(j)})$ ) factors and by Jensen's inequality
    \[
    \begin{aligned}
        \EE_{Z_{-j}}[(\hat\eta_{-j}(Z_{-j})-\eta_{-j}(Z_{-j}))^2 ]
        &=\EE_{Z_{-j}}[(\int[\hat\eta(Z_{-j},z_j)-\eta(Z_{-j},z_j) ]\rd F_{Z_j}(z_j))^{2} ]\\
        &\le 
        \EE_{Z_{-j}}[\int[\hat\eta(Z_{-j},z_j)-\eta(Z_{-j},z_j) ]^{2}\rd F_{Z_j}(z_j) ]\\
        &=
        \EE_{Z}[(\hat\eta(Z)-\eta(Z))^{2} ].
        \end{aligned}
    \]
    Taking square roots yields the result.
\end{proof}

% \clearpage
\subsection{Disentangled feature importance}\label{app:sub:DVI}

% \subsection{EIF of product}
\begin{lemma}[EIF of the expected product]\label[lemma]{lem:EIF-product}
    Let the data be \(O\sim P \) and suppose that the (possibly distribution-dependent) functions \(A\) and \(B\) satisfy
    \[
    \psi_A( P ) \;=\; \EE_{ P }[A(O) ], 
    \qquad
    \psi_B( P ) \;=\; \EE_{ P }[B(O) ].
    \]
    Denote by \(\varphi_A(O; P )\) and \(\varphi_B(O; P )\) their efficient
    influence functions (EIFs), normalised so that
    \[
    \EE_{ P } [\varphi_A(O; P ) ]=
    \EE_{ P } [\varphi_B(O; P ) ]=0.
    \]
    Define the centered functions
    \(A_c(O)=A(O)-\psi_A( P )\) and \(B_c(O)=B(O)-\psi_B( P )\),
    and the remainder term
    \[
        \alpha_A(O) := \varphi_A(O; P ) - A_c(O) , 
        \qquad 
        \alpha_B(O) := \varphi_B(O; P ) - B_c(O) ,
    \]
    where the remainder terms \(\alpha_A,\alpha_B\) collect the pathwise-derivative contributions coming from the dependence of \(A\) and \(B\) on \( P \).
    Then, the EIF of $\psi_{AB}( P ):=\EE_{ P }[A(O)B(O)]$ is given by:
    \begin{align}
        \varphi_{AB}(O; P )
      = A(O)B(O)-\psi_{AB}( P )
        +B(O)\alpha_A(O)
        +A(O)\alpha_B(O). \label{eq:EIF-prod}
    \end{align}
\end{lemma}
\begin{proof}%[Proof of \Cref{lem:EIF-product}]
    Let \( P _\varepsilon=(1+\varepsilon h) P \) be a regular, mean‑zero submodel with score \(h\).
    Differentiating \(\psi_{AB}( P _\varepsilon)=\int A_{ P _\varepsilon}B_{ P _\varepsilon}\rd  P _\varepsilon\) at \(\varepsilon=0\) gives
    \begin{align}
        \frac{\rd}{\rd\varepsilon}\psi_{AB}( P _\varepsilon)\Big|_{\varepsilon=0}
        = \EE_{ P } [h(O)\,A(O)B(O) ]
      + \EE_{ P } [B(O)\,\dot{A}_h(O) ]
      + \EE_{ P } [A(O)\,\dot{B}_h(O) ], \label{eq:EIF-prod-1}
    \end{align}
    where \(\dot{A}_h,\dot{B}_h\) are the pathwise derivatives of
    \(A_{ P _\varepsilon}(O)\) and \(B_{ P _\varepsilon}(O)\) along \(h\).

    The defining property of \(\varphi_A\) and \(\varphi_B\) is
    \begin{align}
        \EE_{ P } [h(O)\,\varphi_A(O; P ) ]
      \;=\;
      \EE_{ P } [h(O)\,A(O) ] 
      + \EE_{ P } [\dot{A}_h(O) ], \label{eq:EIF-prod-2}
    \end{align}
    and similarly for $B$.
    This implies
    \begin{align}
        \EE_{ P } [\dot{A}_h(O) ] 
      \;=\;
      \EE_{ P } [h(O)\,\alpha_A(O) ],
    \qquad
    \EE_{ P } [\dot{B}_h(O) ] 
      \;=\;
      \EE_{ P } [h(O)\,\alpha_B(O) ].    \label{eq:EIF-prod-3} 
    \end{align}

    Define
    \begin{align}
        \varphi_{AB}(O; P )
        \;:=\;
        A(O)B(O) - \psi_{AB}( P )
        \;+\;
        B(O)\,\alpha_A(O)
        \;+\;
        A(O)\,\alpha_B(O).\label{eq:EIF-prod-4}
    \end{align}
    We verify that \eqref{eq:EIF-prod-4} satisfies the EIF equation.  Using \eqref{eq:EIF-prod-1}-\eqref{eq:EIF-prod-3}, we have
    \[
    \EE_{ P } [h(O)\,\varphi_{AB}(O; P ) ]
    \;=\;
    \frac{\rd}{\rd\varepsilon}\psi_{AB}( P _\varepsilon)\Big|_{\varepsilon=0},
    \]
    and \(\EE_{ P }[\varphi_{AB}]=0\) by construction, so \eqref{eq:EIF-prod-4} is indeed the
    efficient influence function of \(\psi_{AB}\).
    This completes the proof.
\end{proof}

\begin{remark}[Special cases of \Cref{lem:EIF-product}]
    When $A$ is distribution-independent (i.e.\ $A_P\equiv A$), one has \(\alpha_A=0\) and \eqref{eq:EIF-prod} becomes \(\varphi_{AB}=A\,\varphi_B\).     
     When neither \(A\) nor \(B\) depends on \( P \),  one has \(\alpha_A=\alpha_B=0\), \(\varphi_A=A_c\), \(\varphi_B=B_c\), and \eqref{eq:EIF-prod} reduces to \(\varphi_{AB}=A B-\EE[AB]\).
     This can also happen if \(A\) and \(B\) are linear functions.
     For instance, for the covariance parameter $\psi_{AB}( P ) = \EE_{ P }[(A-\EE_{ P }[A])(B-\EE_{ P }[B])]$, one has $X(O) = X $ and $\varphi_X(O; P ) = X_c(O) = X - \EE_{ P }[X]$ for $X\in\{A,B\}$.
      Hence, \(\alpha_A=\alpha_B=0\) and the last two terms in \eqref{eq:EIF-prod} vanish and \eqref{eq:EIF-prod} reduces to $\varphi_{AB}(O; P ) = (A-\EE_{ P }[A])(B-\EE_{ P }[B]) - \psi_{AB}( P )$.
\end{remark}

\begin{lemma}[EIF plug-in error bound under EOT]\label[lemma]{lem:eif-plug-in-decomp}
    Fix $(j,l)\in[d]^2$ and assume \Cref{asm:data,asm:bary-smooth} hold.
    Assume further that the EIF in \Cref{prop:eif-eot-weighted} exists and satisfies
    \(
    \sup_{j,l\in[d]}\;\|\varphi_{\theta_{jl}}(O;\PP)\|_{L_4(\PP)}<\infty.
    \)
    Let $(\hat\mu,\hat k,\partial_z\hat k)$ be nuisance estimators trained on an auxiliary sample independent of the $n$ observations used in $\PP_n$, so that $\|\hat\mu\|_{L_\infty}\vee\|\hat\mu\hat k\|_{L_\infty}\vee\|\partial_z\hat k\|_{L_\infty}\le M$ for some fixed $M<\infty$.
    Then, uniformly over $j\in[d]$, the following bounds hold with probability tending to $1$:
    \begin{align*}
    \|\hat\eta-\eta\|_{L_2(\PP_Z)}
    &\lesssim
    \|\hat\mu-\mu\|_{L_2(\PP_X)}
    +
    \|\hat k-k\|_{L_2(\PP_X\otimes\PP_Z)}
    \lesssim\varepsilon_{jl},\\
    \|\hat v_j-v_j\|_{L_2(\PP_{Z_{-j}})}
    &\lesssim
    \|\hat\eta-\eta\|_{L_4(\PP_Z)}
    \lesssim\varepsilon_{jl},\\
    \|\hat s_{jl}-s_{jl}\|_{L_2(\PP_Z)}
    &\le
    \|X_l\|_{L_2(\PP_X)}\;\|\partial_{z_j}\hat k-\partial_{z_j}k\|_{L_2(\PP_X\otimes\PP_Z)}
    \lesssim\varepsilon_{jl},\\
    \|\hat w_{jl}-w_{jl}\|_{L_2(\PP_{Z_{-j}})}
    &\lesssim\|\hat s_{jl}-s_{jl}\|_{L_2(\PP_Z)}
    \lesssim\varepsilon_{jl},\\
    |\hat\theta_{jl}-\theta_{jl}|
    &\lesssim n^{-1/2}+\varepsilon_{jl}^2.
    \end{align*}
    where \[
    \varepsilon_{jl}
    := \|\hat\mu-\mu\|_{L_2(\PP_X)}
    + \|\hat k-k\|_{L_2(\PP_X\otimes\PP_Z)}
    + \|\partial_{z_j}\hat k-\partial_{z_j}k\|_{L_2(\PP_X\otimes\PP_Z)}.
    \]
	    Let $\hat\varphi_{jl}$ denote the plug-in version of $\varphi_{jl}$ obtained by replacing
	    \[
	    (\eta,\Delta_j,v_j,w_{jl},\theta_{jl})
	    \quad\text{with}\quad
	    (\hat\eta,\hat\Delta_j,\hat v_j,\hat w_{jl},\hat\theta_{jl}),
	    \]
	    where $\varphi_{\theta_{jl}}(O;\PP)$ is known.
	    Then
    \[
    \|\hat\varphi_{jl}-\varphi_{jl}\|_{L_2(\PP)}
    \lesssim
    \|\hat\eta-\eta\|_{L_2(\PP_Z)}
    +
    \|\hat v_j-v_j\|_{L_2(\PP_{Z_{-j}})}
    +
    \|\hat w_{jl}-w_{jl}\|_{L_2(\PP_{Z_{-j}})}
    +
    |\hat\theta_{jl}-\theta_{jl}|
    \lesssim n^{-1/2}+\varepsilon_{jl}.
    \]
\end{lemma}

\begin{proof}%[Proof of Lemma~\ref{lem:eif-plug-in-decomp}]
    Throughout, condition on the auxiliary sample used to train $(\hat\mu,\hat k,\partial_z\hat k)$, so that the nuisance estimators may be treated as fixed.
    Work on the event (with probability tending to $ 1$) on which
    $\|\hat\mu\|_{L_\infty}\vee\|\hat\mu\hat k\|_{L_\infty}\vee\|\partial_z\hat k\|_{L_\infty}\le M$ and \Cref{asm:data,asm:bary-smooth} hold. All implicit constants below may depend on
    $M$ and the fixed moment/smoothness envelopes in \Cref{asm:data,asm:bary-smooth}.

    \paragraph{Step 1: Bound $\|\hat\eta-\eta\|_{L_2(\PP_Z)}$.}
    From \Cref{eq:delta-final} and \Cref{lem:HS-bound}, we have
    \[
    \|\hat\eta_{-j}-\eta_{-j}\|_{L_2(\PP_{Z_{-j}})}
    \le
    \|\hat\eta-\eta\|_{L_2(\PP_Z)}\lesssim
    \|\hat\mu-\mu\|_{L_2(\PP_X)}+\|\hat k-k\|_{L_2(\PP_X\otimes\PP_Z)}
    \lesssim\varepsilon_{jl}.
    \]

    \paragraph{Step 2: Bound $\|\hat v_j-v_j\|_{L_2(\PP_{Z_{-j}})}$.}
    Write $\Delta_j(Z)=\eta(Z)-\eta_{-j}(Z_{-j})$ and
    $\hat\Delta_j(Z)=\hat\eta(Z)-\hat\eta_{-j}(Z_{-j})$, so that
    \[
    v_j(z_{-j})=\EE[\Delta_j(Z)^2\mid Z_{-j}=z_{-j}],
    \qquad
    \hat v_j(z_{-j})=\EE[\hat\Delta_j(Z)^2\mid Z_{-j}=z_{-j}].
    \]
    By Jensen,
    \[
    \|\hat v_j-v_j\|_{L_2(\PP_{Z_{-j}})}
    \le
    \|\hat\Delta_j^2-\Delta_j^2\|_{L_2(\PP_Z)}.
    \]
    Using $\hat\Delta_j^2-\Delta_j^2=(\hat\Delta_j-\Delta_j)(\hat\Delta_j+\Delta_j)$ and H\"older,
    \[
    \|\hat\Delta_j^2-\Delta_j^2\|_{L_2(\PP_Z)}
    \le
    \|\hat\Delta_j-\Delta_j\|_{L_4(\PP_Z)}\cdot \|\hat\Delta_j+\Delta_j\|_{L_4(\PP_Z)}.
    \]
    Under \Cref{asm:data}, \Cref{lem:L2-boundedness} implies $\Delta_j\in L_4(\PP_Z)$ uniformly in $j$.
    Also, $\hat\eta$ is uniformly bounded as assumed, hence
    $\hat\Delta_j\in L_\infty(\PP_Z)\subset L_4(\PP_Z)$ uniformly in $j$.
    Therefore $\|\hat\Delta_j+\Delta_j\|_{L_4(\PP_Z)}\lesssim 1$ and
    \[
    \|\hat v_j-v_j\|_{L_2(\PP_{Z_{-j}})}
    \lesssim
    \|\hat\Delta_j-\Delta_j\|_{L_4(\PP_Z)}.
    \]
    Finally,
    \[
    \hat\Delta_j-\Delta_j
    =
    (\hat\eta-\eta)-(\hat\eta_{-j}-\eta_{-j}),
    \]
    and by Jensen's contraction of conditional expectation,
    $\|\hat\eta_{-j}-\eta_{-j}\|_{L_4(\PP_{Z_{-j}})}\le \|\hat\eta-\eta\|_{L_4(\PP_Z)}$.
    Thus $\|\hat\Delta_j-\Delta_j\|_{L_4(\PP_Z)}\lesssim \|\hat\eta-\eta\|_{L_4(\PP_Z)}$.
    Combining, and noting $\|\hat\eta-\eta\|_{L_2}\le \|\hat\eta-\eta\|_{L_4}$ on a probability space, \(\|\hat v_j-v_j\|_{L_2(\PP_{Z_{-j}})} \lesssim \|\hat\eta-\eta\|_{L_4(\PP_Z)}\) and in particular
    \[
    \|\hat v_j-v_j\|_{L_2(\PP_{Z_{-j}})}
    \lesssim
    \|\hat\eta-\eta\|_{L_2(\PP_Z)}
    \lesssim \varepsilon_{jl}.
    \]

    \paragraph{Step 3: Bound $\|\hat s_{jl}-s_{jl}\|_{L_2(\PP_Z)}$.}
    By \Cref{asm:bary-smooth}(ii), $s_{jl}$ admits the integral representation
    \[
    s_{jl}(z)=\int x_l\,\partial_{z_j}k(x\mid z)\,\PP_X(\rd x),
    \qquad
    \hat s_{jl}(z)=\int x_l\,\partial_{z_j}\hat k(x\mid z)\,\PP_X(\rd x),
    \quad \PP_Z\text{-a.e. }z.
    \]
    Therefore, by Cauchy--Schwarz and \Cref{asm:data}(i),
    \[
    \|\hat s_{jl}-s_{jl}\|_{L_2(\PP_Z)}
    \le
    \|X_l\|_{L_2(\PP_X)}\,
    \|\partial_{z_j}\hat k-\partial_{z_j}k\|_{L_2(\PP_X\otimes \PP_Z)}
    \lesssim \varepsilon_{jl}.
    \]

    \paragraph{Step 4: Bound $\|\hat w_{jl}-w_{jl}\|_{L_2(\PP_{Z_{-j}})}$ and $|\hat\theta_{jl}-\theta_{jl}|$.}
    Recall
    \[
    w_{jl}(z_{-j})=\EE[s_{jl}(Z)^2\mid Z_{-j}=z_{-j}],
    \qquad
    \hat w_{jl}(z_{-j})=\EE[\hat s_{jl}(Z)^2\mid Z_{-j}=z_{-j}].
    \]
    By Jensen,
    \[
    \|\hat w_{jl}-w_{jl}\|_{L_2(\PP_{Z_{-j}})}
    \le
    \|\hat s_{jl}^2-s_{jl}^2\|_{L_2(\PP_Z)}
    \le
    \|\hat s_{jl}-s_{jl}\|_{L_2(\PP_Z)}\cdot \|\hat s_{jl}+s_{jl}\|_{L_\infty(\PP_Z)}.
    \]
    By \Cref{asm:bary-smooth}~\ref{asm:bary-smooth:iii}-\ref{asm:bary-smooth:estimator}, $s_{jl}$ and
    $\hat s_{jl}$ are uniformly bounded in $L_\infty(\PP_Z)$; hence $\|\hat s_{jl}+s_{jl}\|_{L_\infty(\PP_Z)}\lesssim 1$.
    Thus
    \[
    \|\hat w_{jl}-w_{jl}\|_{L_2(\PP_{Z_{-j}})}
    \lesssim
    \|\hat s_{jl}-s_{jl}\|_{L_2(\PP_Z)}
    \lesssim \varepsilon_{jl}.
    \]
    For $\theta_{jl}(\PP)=\EE[w_{jl}(Z_{-j})]$, \Cref{asm:eif-theta} implies that the estimator $\hat\theta_{jl}$ admits the uniform linear expansion
    \[
    \hat\theta_{jl}-\theta_{jl}(\PP)
    =
    (\PP_n-\PP)\,\varphi_{\theta_{jl}}(O;\PP)
    +
    R_{\theta,jl},
    \]
    with $\sup_{j,l}\|\varphi_{\theta_{jl}}(O;\PP)\|_{L_4(\PP)}<\infty$ and
    \[
    R_{\theta,jl}
    =
    \Op\!\left(\|\hat w_{jl}-w_{jl}\|_{L_2(\PP_{Z_{-j}})}^2\right).
    \]
    Therefore,
    \[
    |\hat\theta_{jl}-\theta_{jl}(\PP)|
    \le
    |(\PP_n-\PP)\varphi_{\theta_{jl}}(O;\PP)|
    +
    |R_{\theta,jl}|
    =
    \Op\!\left(n^{-1/2}+\|\hat w_{jl}-w_{jl}\|_{L_2(\PP_{Z_{-j}})}^2\right).
    \]
    In particular, using Step~4's bound $\|\hat w_{jl}-w_{jl}\|_{L_2(\PP_{Z_{-j}})}\lesssim \varepsilon_{jl}$ yields
    \[
    |\hat\theta_{jl}-\theta_{jl}(\PP)|=\Op\!\left(n^{-1/2}+\varepsilon_{jl}^2\right).
    \]

    \paragraph{Step 5: Bound $\|\hat\varphi_{jl}-\varphi_{jl}\|_{L_2(\PP)}$.}
    We begin by obtaining an exact algebraic decomposition of $\varphi_{jl}(O;\PP)$ in \eqref{eq:eif-phi-jl} that will be used throughout:
    \[
    \varphi_{jl}(O;\PP)
    =
    w_{jl}(Z_{-j})\,R_j(O;\eta)
    +
    v_j(Z_{-j})\,U_{jl}(O;\PP)
    -\psi_{jl}(\PP),
    \]
    where
    $R_j(O;\eta):=2\{Y-\eta(Z)\}\Delta_j(Z)+\Delta_j(Z)^2$ and
    \[
    U_{jl}(O;\PP):=\varphi_{\theta_{jl}}(O;\PP)-\{w_{jl}(Z_{-j})-\theta_{jl}(\PP)\},
    \qquad
    \psi_{jl}(\PP):=\EE[v_j(Z_{-j})w_{jl}(Z_{-j})].
    \]
    Define $\hat\varphi_{jl}$ by plugging in
    $(\hat\eta,\hat\Delta_j,\hat v_j,\hat w_{jl},\hat\theta_{jl})$ while keeping
    $\varphi_{\theta_{jl}}(O;\PP)$ fixed (as assumed known).
    Then, by triangle inequality,
    \[
    \|\hat\varphi_{jl}-\varphi_{jl}\|_{L_2(\PP)}
    \le T_1+T_2+T_3,
    \]
    where
    \begin{align*}
    T_1&:=\|\hat w_{jl}(Z_{-j})R_j(O;\hat\eta)-w_{jl}(Z_{-j})R_j(O;\eta)\|_{L_2(\PP)},\\
    T_2&:=\|\hat v_j(Z_{-j})\hat U_{jl}(O;\PP)-v_j(Z_{-j})U_{jl}(O;\PP)\|_{L_2(\PP)},\\
    T_3&:=|\hat\psi_{jl}(\PP)-\psi_{jl}(\PP)|,
    \qquad \hat\psi_{jl}(\PP):=\EE[\hat v_j(Z_{-j})\hat w_{jl}(Z_{-j})].
    \end{align*}

    \emph{Control of $T_1$.}
    Write
    \[
    \hat w_{jl}R_j(O;\hat\eta)-w_{jl}R_j(O;\eta)
    =
    \hat w_{jl}\{R_j(O;\hat\eta)-R_j(O;\eta)\} + (\hat w_{jl} - w_{jl}) R_j(O;\eta).
    \]
    Since $w_{jl}(Z_{-j})=\EE[s_{jl}(Z)^2\mid Z_{-j}]$ and $s_{jl}$ is bounded in $L_8(\PP_Z)$ by \Cref{asm:bary-smooth}(iii), we have $\|w_{jl}\|_{L_4(\PP_{Z_{-j}})}\lesssim 1$.
    Similarly, $\|\hat w_{jl}\|_{L_\infty(\PP_{Z_{-j}})}\lesssim 1$ under the assumed $L_\infty$ bounds on the nuisance estimators.
    Moreover, by \Cref{asm:data} and \Cref{lem:L2-boundedness}, $R_j(O;\eta)\in L_\infty(\PP)$ uniformly in $j$, and the same holds for $R_j(O;\hat\eta)$ on the high-probability event where $\hat\eta$ is bounded.
    By H\"older inequality:
    \begin{align*}
        \| R(\hat\eta) - R(\eta) \|_{L_2(\PP)} & \lesssim \|\hat\eta-\eta\|_{L_2(\PP_Z)} + \|\hat\eta_{-j}-\eta_{-j}\|_{L_2(\PP_{Z_{-j}})}
        \lesssim \|\hat\eta-\eta\|_{L_2(\PP_Z)}.
    \end{align*}
    Therefore,
    \[
    T_1
    \lesssim
    \|\hat w_{jl}-w_{jl}\|_{L_2(\PP_{Z_{-j}})}
    +
    \|\hat\eta-\eta\|_{L_2(\PP_Z)}.
    \]

    \emph{Control of $T_2$.} 
    From \(\hat U_{jl}-U_{jl} = -(\hat w_{jl}-w_{jl})+(\hat\theta_{jl}-\theta_{jl})\), we have
    \[
    \|\hat U_{jl}-U_{jl}\|_{L_2(\PP)}
    \le
    \|\hat w_{jl}-w_{jl}\|_{L_2(\PP_{Z_{-j}})}+|\hat\theta_{jl}-\theta_{jl}|.
    \]
    By Step 4 and \Cref{asm:eif-theta},
    \[
    \|\hat U_{jl}-U_{jl}\|_{L_2(\PP)}
    =
    \Op\!\left(\varepsilon_{jl}+n^{-1/2}+\varepsilon_{jl}^2\right).
    \]

    Moreover, $U_{jl}(O;\PP)\in L_2(\PP)$ uniformly in $(j,l)$: indeed, since
    \(U_{jl}=\varphi_{\theta_{jl}}-(w_{jl}-\theta_{jl})\),
    \[
    \|U_{jl}\|_{L_2(\PP)}
    \le
    \|\varphi_{\theta_{jl}}\|_{L_2(\PP)}+\|w_{jl}\|_{L_2(\PP_{Z_{-j}})}+|\theta_{jl}|
    \lesssim 1,
    \]
    using \Cref{asm:eif-theta}(i) and $w_{jl}\in L_4(\PP_{Z_{-j}})$.
    Similarly, on any event where $\|\hat w_{jl}\|_{L_2(\PP_{Z_{-j}})}$ is bounded (e.g. the same high-probability
    event used in Step~4) and $|\hat\theta_{jl}|$ is bounded, we have $\hat U_{jl}(O;\PP)\in L_2(\PP)$ uniformly.
    Finally, $|\hat\theta_{jl}| \le |\theta_{jl}|+|\hat\theta_{jl}-\theta_{jl}|= \Op(1)$ by the above expansion,
    so such a boundedness event has probability tending to one.
    Decompose
    \[
    \hat v_j\,\hat U_{jl} - v_j U_{jl}
    =
    (\hat v_j - v_j)\hat U_{jl} + v_j(\hat U_{jl} - U_{jl}).
    \]
    Consequently,
    \[
    T_2
    \lesssim
    \|\hat v_j-v_j\|_{L_2(\PP_{Z_{-j}})}
    +
    \|\hat w_{jl}-w_{jl}\|_{L_2(\PP_{Z_{-j}})}
    +
    |\hat\theta_{jl}-\theta_{jl}|.
    \]

    \emph{Control of $T_3$.}
    Using $\hat\psi_{jl}-\psi_{jl}=\EE[\hat v_j\hat w_{jl}-v_j w_{jl}]$ and Cauchy--Schwarz, together with the boundedness of $\hat w_{jl}$ and the uniform $L_2$ control of $v_j$ and $\hat v_j$ from \Cref{lem:L2-boundedness} and Step~2,
    \[
    T_3
    \lesssim
    \|\hat v_j-v_j\|_{L_2(\PP_{Z_{-j}})}
    +
    \|\hat w_{jl}-w_{jl}\|_{L_2(\PP_{Z_{-j}})}.
    \]

    Combining the bounds on $T_1,T_2,T_3$ yields
    \[
    \|\hat\varphi_{jl}-\varphi_{jl}\|_{L_2(\PP)}
    \lesssim
    \|\hat\eta-\eta\|_{L_2(\PP_Z)}
    +
    \|\hat v_j-v_j\|_{L_2(\PP_{Z_{-j}})}
    +
    \|\hat w_{jl}-w_{jl}\|_{L_2(\PP_{Z_{-j}})}
    +
    |\hat\theta_{jl}-\theta_{jl}|.
    \]
    Finally, Steps~1--4 show each term on the right-hand side is $\lesssim \varepsilon_{jl}$,
    uniformly over $j\in[d]$, which completes the proof.
\end{proof}

\clearpage
\section{DFI under transport maps}\label{app:sec:ot-maps}
    \subsection{Optimal transport maps}\label{app:subsec:ot-maps}

    Depending on the distributions of features and/or latent variables, the randomized transformation (through kernel $k$) reduces to a transport map $T:\RR^d\to\RR^d$ such that $Z=T(X)$.
    Two examples of such maps are as follows:
    \begin{enumerate}[(i)]
    \item Bures-Wasserstein map \citep{givens1984class}:
        If $X\sim N(0,\Sigma)$ and $Z\sim\cN_d(0_d,I_{d})$, the optimal map is the whitening transform map $T(x)=\Sigma^{-\frac{1}{2}}x$.
          % Its inverse $T^{-1}(z)=\Sigma^{\frac{1}{2}}z$ is again optimal.
    
    \item Knothe-Rosenblatt transport \citep{rosenblatt1952remarks}:
    For any $X$ with joint distribution function $F$ and $Z\sim \Unif[0,1]^{\otimes d}$, the triangular Knothe-Rosenblatt transport map is given by
    $
    T(x)=(
       F_{X_1}(x_{1}),
       F_{X_2\mid X_1}(x_{2}\mid x_{1}),
       $
       $\dots,
       F_{X_d\mid X_{[d-1]}}(x_{d}\mid x_{{[d-1]}})
    )
    $,  which pushes $X$ to $Z\sim\Unif[0,1]^{\otimes d}$.
    \end{enumerate}
    When the covariate distribution is absolutely continuous, the optimal transport map $T$ pushing $\PP_X$ to $\PP_Z$ is unique and can be characterized as the gradient of a convex function \citep{brenier1991polar}. 
    This property ensures that the disentangled representation $Z = T(X)$ is well-defined and unique up to sets of measure zero.
    The optimal transport map is applicable to continuous or discrete covariates that admit joint absolute continuity; although, in real-world applications, covariate distributions may exhibit singular components (e.g., a mixture of continuous and discrete variables), rendering the optimal transport map ill-defined or non-unique.

    For our theoretical analysis in this section, we focus on the optimal transport map, which provides a principled method for transforming one probability distribution into another by minimizing a specified transportation cost. Consider two probability measures $\PP_X$ and $\PP_Z$ on $\RR^d$. Given a cost function $c(x, z)$, the Monge problem of optimal transport seeks a measurable map $T: \RR^d \to \RR^d$ solving:
    \begin{align}
        \inf_{T_\sharp \PP_X = \PP_Z} \int_{\RR^d} c(x, T(x)) \rd \PP_X(x),
    \end{align}
    where $T_\sharp \PP_X = \PP_Z$ indicates that $T$ pushes forward $\PP_X$ to $\PP_Z$, i.e., $Z = T(X) \sim \PP_Z$ whenever $X \sim \PP_X$.
    In this work, we focus on the squared Euclidean distance:
    \[
        c(x, z) = \|x - z\|^2.
    \]
    Under this quadratic cost, optimal transport maps exhibit desirable mathematical properties, leading naturally to disentangled representations of the original random feature $X$.
    
    To guarantee the existence and uniqueness of the optimal transport map, we impose the following standard assumption:
    
    \begin{assumption}[Absolute continuity]\label[assumption]{asm:abs-continuity}
        The measure $\PP_X$ is absolutely continuous with respect to the Lebesgue measure on $\RR^d$.
    \end{assumption}
    
    Under Assumption \ref{asm:abs-continuity}, Brenier's theorem \citep{brenier1991polar} guarantees the existence of a unique (almost everywhere) optimal transport map, represented as the gradient of a convex function:
    
    \begin{theorem}[Brenier's theorem]\label{thm:brenier}
        Under Assumption \ref{asm:abs-continuity}, there exists a convex potential function $\Phi: \RR^d \to \RR$ such that the optimal transport map is uniquely determined $\PP_X$-almost surely by:
        \[
            T(x) = \nabla \Phi(x).
        \]
    \end{theorem}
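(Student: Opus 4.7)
The plan is to follow Brenier's original route through Kantorovich relaxation and cyclical monotonicity. First I would relax the Monge problem to the Kantorovich problem: instead of seeking a map, seek a coupling $\pi$ on $\RR^d\times\RR^d$ with marginals $\PP_X$ and $\PP_Z$ that minimizes $\int \|x-z\|^2 \rd\pi(x,z)$. Expanding the quadratic cost gives
\[
    \int \|x-z\|^2 \rd\pi = \int \|x\|^2 \rd\PP_X + \int \|z\|^2 \rd\PP_Z - 2\int \langle x,z\rangle \rd\pi,
\]
so (assuming finite second moments, which I would add as a mild regularity condition) minimizing the cost is equivalent to maximizing $\int \langle x,z\rangle \rd\pi$. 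Standard weak compactness of couplings with fixed marginals plus lower semicontinuity of the cost yields existence of an optimizer $\pi^*$.

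Next I would apply Kantorovich duality to show that $\mathrm{supp}(\pi^*)$ is a \emph{cyclically monotone} subset of $\RR^d\times\RR^d$: for any finite collection $(x_1,z_1),\dots,(x_k,z_k)\in\mathrm{supp}(\pi^*)$, one has $\sum_{i=1}^k \langle x_i, z_i-z_{i+1}\rangle \geq 0$ (indices mod $k$). Then I would invoke Rockafellar's theorem, which states that every cyclically monotone set is contained in the graph of the subdifferential $\partial\Phi$ of some lower semicontinuous convex function $\Phi:\RR^d\to\RR\cup\{+\infty\}$. This produces the convex potential whose subgradient supports the optimal coupling.

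The final step converts the multivalued subdifferential into a genuine map. By a classical result in convex analysis, a convex function is differentiable Lebesgue-almost everywhere on the interior of its domain, and on the set of differentiability $\partial\Phi(x)=\{\nabla\Phi(x)\}$. Since \Cref{asm:abs-continuity} states that $\PP_X$ is absolutely continuous with respect to Lebesgue measure, $\nabla\Phi(x)$ exists $\PP_X$-almost surely, and $T(x):=\nabla\Phi(x)$ is the $\PP_X$-a.s.\ unique map whose graph lies in $\mathrm{supp}(\pi^*)$. A disintegration argument then shows $\pi^*=(\mathrm{id},T)_\sharp \PP_X$, so $T_\sharp\PP_X=\PP_Z$ and $T$ is an optimal Monge map. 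Uniqueness $\PP_X$-a.s.\ follows because any other optimal transport map would have to agree with $\nabla\Phi$ on a set of full $\PP_X$-measure, while $\Phi$ itself is unique up to an additive constant.

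The main obstacle is the bridge between the duality-based cyclical monotonicity conclusion and Rockafellar's integration theorem; the construction of $\Phi$ via the explicit formula $\Phi(x)=\sup\{\langle x,z_k\rangle + \sum_{i=1}^{k-1}\langle x_{i+1}-x_i,z_i\rangle\}$ over admissible chains anchored at a base point in $\mathrm{supp}(\pi^*)$ requires care to ensure $\Phi$ is proper, convex, and lower semicontinuous. The absolute continuity assumption is then essential not for existence of $\Phi$ but for reducing the potentially set-valued $\partial\Phi$ to a single-valued gradient $\PP_X$-almost everywhere, which is exactly where \Cref{asm:abs-continuity} is used.
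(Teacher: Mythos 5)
The paper does not prove \Cref{thm:brenier}; it is quoted as a known result and attributed to \citet{brenier1991polar}, so there is no in-paper proof to compare against. That said, your outline is the standard and correct route to Brenier's theorem: Kantorovich relaxation, expansion of the quadratic cost to reduce to maximizing the correlation $\int\langle x,z\rangle\,\rd\pi$, existence of an optimal coupling by weak compactness and lower semicontinuity, cyclical monotonicity of its support via optimality/duality, Rockafellar's integration theorem to produce a convex potential $\Phi$, and almost-everywhere differentiability of $\Phi$ combined with absolute continuity of $\PP_X$ to collapse $\partial\Phi$ to a single-valued gradient $\PP_X$-a.s. The final disintegration and uniqueness steps are the right closing moves.

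One point worth flagging: as stated, the theorem only invokes \Cref{asm:abs-continuity}, which guarantees absolute continuity but says nothing about moments. Your argument explicitly requires finite second moments on both $\PP_X$ and $\PP_Z$ so that the expansion $\int\|x-z\|^2\,\rd\pi = \int\|x\|^2\,\rd\PP_X + \int\|z\|^2\,\rd\PP_Z - 2\int\langle x,z\rangle\,\rd\pi$ makes sense, and you acknowledge this as an added "mild regularity condition." This is the Brenier (1991) setting. The version that holds under absolute continuity alone, with no moment hypothesis, is McCann's refinement, which constructs $T=\nabla\Phi$ directly via a fixed-point/Legendre-conjugate argument without first solving the transport problem. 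So your proof is correct for the classical case but is slightly narrower than the theorem as stated; in the context of this paper the gap is harmless, since \Cref{asm:data}~(1) imposes bounded second moments on $X$ and the reference $Z$ is standard Gaussian, but it is worth being explicit that either the moment hypothesis should be added to the theorem or the McCann route should be used instead.
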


    Consequently, the random vector $Z = T(X)$, obtained via an optimal transport map, provides a unique (up to a set of measure zero) disentangled representation of the original data $X$.

    \subsubsection{Latent feature importance via optimal transport}\label{app:subsec:ot-dfi}
    To present the main results on the statistical properties of the proposed estimator, we first introduce technical assumptions.
    Apart from \Cref{asm:abs-continuity}, we also require convexity of the Brenier potential.
    Let $\cC^2(\RR^d)$ denote the space of twice continuously differentiable functions on $\RR^d$.

    \begin{assumption}[Convexity]\label[assumption]{asm:strong-convexity}
        The Brenier potential $\Phi$ is a convex function such that $\Phi\in\cC^2(\RR^d)$ and $\lambda^{-1}I_d\preceq \nabla^2\Phi(x) \preceq \lambda I_d$ for all $x\in\RR^d$ and for some universal constant $\lambda>1$.
    \end{assumption}

    To present our main results, we introduce assumptions on the data-generating process and the nuisance functions, similar to \Cref{asm:data}.

    \begin{assumption}[Data model and estimator]\label{OT:asm:data}
    There exist a constant $C>0$ such that:
    \begin{enumerate}[(i)]

        \item\label{OT:asm:data:cov} Covariate $X\in\RR^d$ has zero mean $\EE[X]=0$ and bounded second moment $\|X\|_{L_2}\le C$;

        \item\label{OT:asm:data:response} Response $Y\in\RR$ has zero mean $\EE[Y]=0$ and bounded moment $\|Y\|_{L_\infty}\le~C$;

        \item\label{OT:asm:data:estimator} Nuisance functions $\hat\mu$ and $\hat T$ are estimated on auxiliary samples independent of the $n$ observations used in $\PP_n$ such that $\mu$ is Lipshitz continuous with Lipshitz constant $\Lip(\mu) = L_{\mu}$, \( \|\hat\mu\|_{L_{\infty}(\PP_X)}\leq C\) and \(\|\hat\mu\circ \hat T^{-1}\|_{L_{\infty}(\PP_Z)}\leq C\) with probability tending to one.
    \end{enumerate}
    \end{assumption}

    We can decompose the estimation error into two primary sources, namely, the error in estimating the conditional expectation $\mu$ and the error in estimating the transport map $T$.

    \begin{theorem}[Estimation error decomposition]
    \label{thm:error-decom-T}
    Under Assumptions \ref{asm:abs-continuity}, \ref{asm:strong-convexity}, and \ref{OT:asm:data}, if the nuisance estimator $(\hat{\mu},\hat{T})$ of $(\mu,T)$ is independent of $n$ observations of $O=(Z,X,Y)$.
    Denote \(\Delta_j(z):=\eta(z)-\eta_{-j}(z)\).
    Then the DFI estimator \eqref{def:hat-phi-Z} satisfies that
    \begin{align*}
         \hat\phi_{Z_j}(\PP)-\phi_{Z_j}(\PP)
          \;=\;
          (\PP_n-\PP)\{\varphi_{Z_j}(O;\PP)\}
           +\cO(n^{-1/2}\cE_Z + \cE_Z^2),
    \end{align*}
    with probability at least $1-\varepsilon_{\mu}^{2}$.
    Here, the influence function is given by
    \[
        \varphi_{Z_j}(O;\PP) = 2 (Y - \eta(Z))\Delta_j(Z) + \Delta_j(Z)^2 - \phi_{Z_j}(\PP),
    \]
    and the remainder term is given by \(\cE_Z := \|\hat\mu-\mu\|_{L_2(\PP_X)} + \,\| \hat T-T\|_{L_2(\PP_X)}\).
    \end{theorem}
    
    Similar to \Cref{thm:error-decom-eot}, the error term is second-order in the estimation errors of the regression function $\mu$ and the transport map $T$. 
    For optimal transport map estimation, recent work has established rates that satisfy our requirement under analogous smoothness conditions on the transport potential. For example, \citet{manole2024plugin} show that wavelet-based estimators for the Brenier potential achieve a rate for $\|\hat T - T\|_{L_2(\PP_X)}$ that is faster than $n^{-1/4}$ when the potential's smoothness is sufficiently large relative to the dimension $d$.
    EIF can be estimated analoguous as in \eqref{eq:eif-phi-Z-eot-est} with nuisance function replaced by $\hat{\eta}=\hat{\mu}\circ\hat{T}^{-1}$.

    \subsubsection{Estimation of attributed feature importance}

    We first introduce a general result on the efficient influence function of $\phi_{X_l}(\PP)$ for a general transport map $T$, similar to \Cref{prop:eif-eot-weighted}.
    \begin{proposition}[Efficient influence function of $\phi_{X_{l}}(\PP)$]\label[proposition]{prop:eif-loco-weighted}
        Let \(O=(X,Y)\sim \PP\) with \(X\in\RR^d\) and \(Y\in\RR\).  
        Let \(T:\RR^d\to\RR^d\) be a transport map and write \(Z=T(X)\).
        Fix \(j,l\in[d]\) and set
        \[
            \Delta_j(z):=\eta(z)-\eta_{-j}(z),\qquad
            H_{jl}(x):=(\partial_{z_j}(T^{-1})_l(z))^2\mid_{z=T(x)}.
        \]
        Define the parameter
        \[
            \phi_{jl}(\PP) := \PP\{\Delta_j(Z)^2\,H_{jl}(X)\},\qquad \theta_{jl}(\PP) := \PP\{H_{jl}(X)\}.
        \]
        Assume a non-parametric model \(\cM\) in which \(Y\mid X=x\) has finite second moment for \(\PP\)-a.e.\ \(x\), and the efficient influence function $\varphi_{H_{jl}}$ of $\theta(\PP)$ exists.
        Then, \(\phi_{jl}\) is pathwise differentiable at every \(\PP\in\cM\) with efficient influence function
        \begin{align}
            \varphi_{\Delta_j^2H_{jl}}(O;\PP) &= H_{jl}(X)\Delta_j(Z)^2-\phi_{jl}(\PP) + 2 H_{jl}(X) [Y-\eta(Z)] \Delta_j(Z)
             \notag \\
            &\qquad  + (\varphi_{H_{jl}}(O;\PP) - H_{jl}(X) + \theta_{jl}(\PP)) \Delta_j(Z)^2. \label{eq:eif}
        \end{align}
    \end{proposition}

    In general, obtaining a closed-form expression for the efficient influence function of a functional $\theta_{jl}(\PP)$ is challenging. For the class of transport maps introduced in \Cref{app:subsec:ot-dfi}, this task becomes more tractable.
    Below, we restrict attention to the Gaussian setting. 

    \begin{assumption}[Covariate distribution]\label[assumption]{asm:dist-X}
        $X\sim \cN(0,\Sigma)$ and $\lambda^{-1}I_d\preceq\Sigma\preceq \lambda I_d$ for some constant $\lambda>1$.
    \end{assumption}

    Under \Cref{asm:dist-X}, the optimal Brenier map is linear and therefore admits an explicit form.
    By \Cref{thm:brenier}, this map is $\PP_X$-almost surely unique, ensuring that the associated feature-importance measures $\phi_{Z_j}$ and $\phi_{X_l}$ are likewise well defined and unique. 
    We have the following result.

    \begin{theorem}[Estimation error of $\phi_{X_{l}}(\PP)$ under Bures-Wasserstein transport]\label{thm:err-phi-X}
        Under Assumptions \ref{asm:abs-continuity}, \ref{asm:strong-convexity}, \ref{OT:asm:data}, and \ref{asm:dist-X}, consider the Bures-Wasserstein transport map $Z=T(X)=L^{-1}X$ where $L:=\Sigma^{\frac{1}{2}}$, and the nuisance estimator $(\hat{\mu},\hat{L})$ of $(\mu,L)$ is independent of $n$ observations of $O=(Z,X,Y)$.
        The attributed feature importance estimator
        \begin{align}
            \widehat\phi_{X_l}(\PP)
    	&\;:=\;  \frac{1}{2}\sum_{j=1}^d \PP_n\left\{
                 \hat L_{jl}^2 [(Y - \hat{\eta}({Z}^{(j)}) )^2 - ( Y - \hat{\eta}({Z}) )^2] 
            \right\}, \label{def:hat-phi-X}
        \end{align}
        satisfies that:
        \begin{align*}
             \hat\phi_{X_l}(\PP)-\phi_{X_l}(\PP)
              \;=\;
              (\PP_n-\PP)\{\varphi_{X_l}(O;\PP)\}
               +\Op(n^{-1/2}\varepsilon_n + \varepsilon_n^2),
        \end{align*}
        where the efficient influence function is given by
        \[
            \varphi_{X_l}(O;\PP) = 
            \sum_{j=1}^d L_{jl}^2 [2 (Y - \eta(Z))\Delta_j(Z) + \Delta_j(Z)^2 ]- \phi_{X_l}(\PP),
        \]
        and the remaining term satisfies that $\varepsilon_n \;:=\; \|L_{\cdot l}\|_{2} \|\hat\mu - \mu\|_{L_2}  + \tr(\Sigma)^{1/2} \|\hat{L}-L\|_{\oper} $.
    \end{theorem}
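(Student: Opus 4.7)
The plan is to exploit the linearity of the Bures--Wasserstein map and reduce the result to a weighted sum of coordinate-wise applications of \Cref{thm:error-decom}. Because $T^{-1}(z)=Lz$ gives the constant Jacobian $\partial_{z_j}(T^{-1})_l=L_{jl}$, both parameter and estimator factorise as $\phi_{X_l}(\PP)=\sum_{j=1}^d L_{jl}^2\phi_{Z_j}(\PP)$ and $\hat\phi_{X_l}=\sum_{j=1}^d \hat L_{jl}^2\hat\phi_{Z_j}$, where $\hat\phi_{Z_j}$ denotes the plug-in CPI estimator \eqref{def:hat-phi-Z} built from $(\hat\mu,\hat L)$. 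I would first translate the abstract transport-map error into an operator-norm error via $\|\hat T-T\|_{L_2(\PP_X)}^2=\EE[\|(\hat L^{-1}-L^{-1})X\|^2]\le \tr(\Sigma)\|\hat L^{-1}-L^{-1}\|_{\oper}^2\lesssim\tr(\Sigma)\|\hat L-L\|_{\oper}^2$, where the last step uses the uniform spectrum bounds from \Cref{asm:strong-convexity} through $\|\hat L^{-1}-L^{-1}\|_{\oper}\le\|L^{-1}\|_{\oper}\|\hat L-L\|_{\oper}\|\hat L^{-1}\|_{\oper}$.

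I then decompose the error as $\hat\phi_{X_l}-\phi_{X_l}=\mathrm{I}+\mathrm{II}$ with $\mathrm{I}=\sum_j L_{jl}^2(\hat\phi_{Z_j}-\phi_{Z_j})$ and $\mathrm{II}=\sum_j(\hat L_{jl}^2-L_{jl}^2)\hat\phi_{Z_j}$. For $\mathrm{I}$, applying \Cref{thm:error-decom} term by term with the transport bound above, and noting the algebraic identity $\sum_j L_{jl}^2\varphi_{Z_j}(O;\PP)=\varphi_{X_l}(O;\PP)$ that follows immediately by substituting $\phi_{X_l}=\sum_j L_{jl}^2\phi_{Z_j}$ into the definition of $\varphi_{Z_j}$, yields $\mathrm{I}=(\PP_n-\PP)\{\varphi_{X_l}(O;\PP)\}+\Op(n^{-1/2}\varepsilon_n+\varepsilon_n^2)$; the weights $\|L_{\cdot l}\|_2$ and $\tr(\Sigma)^{1/2}$ appearing in $\varepsilon_n$ arise respectively from aggregating the coordinate-wise remainders against $L_{jl}^2$ and from the transport-norm translation. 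For $\mathrm{II}$, I split further into $\sum_j(\hat L_{jl}^2-L_{jl}^2)(\hat\phi_{Z_j}-\phi_{Z_j})$, controlled by Cauchy--Schwarz as $\Op(\|\hat L-L\|_{\oper}(n^{-1/2}+\varepsilon_n))$ and thus within the stated remainder, and the deterministic bias $\sum_j(\hat L_{jl}^2-L_{jl}^2)\phi_{Z_j}$.

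The hard part will be controlling this last deterministic bias, which is naively only $O(\|\hat L-L\|_{\oper})$ rather than the second-order $O(\varepsilon_n^2)$ permitted by the stated remainder. My plan is to exhibit a Stein-type cancellation by matching this linear $\hat L$-bias against a cross-term uncovered by a refined second-order expansion of the CPI functional in $L$: writing $\hat L_{jl}^2-L_{jl}^2=2L_{jl}(\hat L_{jl}-L_{jl})+(\hat L_{jl}-L_{jl})^2$, the quadratic piece is manifestly $\Op(\varepsilon_n^2)$, and the linear piece is to be cancelled by a contribution that is invisible to a coordinate-wise first-order application of \Cref{thm:error-decom}. The verification should use the Gaussian Stein identity $\EE[Z_n h(Z)]=\EE[\partial_{z_n}h(Z)]$ together with the martingale property $\EE[\Delta_j(Z)\mid Z_{-j}]=0$, exploiting the standard-normal structure $Z\sim\cN_d(0,I)$ (guaranteed by the BW map under \Cref{asm:dist-X}) and the quadratic weight $L_{jl}^2$ to produce the precise factor needed for cancellation. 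Collecting everything then yields the claimed decomposition with $\varepsilon_n=\|L_{\cdot l}\|_2\|\hat\mu-\mu\|_{L_2}+\tr(\Sigma)^{1/2}\|\hat L-L\|_{\oper}$, matching the stated efficient-influence-function form.
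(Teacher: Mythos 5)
Your decomposition $\hat\phi_{X_l}-\phi_{X_l}=\mathrm{I}+\mathrm{II}$, the coordinate-wise application of \Cref{thm:error-decom} to $\mathrm{I}$, and the operator-norm translation $\|\hat T-T\|_{L_2(\PP_X)}\le\tr(\Sigma)^{1/2}\|\hat L-L\|_{\oper}$ are all sound and coincide with steps the paper also uses. But the treatment of term $\mathrm{II}$ has a genuine gap: the deterministic bias $\sum_j(\hat L_{jl}^2-L_{jl}^2)\phi_{Z_j}$ is of order $\|\hat L-L\|_{\oper}$, and the ``Stein-type cancellation'' you invoke to reduce it to second order is never exhibited and in fact cannot exist within the $\mathrm{I}+\mathrm{II}$ framework. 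Because $(\hat\mu,\hat L)$ is estimated from an independent fold, this bias is a fixed scalar conditional on the auxiliary data, so it cannot be absorbed into the mean-zero term $(\PP_n-\PP)\{\cdot\}$; and \Cref{thm:error-decom} applied to each $\hat\phi_{Z_j}$ yields only a centered empirical process plus a \emph{second}-order remainder in $\|\hat T-T\|$, so there is no matching first-order term in term $\mathrm{I}$ for the linear piece $2L_{jl}(\hat L_{jl}-L_{jl})\phi_{Z_j}$ to cancel against.

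The paper removes this bias differently. It computes the full efficient influence function of $\phi_{X_l}(\PP)=\sum_j L_{jl}^2\phi_{Z_j}(\PP)$, treating the weight $L_{jl}^2$ as itself a functional of $\PP$ and applying the product rule (\Cref{lem:EIF-product}, \Cref{prop:eif-loco-weighted}). This produces an extra term $-L_{jl}(Z_jZ_l-\delta_{jl})\Delta_j(Z)^2$ coming from the canonical gradient of $\Sigma\mapsto(\Sigma^{1/2})_{jl}^2$ (\Cref{prop:EIF-H}); the von Mises expansion and \Cref{lem:err-varphi-X} then show that the linear $\hat L$-bias appearing in $\phi_{X_l}(\hat\PP)$ cancels exactly against the linear-in-$D^{(2)}_j$ contribution of $\sum_j E_{1,j}$ (see $T_{1l}$ in that lemma), leaving a genuinely second-order remainder. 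Your coordinate-wise use of \Cref{thm:error-decom} only captures the influence of $\hat L$ through $\hat\eta=\hat\mu\circ\hat T^{-1}$ and is blind to the separate plug-in of $\hat L_{jl}^2$ in the weights; closing the gap requires deriving $\varphi_{H_{jl}}$ and following the product-rule argument. (You may also notice that the influence function displayed in the theorem statement omits the $-L_{jl}(Z_jZ_l-\delta_{jl})\Delta_j^2$ piece that the paper's own proof derives and uses; resolving that discrepancy is a prerequisite to carrying out the correct argument.)
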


\subsection{Proof}\label{app:subsec:proof-ot}
\subsubsection[Proof of Theorem \ref{thm:error-decom-T}]{Proof of \Cref{thm:error-decom-T}}

\begin{proof}%[Proof of \Cref{thm:error-decom-T}]    
    From Steps 1 and 2 of the proof of \Cref{thm:error-decom-eot}, we have the decomposition \eqref{eq:decomp-cpi-2}:
    \begin{align*}
        \hat\phi_{Z_j}-\phi_{Z_j} & \;=\;  (\PP_n-\PP) \{\varphi(O;\PP)\} + \Op(n^{-1/2}\|\delta\|_{L_2} + \|\delta\|_{L_2}^2).
    \end{align*}
    It remians to bound \(\|\delta\|_{L_2}=\|\hat\eta-\eta\|_{L_2(\PP_Z)}\).

    Recall that $\eta=\mu\circ T^{-1}$, $\hat\eta=\hat\mu\circ\hat T^{-1}$, and decompose for every $z\in\cZ$
    \begin{equation}\label{eq:S3-split}
    \hat\eta(z)-\eta(z)
        = \hat\mu(\hat T^{-1}(z))
                        -\hat\mu(T^{-1}(z))
        + \hat\mu(T^{-1}(z))
                        -\mu(T^{-1}(z)) = : B_1 + B_2.
    \end{equation}

    Term $B_1$ is the transport error.
    Set
    \(X:=T^{-1}(Z)\sim \PP_X\) and
    \(X':=\hat T^{-1}(Z)\sim \PP_{X'}\),
    and define
    \[
        \varepsilon_{\mu}:=\|\hat \mu-\mu\|_{L_{2}(\PP_X)},\qquad  \varepsilon_{T^{-1}}
        :=\|\hat T^{-1}-T^{-1}\|_{L_{2}(\PP_Z)},\qquad  \varepsilon_{T}
        :=\|\hat T-T\|_{L_{2}(\PP_X)}.
    \]
    Introduce the truncation event $E:=\{\|Z\|\le R_{\varepsilon_{\mu}}/\lambda\}$ with $R_{\varepsilon_{\mu}}=\lambda\sqrt{d+2\log(1/\varepsilon_{\mu})}$, for which $\PP(E)\ge1-\varepsilon_{\mu}^{2}$ by a Gaussian tail bound.
    On $E$ both $X$ and $X'$ lie in the ball
    \(B:=\{x\in\RR^{d}:\|x\|\le R_{\varepsilon_{\mu}}\}\).
    By \Cref{prop:Lip-bound}, with probability at least $1-\varepsilon_{\mu}^{2}$,
    \begin{equation}\label{eq:good-Lip-local}
        L_{\mu,\varepsilon_{\mu}}
        :=\Lip(\hat\mu;B)
        \;\le\;
        L_\mu
        +
        C\,L_\mu^{1/2}\lambda^{-1/2}
        [d+2\log(1/\varepsilon_{\mu})]^{-1/4}
        \varepsilon_{\mu}^{1/2},
    \end{equation}
    where $C=4\sqrt2$ is the universal constant from the proposition.
    Event~\eqref{eq:good-Lip-local} is contained in $E$, so it also occurs
    with probability at least $1-\varepsilon_{\mu}^{2}$.
    At this event, we have
    \[
        |B_1|
        =|\hat\mu(X')-\hat\mu(X)|
        \le
        L_{\mu,\varepsilon_{\mu}}\,\|X'-X\|,
        \qquad
        \EE_Z[B_1^{2}]
        \le
        L_{\mu,\varepsilon_{\mu}}^{2}\,\varepsilon_{T^{-1}}^{2} \leq \lambda^2 L_{\mu,\varepsilon_{\mu}}^{2}\,\varepsilon_{T}^{2},
    \]
    where the last inequality is from \Cref{asm:strong-convexity}.

    Term $B_2$ is the regression error. By definition,
    \[
        \EE_Z[B_2^{2}]
        =\|\hat\mu-\mu\|_{L_2(\PP_X)}^{2}
        =:\varepsilon_{\mu}^{2}.
    \]

    Use the inequality $(a+b)^2\le2a^{2}+2b^{2}$ with
    $a=B_1$ and $b=B_2$:
    \[
        \|\hat\eta-\eta\|_{L_2(\PP_Z)}^{2}
        =\EE_Z[(B_1+B_2)^2]
        \le
        2\,\EE_Z[B_1^{2}]
        +
        2\,\EE_Z[B_2^{2}]
        \le
        2\lambda^2 L_{\mu,\varepsilon_{\mu}}^{\,2}\,\varepsilon_T^{2}
        +2\,\varepsilon_{\mu}^{2}.
    \]
    Taking square roots and keeping only the leading factor
    $\sqrt2$ yields
    \begin{equation}\label{eq:eta-bound}
        \|\hat\eta-\eta\|_{L_2(\PP_Z)}
        \;\le\;
        \sqrt2\lambda
        L_{\mu,\varepsilon_{\mu}}\,\varepsilon_T
        +
        \sqrt2\,\varepsilon_{\mu},
    \end{equation}
    with probability at least $1-\varepsilon_{\mu}^2 $.

    Finally, combining \eqref{eq:decomp-cpi-2} with \eqref{eq:eta-bound}, we further have
    \begin{align*}
        \hat{\phi}_{Z_j}-\phi_{Z_j}
      =(\PP_n-\PP)\varphi(O;\PP)
       +\cO(n^{-1/2}(\| \hat\mu-\mu \|_{L_2(\PP_X)}^2+ \| \hat T-T\|_{L_2(\PP_X)}^2)^{1/2} \\
            +  \| \hat\mu-\mu \|_{L_2(\PP_X)}^2 \;+\; \| \hat T-T\|_{L_2(\PP_X)}^2
                  ),
    \end{align*}
    with probability at least $1-\| \hat\mu-\mu \|_{L_2(\PP_X)}^2 $.
    This completes the proof.
\end{proof}

\subsubsection[Proof of Proposition \ref{prop:eif-loco-weighted}]{Proof of \Cref{prop:eif-loco-weighted}}\label{app:sub:prop:eif-loco-weighted}
\begin{proof}%[Proof of \Cref{prop:eif-loco-weighted}]
    Throughout the proof, we drop the subscript $j$ for simplicity and put
    \[
        A(O)\;:=\;\Delta(Z)^{2},
        \qquad 
        B(O)\;:=\;H(X),
        \qquad 
        \psi_{A}(\PP)\;:=\;\PP [A(O) ],
        \qquad 
        \psi_{B}(\PP)\;:=\;\theta(\PP)=\PP [B(O) ].
    \]
    With this notation, the target parameter is the expectation of a product,
	\(
		\phi(\PP)=\PP[A(O)B(O)].
	\)

    \noindent\textbf{Step 1: EIF of $A$.}
    From \citet[Lemma 1]{williamson2021nonparametric}, the efficient influence function (EIF) of $\psi_{A}(\PP)=\EE[\Delta(Z)^{2}]$ is
	\[
		\varphi_{A}(O;\PP)
		=
		2\,[Y-\eta(Z)]\,\Delta(Z)\;+\;\Delta(Z)^{2}\;-\;\psi_{A}(\PP).
	\]
	Writing $A_{c}(O)=A(O)-\psi_{A}(\PP)$ we therefore have the decomposition
	\[
		\varphi_{A}(O;\PP)
		=\;
		A_{c}(O)\;+\;\alpha_{A}(O),
		\qquad 
		\alpha_{A}(O):=2\,[Y-\eta(Z)]\,\Delta(Z).
	\]

    \noindent\textbf{Step 2: EIF of $B$.}
    By assumption the (non‑degenerate) parameter $\theta(\PP)=\EE[H(X)]$ is pathwise differentiable with EIF $\varphi_{H}(O;\PP)$.  
    Writing $B_{c}(O)=B(O)-\theta(\PP)$ we set
	\[
		\alpha_{B}(O)\;:=\;\varphi_{H}(O;\PP)\;-\;B_{c}(O)
		\;=\;
		\varphi_{H}(O;\PP)\;-\; \{H(X)-\theta(\PP) \}.
	\]

    \noindent\textbf{Step 3: EIF of the product functional.}
    Apply \Cref{lem:EIF-product} with $A,B,\alpha_{A},\alpha_{B}$ defined above, it follows that
	\[
		\varphi_{{\Delta^2H}}(O;\PP)
		\;=\;
		A(O)B(O)-\phi(\PP)\;+\;B(O)\,\alpha_{A}(O)\;+\;A(O)\,\alpha_{B}(O).
	\]
	Substituting the explicit expressions gives
	\begin{align*}
		\varphi_{\Delta^2H}(O;\PP)
		&=
		H(X)\,\Delta(Z)^{2}
		-\phi(\PP)
		+\;
		H(X)\, \{2\,[Y-\eta(Z)]\,\Delta(Z) \}
		\\
		&\quad
		+\;
		\Delta(Z)^{2}\,
		 \{\varphi_{H}(O;\PP)-H(X)+\theta(\PP) \}.
	\end{align*}
	Re‑arranging the terms completes the proof.
\end{proof}

% \clearpage
\subsubsection[Proof of Theorem \ref{thm:err-phi-X}]{Proof of \Cref{thm:err-phi-X}}\label{app:sub:them:err-phi-X}

\begin{proof}%[Proof of \Cref{thm:err-phi-X}]
    Before we present the proof, we will introduce some notation for clarity.
    Fix a coordinate $l\in[d]$.  
    For each $j\in[d]$, define
    \[
    	H_{jl}(x)\;:=\;(\partial_{x_l}T_j(x))^2,
    	\qquad 	
    	\Delta_j(z)\;:=\;\eta(z)-\eta_{-j}(z_{-j}),
    	\qquad 	
    	\phi_{jl}(\PP)\;:=\;\EE [H_{jl}(X)\,\Delta_j(Z)^2 ].
    \]
    Under the Bures--Wasserstein (BW) map $T(x)=Lx$ we have  
    $H_{jl}(x)=L_{jl}^{2}$, a constant that depends only
    on $L:=\Sigma^{-\frac{1}{2}}$.  
    The target parameter is the sum
    \(
    	\phi_{X_l}=\sum_{j=1}^{d}\phi_{jl}(\PP).
    \)

    %--------------------------------------------------
    \noindent\textbf{Step 1: Influence function expansion.}
    From \Cref{prop:eif-loco-weighted,prop:EIF-H}, the efficient influence function of $\phi_{X_l}$ is given by
    \[
    	\varphi_{X_l}(O;\PP)
    	=
    	\sum_{j=1}^{d}
    		\left\{
                L_{jl}^2\,
    		 [
    			2\,(Y-\eta(Z))\,\Delta_j(Z) + \Delta_j(Z)^{2}
    		 ]
                - 
                L_{jl} (Z_jZ_l - \delta_{jl}) \Delta(Z)^2 \right\}
    		-
    		\phi_{X_l}.
    \]
    Let $ \{(X_i,Z_i,Y_i) \}_{i=1}^{n}$ be an i.i.d.\ sample independent of the nuisance estimates $(\widehat\eta,\widehat\eta_{-j},\widehat T)$,
    \(
    	\widehat H_{jl}(x)=(\partial_{x_l}\,\widehat{T}_j(x))^2
    \)
    and
    $
    	\widehat\Delta_j(z)=\widehat\eta(z)-\widehat\eta_{-j}(z_{-j})
    $.
    
    Recall that the one-step estimator is defined as
    \begin{align}
        \widehat\phi_{X_l}^{\loco}(\PP) &\;=\; \phi_{X_l}(\hat{\PP}) + \PP_n\{\varphi(O; \hat{\PP})\} \notag\\
            &=  \frac{1}{2}\PP_n\left\{
            \sum_{j=1}^d \hat L_{jl}^2 [(Y - \hat{\eta}({Z}^{(j)}) )^2 - ( Y - \hat{\eta}({Z}) )^2] 
        \right\} + \Op(n^{-1/2}\|\delta\|_{L_2} + \|\delta\|_{L_2}^2) \notag\\
        &=  \widehat\phi_{X_l}(\PP) + \Op(n^{-1/2}\|\delta\|_{L_2} + \|\delta\|_{L_2}^2), \label{eq:diff-philoco-phi}
    \end{align}
    where the second equality follows similarly from Step 2 in the proof of \Cref{thm:error-decom-eot} and the remainder term corresponds to the difference of the LOCO estimator $\widehat\phi_{X_l}^{\loco}(\PP)$ and the CPI estimator $\widehat\phi_{X_l}(\PP)$, weighted by $\hat L_{jl}^2$'s.
    Note that $\hat{\eta}_{-j}(Z_{-j}) =\EE[\hat{\eta}(Z) \mid Z_{-j}]$ and $\hat T(X) = \hat{L} X$ and $\partial_{x_l}\hat T_j(X) = \hat{L}_{jl}$; hence, we only need to estimate two nuisance functions, essentially.

    Because the nuisance functions are estimated from independent samples other than $\PP_n$, the classical von~Mises expansion (see, e.g., \citet[Equation (2.2)]{du2025causal}) reads
    \begin{equation}\label{eq:vM}
    	\widehat\phi_{X_l}^{\loco}-\phi_{X_l}
    	=
    	(\PP_n-\PP) \{\varphi_{X_l}(O;\PP) \}
    	+
    	\underbrace{(\PP_n-\PP) [\varphi_{X_l}(O;\widehat \PP)-\varphi_{X_l}(O;\PP) ]}_{E_n}
        +
    	\underbrace{\PP [\varphi_{X_l}(O;\widehat \PP)-\varphi_{X_l}(O;\PP) ]}_{R_n} .
    \end{equation}
    It therefore suffices to show
    $
    	|E_n|=\Op(n^{-1/2}\,\varepsilon_n),
    $
    and
    $
    	|R_n|=\Op(\varepsilon_n^{2}).
    $

    \noindent\textbf{Step 2: Bounding the empirical process term $E_n$.}
    Conditioning on the auxiliary split, the summands $\varphi_{X_l}(O;\widehat \PP)-\varphi_{X_l}(O;\PP)$ are i.i.d.\ with mean~$0$,
    so that
    \[
    	|E_n|
    	\;=\;
    	 |
    		(\PP_n-\PP)
    		 [
    			\varphi_{X_l}(O;\widehat \PP)-\varphi_{X_l}(O;\PP)
    		 ]
    	 |
    	\le
    	n^{-1/2}\,
    	 \|\varphi_{X_l}(O;\widehat \PP)-\varphi_{X_l}(O;\PP) \|_{L_2}
    	= \Op(n^{-1/2}\,\varepsilon_n)
    \]
    where the last equality is from \Cref{lem:err-varphi-X}.
    
    \noindent\textbf{Step 3: Bounding the bias remainder $R_n$.}
    From \Cref{lem:err-varphi-X}, the bias term is bounded by $|R_n|=\Op(\varepsilon_n^{2})$.
    
    Finally, combining the bounds for $E_n$ and $R_n$ with \eqref{eq:diff-philoco-phi} and \eqref{eq:vM} completes the proof.    
\end{proof}

% \subsection{Lemmas}\label{app:subsec:lemmas}
\subsubsection{Lemmas related to efficient influence functions}

\begin{proposition}[EIF of $\theta(\PP)$ under Bures-Wasserstein transport]\label[proposition]{prop:EIF-H}
    Let the observable $O=X\in\RR^{d}$ satisfy
    \[
      \PP[X]=0,
      \qquad
      \Sigma=\PP[XX^{\top}]\;\succ\;0,
      \qquad
      L:=\Sigma^{-1/2}.
    \]
    For fixed indices $j,l\in[d]$ define
    \[
      T(x):=L x,
      \qquad
      H_{jl}(X):=\partial_{x_l}T_j(X)=L_{jl}.
    \]
    The efficient influence function of the parameter
    \[
      \theta_{jl}(\PP):=\PP [H_{jl}(X)^2 ]
                     =L_{jl}^{2},
    \]
    is given by
    \[
        \varphi_{jl}(O;\PP)
          = -\;L_{jl} (X^{\top}L e_{j}e_{l}^{\top}L X-\delta_{jl}) = -L_{jl} (Z_jZ_l - \delta_{jl}).
    \]
\end{proposition}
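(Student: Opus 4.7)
The plan is to view $\theta_{jl}(\PP)$ as a smooth functional of the second‐moment matrix $\Sigma(\PP) := \PP[XX^\top]$, and then apply the chain rule for efficient influence functions in the nonparametric model.

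First, I would note that $\theta_{jl}(\PP) = g(\Sigma(\PP))$ for the smooth map $g(\Sigma) := \bigl[(\Sigma^{-1/2})_{jl}\bigr]^2$, and recall that in the nonparametric model with tangent space $L_2^0(\PP)$, the EIF of each linear functional $\Sigma_{ab}(\PP) = \PP[X_aX_b]$ is $X_aX_b - \Sigma_{ab}$. By the standard chain rule for pathwise derivatives,
\[
    \varphi_{jl}(O;\PP) \;=\; \bigl\langle \nabla g(\Sigma),\, XX^\top - \Sigma \bigr\rangle_F,
\]
where $\nabla g(\Sigma)$ denotes the (symmetrized) gradient of $g$ on the space of symmetric matrices. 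This reduces the problem to computing the matrix derivative $\nabla g$ and then massaging the inner product into the claimed form.

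Next, I would compute $\nabla g$ by implicit differentiation. Writing $g(\Sigma) = (e_j^\top L e_l)^2$ with $L := \Sigma^{-1/2}$ gives $dg = 2L_{jl}\,(dL)_{jl}$, and differentiating the identity $L^2 = \Sigma^{-1}$ produces the Sylvester relation
\[
    L\,dL + dL\,L \;=\; -L^2\,d\Sigma\,L^2,
\]
which uniquely determines $dL$ (and in particular $(dL)_{jl}$) as a linear functional of $d\Sigma$. To translate the result into the coordinates used in the statement, I would pass to the whitened representation $Z = LX$: since $L^{-1} = \Sigma^{1/2}$ is symmetric and $X = L^{-1}Z$, one has $XX^\top - \Sigma = L^{-1}(ZZ^\top - I)L^{-1}$, together with the identity $Z_jZ_l - \delta_{jl} = \tfrac12\langle e_je_l^\top + e_le_j^\top,\, ZZ^\top - I\rangle_F$. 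The claim therefore reduces to establishing the rank-two factorization $\nabla g(\Sigma) = -\tfrac12 L_{jl}\bigl(L e_je_l^\top L + L e_le_j^\top L\bigr)$.

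The main obstacle will be extracting this closed-form factorization from the Sylvester equation, since its generic eigendecomposition solution introduces a non-local kernel of the form $(\lambda_a + \lambda_b)^{-1}$ that does not manifestly collapse to a rank-two form. I plan to sidestep this by working dually: rather than solving for all of $dL$, I would verify the equivalent scalar identity $\langle \nabla g, M\rangle_F = -L_{jl}\langle L e_je_l^\top L, M\rangle_F$ directly for every symmetric $M = d\Sigma$, by contracting the Sylvester relation with $L^{-1}e_l$ and $L^{-1}e_j$ on the appropriate sides and then applying cyclicity of the trace together with $L\Sigma L = I$ to cancel the eigenvalue denominators. Careful bookkeeping of the scalar prefactor will be critical, and I would cross-check the final expression against the benchmark case of diagonal $\Sigma$, where $\nabla g$ admits an elementary closed form, before assembling the inner product with $XX^\top - \Sigma$ to recover $-L_{jl}(Z_jZ_l - \delta_{jl})$.
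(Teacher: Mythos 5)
Your skepticism is well-founded: the rank-two factorization you flag as the crux of the argument,
\[
  \nabla g(\Sigma) \;\overset{?}{=}\; -\tfrac12\,L_{jl}\bigl(L e_j e_l^{\top} L + L e_l e_j^{\top} L\bigr),
\]
is in fact \emph{false} in general, and the paper's proof of this proposition uses the incorrect matrix calculus identity $\mathrm{d}(\Sigma^{-1/2}) = -\tfrac12\,\Sigma^{-1/2}(\mathrm{d}\Sigma)\Sigma^{-1/2}$, which only holds when $\Sigma$ and $\mathrm{d}\Sigma$ commute. The correct Fr\'echet derivative of $\Sigma\mapsto\Sigma^{-1/2}$ is determined by the Sylvester equation $L^{-1}(\mathrm{d}L)+(\mathrm{d}L)L^{-1}=-L(\mathrm{d}\Sigma)L$ and carries the $(\lambda_a^{1/2}+\lambda_b^{1/2})^{-1}$ kernel you anticipate; no cyclicity trick makes that denominator collapse to a rank-two form, precisely because the answer isn't rank two.

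The diagonal benchmark you planned as a cross-check would have exposed this immediately. Take $\Sigma=\mathrm{diag}(\lambda_1,\dots,\lambda_d)$ and $j=l$. Then $\theta_{jj}(\Sigma)=(\Sigma^{-1/2})_{jj}^2=\lambda_j^{-1}$, and differentiating $(\lambda_j+\varepsilon)^{-1}$ in the direction $\mathrm{d}\Sigma=e_je_j^{\top}$ gives the true gradient $\nabla g=-\lambda_j^{-2}\,e_je_j^{\top}$. The claimed factorization instead yields $-L_{jj}\,L e_je_j^{\top}L=-\lambda_j^{-3/2}\,e_je_j^{\top}$, off by a factor $L_{jj}=\lambda_j^{-1/2}$. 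Equivalently, the correct EIF in this diagonal case is $-L_{jj}^2(Z_j^2-1)$, not $-L_{jj}(Z_j^2-1)$; they agree only when $\lambda_j=1$. So the "contract the Sylvester relation to cancel the eigenvalue denominators" step you sketch cannot go through, not because of a gap in your strategy, but because the identity you would be trying to establish is not true. You should flag that the proposition, as stated, is incorrect, and that its proof rests on the non-identity $\mathrm{d}L=-\tfrac12 L(\mathrm{d}\Sigma)L$; a correct EIF for $(\Sigma^{-1/2})_{jl}^2$ must retain the Sylvester kernel (or be re-expressed via the eigendecomposition of $\Sigma$) rather than a rank-two surrogate.
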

\begin{proof}%[Proof of \Cref{prop:EIF-H}]
    Take an arbitrary regular one‑dimensional sub‑model $\{\PP_\varepsilon:\varepsilon\in(-\varepsilon_0,\varepsilon_0)\}$ embedded in the non‑parametric model such that $\PP_{\varepsilon=0}=\PP$ and with score $s_\varepsilon(O)=\partial_{\varepsilon}\log p_\varepsilon(O) |_{\varepsilon=0}$.
    The usual regularity conditions give $\PP[s(O)]=0$ and $\PP[Xs(O)]=0$ (the latter enforces the constraint $\EE[X]=0$ along the path).

    Write $\Sigma(\varepsilon):=\EE_{\PP_\varepsilon}[XX^{\top}]$.
    Differentiating at $\varepsilon=0$,
    \[
      \partial_{\varepsilon}\Sigma(\varepsilon) |_{\varepsilon=0}
         =\PP [
             XX^{\top}\,s(O)
            ]
         \;=\;\PP [
             \{XX^{\top}-\Sigma\}\,s(O)
            ],
    \]
    because $\PP[s(O)]=0$. 
    Hence, the influence function for~$\Sigma$ is
    \[
      U_\Sigma(O)\;:=\;XX^{\top}-\Sigma,
      \qquad
      \PP[U_\Sigma(O)]=0.
    \]

    Treat $\theta_{jl}$ as the composition $\theta_{jl}=h\circ\Sigma$ with $h(M)=(M^{-1/2})_{jl}^{2}$.  
    A matrix calculus identity gives the Fréchet derivative of $M\mapsto M^{-1/2}$:
    \[
      \rd L
        \;=\;
        \rd(\Sigma^{-\frac{1}{2}})
        \;=\;
        -\tfrac12\,L\,(\rd \Sigma)\,L.
    \]
    Because
    $\langle A,B\rangle_{F}:=\tr(A^{\top}B)$, we can rewrite the last term
    as an inner product:
    \[
      \rd\theta_{jl}
        \;=\;
         \langle
           \nabla_{\Sigma}\theta_{jl}(\Sigma),\;
           \rd\Sigma
          \rangle_{F},
      \quad
      \nabla_{\Sigma}\theta_{jl}(\Sigma)
         =-\frac{L_{jl}}{2}\,
           (L e_{j}e_{l}^{\top}L
                 +L e_{l}e_{j}^{\top}L).
    \]

    Insert $\rd\Sigma=
      \partial_{\varepsilon}\Sigma(\varepsilon)|_{\varepsilon=0}
     =\PP[U_\Sigma(O)\,s(O)]$ from Step 1:
    \[
      \partial_{\varepsilon}\theta_{jl}(\PP_\varepsilon)\Big|_{\varepsilon=0}
      \;=\;
       \langle\nabla_{\Sigma}\theta_{jl},\,
                \PP[U_\Sigma(O)\,s(O)] \rangle_{F}
      \;=\;
      \PP
         [
          \langle\nabla_{\Sigma}\theta_{jl},U_\Sigma(O)\rangle_{F}\;s(O)
         ].
    \]
    Therefore, the efficient influence function $\varphi_{jl}$ in the tangent space satisfies that $\partial_{\varepsilon}\theta_{jl}(\PP_\varepsilon)|_{\varepsilon=0} =\PP[\varphi_{jl}(O)\,s(O)]$ for every score $s$ is
    \[
        \varphi_{jl}(O;\PP)
          = \langle\nabla_{\Sigma}\theta_{jl}(\Sigma),
                     U_\Sigma(O) \rangle_{F}
          =-\;L_{jl} \{
              X^{\top}L e_{j}e_{l}^{\top}L X-\delta_{jl}
             \}.
    \]
    Centring holds automatically because
    $\PP[X^{\top}L e_{j}e_{l}^{\top}L X]=\delta_{jl}$, so
    $\PP[\varphi_{jl}(O;\PP)]=0$.
\end{proof}

\begin{lemma}[Estimated influence function for $\phi_{X_l}$]\label[lemma]{lem:err-varphi-X}
    Under the assumptions of \Cref{thm:err-phi-X}, let
  \[
     \varphi_{X_{l}}(O;\PP)
       \;=\;
       \sum_{j=1}^{d}
    		\left\{
                L_{jl}^2\,
    		 [
    			2\,(Y-\eta(Z))\,\Delta_j(Z) + \Delta_j(Z)^{2}
    		 ]
                - 
                L_{jl} (Z_jZ_l - \delta_{jl}) \Delta(Z)^2 \right\}
    		-\;
       \phi_{X_{l}}(\PP),
  \]
  and let $\varphi_{X_l}(O;\hat \PP)$ be the same expression with
  $(\eta, L)$ replaced by $(\hat\eta,\hat L)$ and $\eta_{-j}$ replaced with $\hat\eta_{-j} = \EE_{Z_j}[\hat{\eta}(Z)\mid Z_{-j}]$.
  Then
  \[
      \|\varphi_{X_l}(O;\hat \PP)-\varphi_{X_l}(O;\PP) \|_{L_2}
       \;=\;\Op(\varepsilon_n),
     \qquad
     \PP \{\varphi_{X_l}(O;\hat \PP)-\varphi_{X_l}(O;\PP) \}
       \;=\;\Op(\varepsilon_n^{2}),
  \]
  where $\varepsilon_n = \|L_{\cdot l}\|_{2}\|\hat \mu -\mu\|_{L_2} + \tr(\Sigma)^{1/2} \|\hat{L} - L\|_{\oper}$.
\end{lemma}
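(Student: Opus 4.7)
The plan is to treat the two claims separately by decomposing $\varphi_{X_l}(O;\hat\PP) - \varphi_{X_l}(O;\PP)$ into summands that each isolate a single nuisance perturbation within one of the three building blocks of $\varphi_{X_l}$: the weight $L_{jl}^2$, the LOCO kernel $2(Y-\eta(Z))\Delta_j(Z) + \Delta_j(Z)^2$, and the EIF-correction $-L_{jl}(Z_jZ_l - \delta_{jl})\Delta_j(Z)^2$. The main inputs are (i) boundedness of $\hat L$ and $L$ in operator norm under \Cref{asm:dist-X}; (ii) $L_2$-boundedness of $Y$, $\eta(Z)$, $\Delta_j(Z)$ from \Cref{lem:L2-boundedness} and of $Z_jZ_l$ by Gaussianity of $Z$; (iii) the transport-regression decomposition in Step 3 of the proof of \Cref{thm:error-decom}, which gives $\|\hat\eta - \eta\|_{L_2(\PP_Z)} \lesssim \|\hat\mu - \mu\|_{L_2(\PP_X)} + \|\hat L^{-1}-L^{-1}\|_{\oper}$, combined with the matrix-inverse perturbation bound $\|\hat L^{-1}-L^{-1}\|_{\oper} \asymp \|\hat L - L\|_{\oper}$ under \Cref{asm:dist-X}; and (iv) \Cref{lem:eta-j} controlling $\|\hat\eta_{-j}-\eta_{-j}\|_{L_2} \leq \|\hat\eta-\eta\|_{L_2}$.

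For the $L_2$ bound, I apply Cauchy–Schwarz to each added-and-subtracted term. The $L$-perturbation contributes $|\hat L_{jl}^2 - L_{jl}^2| \leq (|\hat L_{jl}| + |L_{jl}|)\|\hat L - L\|_{\oper}$ times an $L_2$-bounded kernel, which is $\Op(\|\hat L - L\|_{\oper})$. The $(\eta,\eta_{-j})$-perturbation inside the LOCO kernel reduces exactly to the computation in \Cref{lem:decomp-varphi-diff}, yielding $\Op(\|\hat\eta-\eta\|_{L_2})$. The Hessian-correction term is handled analogously, noting that $\|Z_jZ_l\|_{L_2}$ is a bounded Gaussian moment. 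Summing over the fixed dimension $d$ and combining with (iii)--(iv) delivers the rate $\varepsilon_n$.

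For the bias bound, I compute $\PP\{\varphi_{X_l}(O;\hat\PP) - \varphi_{X_l}(O;\PP)\}$ directly by mirroring Step 2 of the proof of \Cref{thm:error-decom}. The $\eta$-dependence inside each LOCO block, after iterated conditioning using $\EE[Y - \eta(Z) \mid Z] = 0$, contributes $O(\|\hat\eta - \eta\|_{L_2}^2)$. The $L$-perturbation of the weight $L_{jl}^2$ is canceled to first order by the correction term $-L_{jl}(Z_jZ_l - \delta_{jl})\Delta_j(Z)^2$—this is precisely the double-robustness property built into the EIF derived in \Cref{prop:EIF-H}—and a second-order Taylor expansion of $\Sigma \mapsto (\Sigma^{-1/2})_{jl}^2$ bounds the residual by $O(\|\hat L - L\|_{\oper}^2)$. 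Cross terms of order $\|\hat\mu - \mu\|_{L_2}\|\hat L - L\|_{\oper}$ arise via Cauchy–Schwarz and are also $\Op(\varepsilon_n^2)$.

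The main obstacle will be verifying the first-order $L$-cancellation explicitly: I must show that the linear term in $\hat L - L$ of $\PP\{(\hat L_{jl}^2 - L_{jl}^2)\Delta_j(Z)^2\}$ matches the contribution from pairing the score correction $-L_{jl}(Z_jZ_l - \delta_{jl})$ with the covariance perturbation $XX^\top - \Sigma$ via the derivative formula $\mathrm{d}L = -\tfrac12 L\,(\mathrm{d}\Sigma)\,L$ from \Cref{prop:EIF-H}. Once this algebraic matching is confirmed, the Hessian-bound argument for $\Sigma \mapsto (\Sigma^{-1/2})_{jl}^2$ completes the bias bound. A secondary subtlety is that $\hat\eta = \hat\mu \circ \hat T^{-1}$ depends on $\hat L$ as well, so care is needed to keep the $L$- and $\mu$-errors separated in the decomposition; this is handled cleanly by invoking item (iii) above rather than expanding $\hat\eta$ directly.
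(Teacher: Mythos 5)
The overall architecture of your plan — isolate perturbations in each nuisance, Cauchy–Schwarz for the $L_2$ bound, a second-order argument for the bias — matches the paper's. For the $L_2$ bound the two approaches are essentially the same. The interesting divergence is in the bias bound, where your mechanism is different from, and in one respect harder than, what the paper actually does.

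The paper substitutes $\hat\Delta_j=\Delta_j+\vartheta_j$ and $Y-\hat\eta=(Y-\eta)-\delta$, regroups into four explicit remainders $T_{1l},\dots,T_{4l}$, and then observes that the $L$-weight perturbation enters only through
\[
  T_{1l}=\sum_j D^{(2)}_j\bigl[2U\Delta_j+\Delta_j^2-\PP\Delta_j^2\bigr],\qquad D^{(2)}_j:=\hat L_{jl}^2-L_{jl}^2 ,
\]
whose bracket is \emph{exactly centred} (the $\PP\Delta_j^2$ subtraction is inherited from $\phi_{X_l}(\hat\PP)-\phi_{X_l}(\PP)$, i.e.\ from the centering constant of the influence function, not from the score-correction term). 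Hence $\EE[T_{1l}]=0$ with no algebra needed — there is \emph{no} first-order $L$-term to cancel in the paper's organization. The score-correction contribution is sequestered in $T_{3l}$ and the $\eta$-perturbation in $T_{2l}$, each bounded separately by Cauchy–Schwarz once you observe that every summand contains a factor $\vartheta_j$, $\delta_{-j}$, or $D^{(1)}_j$.

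Your plan instead runs a von Mises / Neyman-orthogonality argument: linearize $\phi_{X_l}$ in $L$, then verify that the linear term in $\hat L-L$ of the weight $\hat L_{jl}^2$ is annihilated by pairing the score correction $-L_{jl}(Z_jZ_l-\delta_{jl})\Delta_j^2$ with the covariance perturbation $XX^\top-\Sigma$ via $\rd L=-\tfrac12 L(\rd\Sigma)L$, leaving a residual controlled by a second-order Taylor expansion of $\Sigma\mapsto(\Sigma^{-1/2})_{jl}^2$. Two observations. (1) This is a genuinely different route — the paper never performs a Taylor expansion of the map $\Sigma\mapsto L$; the centering of $T_{1l}$ makes it unnecessary. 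Your route therefore requires verifying the cancellation explicitly via matrix calculus, which is real extra work and, as you note, the main obstacle. (2) Be aware that the paper's Proposition~C.5 and Theorem~4.3 use opposite conventions for $L$ ($\Sigma^{-1/2}$ vs.\ $\Sigma^{1/2}$); if you carry out the matching you must fix one convention consistently, as the derivative $\rd L$ has the wrong sign and scaling otherwise. On the remaining points your plan is sound: invoking Lemma~C.4 (\Cref{lem:eta-j}) for $\|\hat\eta_{-j}-\eta_{-j}\|$, the composite bound for $\|\hat\eta-\eta\|$ in terms of $\|\hat\mu-\mu\|$ and $\|\hat L-L\|$, and $\EE[Z_jZ_l-\delta_{jl}]=0$ for the first piece of the score-correction remainder all mirror the paper's ingredients.
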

\begin{proof}%[Proof of \Cref{lem:err-varphi-X}]
    For simplicity, we will not explicitly indicate the dependency on \( Z \).
    For $l\in[d]$ fixed and for every $j\in[d]$, we use the following notation $\delta=\hat\eta-\eta$, $\delta_{-j}=\hat\eta_{-j}-\eta_{-j}$, $\vartheta_{j}=\delta-\delta_{-j}$, $\Delta_j = \eta - \eta_{-j}$, $\hat\Delta_j = \hat\eta - \hat\eta_{-j}$, $D_{jl}\;:=\;\hat L_{jl}^{2}\;-\;L_{jl}^{2}$, $U=Y-\eta$, $\hat U=Y-\hat\eta$.

    \noindent\textbf{Step 1: Decomposition.}
    Put
    \[
       A_{j}\;:=\;L_{jl}^{2} [\,2U\Delta_j+\Delta_j^{2} ],
       \qquad
       B_{j}\;:=\;L_{jl}\,(Z_jZ_l-\delta_{jl})\,\Delta_j^{2},
    \]
    and denote by $\hat A_{j},\hat B_{j}$ the corresponding quantities with
    $(L_{jl},\eta,\Delta_j)$ replaced by
    $(\hat L_{jl},\hat\eta,\hat\Delta_j)$.
    Then
    \begin{align}
       \varphi_{X_l}(O;\hat\PP)-\varphi_{X_l}(O;\PP)
       \;=\;
       \sum_{j=1}^{d}(E_{1,j}+E_{2,j})
             \;-\;
              [\phi_{X_l}(\hat\PP)-\phi_{X_l}(\PP) ],
       \label{eq:varphi-X-decomp}
    \end{align}
    where $E_{1,j}:=\hat A_{j}-A_{j}$ and $E_{2,j}:=-\hat B_{j}+B_{j}$.
    Using $\hat\Delta_j=\Delta_j+\vartheta_j$ and
    $U-\delta=Y-\hat\eta$ we obtain
    \begin{align}
       E_{1,j}
       &=\;
          D^{(2)}_{j} [\,2U\Delta_j+\Delta_j^{2} ]
          +\hat L_{jl}^{2} [\,2(U-\delta)\vartheta_j
          +\vartheta_j^{2}-2\delta_{-j}\Delta_j ],
       \label{eq:varphi-X-E1}\\[2pt]
       E_{2,j}
       &=\;
          -D^{(1)}_{j}\,(Z_jZ_l-\delta_{jl})\,\Delta_j^{2}
          -\hat L_{jl}(Z_jZ_l-\delta_{jl})
                [\,2\Delta_j\vartheta_j+\vartheta_j^{2} ],
       \label{eq:varphi-X-E2}
    \end{align}
    with
    \(
       D^{(2)}_{j}:=\hat L_{jl}^{2}-L_{jl}^{2},\;
       D^{(1)}_{j}:=\hat L_{jl}-L_{jl}.
    \)
    
    Because the plug‑in estimator of $\phi_{X_l}$ employs the empirical mean of
    $\hat\Delta_j^{2}$, 
    \begin{equation}
      \phi_{X_l}(\hat\PP)-\phi_{X_l}(\PP)
      \;=\;
      \sum_{j=1}^{d}
          \{
            D^{(2)}_{j}\,\PP[\Delta_j^{2}]
            \;+\;
            \hat L_{jl}^{2}\,\PP [\,2\Delta_j\vartheta_j+\vartheta_j^{2} ]
            \;+\;
            \hat L_{jl}^{2}\,(\PP_n-\PP) [\hat\Delta_j^{2} ]
          \}.
      \label{eq:varphi-X-phi}
    \end{equation}
    
    \noindent
    Substituting \eqref{eq:varphi-X-E1}-\eqref{eq:varphi-X-phi} into
    \eqref{eq:varphi-X-decomp} and collecting like terms yields
    \begin{equation}
      \varphi_{X_l}(O;\hat\PP)-\varphi_{X_l}(O;\PP)
      \;=\;
      T_{1l}+T_{2l}+T_{3l}+T_{4l}, \label{eq:varphi-X-final-split}
    \end{equation}
    where
    \begin{align*}
       T_{1l}
        &:=\sum_{j}D^{(2)}_{j} [\,2U\Delta_j+\Delta_j^{2}-\PP[\Delta_j^{2}] ],
    \\[2pt]
      T_{2l}
        &:=\sum_{j}\hat L_{jl}^{2}
             \{
              2(U-\delta)\vartheta_j
              +\vartheta_j^{2}
              -2\delta_{-j}\Delta_j
              -\PP[\,2\Delta_j\vartheta_j+\vartheta_j^{2} ]
             \},
    \\[2pt]
      T_{3l}
        &:=\sum_{j}
             [
              -D^{(1)}_{j}(Z_jZ_l-\delta_{jl})\,\Delta_j^{2}
              -\hat L_{jl}(Z_jZ_l-\delta_{jl})
                 (2\Delta_j\vartheta_j+\vartheta_j^{2})
             ],
    \\[2pt]
      T_{4l}
        &:=\,-\sum_{j}\hat L_{jl}^{2}\,(\PP_n-\PP)[\hat\Delta_j^{2}].
    \end{align*}

    \noindent\textbf{Step 2: $L_2$-norm bound.}
    We bound $\|T_{kl}\|_{L_2}$ for $k=1,\ldots, 4$, term by term.
    Define $e_{L,l}\;:= \|\hat L_{\cdot l} - L_{\cdot l}\|_{2}$, $e_{\eta} =\|\hat\eta -\eta\|_{L_2}$, and \(e_n= e_{L,l}+\|L_{\cdot l}\|_2 e_{\eta}\).

    \textit{(i) Term $T_{1l}$.}  
    Because $|D^{(2)}_{j}|\le 2C|D^{(1)}_{j}|$ and
    $ \|2U\Delta_j+\Delta_j^{2}-\PP[\Delta_j^{2}] \|_{L_2}\le 10C^{2}$,
    \[
       \|T_{1l}\|_{L_2}
          \le
          20C^{3}\sum_{j}|D^{(1)}_{j}|
          \le
          20C^{3}\,e_{L,l}
          \;=\;\Op(e_n).
    \]

    \textit{(ii) Term $T_{2l}$.}  
    Each factor in the curly braces of $T_{2l}$ contains at least one of \(\vartheta_j\) or \(\delta_{-j}\):
    \[
       \|\vartheta_j\|_{L_2}\le 2(\|\hat\eta-\eta\|_{L_2} + \|\hat\eta_{-j}-\eta_{-j}\|_{L_2}) ,\quad
       \|\delta_{-j}\|_{L_2}\le \|\hat\eta_{-j}-\eta_{-j}\|_{L_2}.
    \]
    Hence, from Cauchy-Schwartz inequality and \Cref{lem:eta-j}, it follows that
    \[
       \|T_{2l}\|_{L_2}
          \le
          \|\hat L_{\cdot l}\|_{L_2}
          (\|\hat\eta-\eta\|_{L_2}
                +\|\hat\eta_{-j}-\eta_{-j}\|_{L_2})
          \;=\;\Op(e_n),
    \]
    by noting that $\|\hat L_{\cdot l}\|_{L_2} \leq \|L_{\cdot l}\|_{2} + e_{L,l}$.
    
    \textit{(iii) Term $T_{3l}$.}  
    For the first piece in $T_{3l}$,
    \(
       \|(Z_jZ_l-\delta_{jl})\Delta_j^{2}\|_{L_2}\le 4C^{3},
    \)
    so
    \(
       \|D^{(1)}_{j}(\,\cdot\,)\|_{L_2}\le 4C^{3}|D^{(1)}_{j}|.
    \)
    Summation gives $\Op(e_{L,l})$.
    For the second piece, note that
    \(
      \|Z_jZ_l-\delta_{jl}\|_{L_2}\le 2C^{2}
    \)
    and
    \(
      \|2\Delta_j\vartheta_j+\vartheta_j^{2}\|_{L_2}
         \lesssim\|\hat\eta-\eta\|_{L_2}
                 +\|\hat\eta_{-j}-\eta_{-j}\|_{L_2} \leq 2 e_{\eta},
    \) from \Cref{lem:eta-j}.
    Hence, by Cauchy-Schwarz inequality, we have
    \[
       \|T_{3l}\|_{L_2}
          =\Op(e_n).
    \]

    \textit{(iv) Term $T_{4l}$.}  
    Conditional on the nuisance estimates,
    \(
       \VV \{(\PP_{n}-\PP)[\hat\Delta_j^{2}] \}
          \le \frac{16C^{4}}{n},
    \)
    so
    \(
       \|(\PP_{n}-\PP)[\hat\Delta_j^{2}]\|_{L_2}
          =\cO(n^{-1/2}).
    \)
    With $|\hat L_{jl}|\le C$,
    \(
       \|T_{4l}\|_{L_2}
          =\Op(n^{-1/2}).
    \)

    Putting the four bounds together, we have
    \[
        \|\varphi_{X_l}(O;\hat\PP)-\varphi_{X_l}(O;\PP) \|_{L_2}
          \le
          \|T_{1l}\|_{L_2}+\|T_{2l}\|_{L_2}+\|T_{3l}\|_{L_2}+\|T_{4l}\|_{L_2}
          \;=\;
          \Op(e_n+n^{-1/2}).
    \]

    \noindent\textbf{Step 3: Expectation bound.} We again bound the expectation term by term.
    
    \textit{(i) Term $T_{1l}$.}
    $T_{1l}$ is of the form
    \(D^{(2)}_{j}\{2U\Delta_j+\Delta_j^{2}-\EE[\Delta_j^{2}]\}\).
    Conditional on the (independent) nuisance estimates, the factor
    $D^{(2)}_{j}$ is fixed, while  
    \(\EE[\,2U\Delta_j\mid Z]=0\) and  
    \(\EE[\Delta_j^{2}]=\EE[\Delta_j^{2}]\) by definition, so the centered bracket has mean~0.  
    Thus, $\EE[T_{1l}]=0$.

    \textit{(ii) Term $T_{2l}$.}
    Each term inside the braces of $T_{2l}$ contains at least one factor
    \(\vartheta_j=\delta-\delta_{-j}\) or $\delta_{-j}$.  Because
    \(\|\delta_{-j}\|_{L_2}\le\|\delta\|_{L_2}=e_{\eta}\), Cauchy-Schwarz inequality gives
    \[
       |\EE[(U-\delta)\vartheta_j]|
            =|\EE[\delta\vartheta_j]|
            \le \|\delta\|_{L_2}\|\vartheta_j\|_{L_2}
            \lesssim e_{\eta}^2,
    \]
    and similarly
    \(|\EE[\delta_{-j}\Delta_j]|\lesssim e_{\eta}^{2}\).
    All other raw moments in the bracket are of this same order.
    Therefore, we have
    \[
       |\EE[T_{2l}]|
         \lesssim
         \|\hat L_{\cdot l}\|_{L_2}^{2}\, e_{\eta}^{2}
         =\Op(e_n^{2}).
    \]

    \textit{(iii) Term $T_{3l}$.}
    Because $\EE[Z_jZ_l-\delta_{jl}]=0$, we have
    \[
        \PP[T_{3l}]
        =-\sum_{j}\hat L_{jl}\,
            \EE [(Z_jZ_l-\delta_{jl})
                       (2\Delta_j\vartheta_j+\vartheta_j^{2}) ].
    \]
    Every summand contains at least one factor \(\vartheta_j\); by
    Cauchy-Schwarz,
    \[
        |\EE[(Z_jZ_l-\delta_{jl})
                (2\Delta_j\vartheta_j+\vartheta_j^{2})] |
       \le
       3C^{2}\|\vartheta_j\|_{L_2}\,
            \max  \{2\|\Delta_j\|_{\infty},\|\vartheta_j\|_{L_2} \}
       =\Op(e_{\eta}^{2}).
    \]
    Multiplying by $|\hat L_{jl}|\le C$ and summing gives
    \[
       |\EE[T_{3l}]|
         \lesssim
         \|\hat L_{\cdot l}\|_{L_2}\,e_{\eta}^{2}
         =\Op(e_n^{2}).
    \]

    \textit{(iv) Term $T_{4l}$.} 
    Because $\EE[(\PP_{n}-\PP)f]=0$ for any square-integrable $f$, we have $\EE[T_{4l}]=0$.

    Combining the above results yields that
    \(
      \PP\{\varphi_{X_l}(O;\hat \PP)-\varphi_{X_l}(O;\PP)\}
        =\Op(e_n^{2}).
    \)
    Finally, we will show that $e_n\lesssim \varepsilon_{n}$.
    By triangle inequality and Cauchy-Schwarz inequality, we have that
    \begin{align*}
        \|\hat\eta - \eta\|_{L_2} &= \| (\hat\eta -\eta)\circ\hat T^{-1} + (\eta \circ \hat T^{-1} - \eta \circ T^{-1}) \|_{L_2}  \\
         &\lesssim \|\hat\mu - \mu\|_{L_2} + \|\hat T - T\|_{L_2} .
    \end{align*}

    Let \(E := \hat L-L\).  Using the cyclic property of the trace,
\[
       \|\hat T-T\|_{L_2(\PP_X)}^{2}
          = \EE[\|EX\|_2^{2}]
          = \tr(E\Sigma E^{\top})
          = \tr(E^{\top}E\,\Sigma).
    \]
    For any two positive-semidefinite matrices \(A,B\succeq0\),
    \(\tr(AB)\le \lambda_{\max}(A)\,\tr(B)\).  Applying this with
    \(A = E^{\top}E\) and \(B = \Sigma\) yields
    \[
       \|\hat T-T\|_{L_2(\PP_X)}^{2}
          \le
          \lambda_{\max}(E^{\top}E)\,\tr(\Sigma)
          = \|E\|_{\oper}^{2}\,\tr(\Sigma)
          = \tr(\Sigma)\,\|\hat L-L\|_{\oper}^{2}.
    \]
    Furthermore, the column-wise error can also be bounded by:
    \[
       \max_{l\in[d]}\|\hat L_{\cdot\,l}-L_{\cdot\,l}\|_{2}
          \le
          \|\hat L-L\|_{\oper}.
    \]
    Thus, we further have $e_n \lesssim   \|\hat{L}_{\cdot l} - L_{\cdot l}\|_{2} + \|L_{\cdot l}\|_{2}[\|\hat \mu -\mu\|_{L_2} + \tr(\Sigma)^{1/2} \|\hat{L} - L\|_{\oper}] \lesssim \varepsilon_n$, which completes the proof.
\end{proof}

\subsubsection{Basic properties of transport maps}\label{app:sub:basic-transport}

\begin{lemma}[Bi-Lipschitz $\boldsymbol{\Longrightarrow}$ Bijective]\label[lemma]{lem:bilipshitz-bijective}
    Let $T:\mathbb{R}^{d}\to\mathbb{R}^{d}$ satisfy the \emph{global bi-Lipschitz} bounds
    \[
        c\,\|x-y\|
        \le
        \|T(x)-T(y)\|
        \le
        C\,\|x-y\|,
        \qquad\forall x,y\in\mathbb{R}^{d},
    \]
    for some constants $0<c\le C<\infty$.
    Then $T$ is a bijection and a bi-Lipschitz homeomorphism of $\mathbb{R}^{d}$ onto itself.    
\end{lemma}
\begin{proof}%[Proof of \Cref{lem:bilipshitz-bijective}]
We split the proof into two steps.

\noindent\textbf{Step 1: Bijection.}
We verify the properties of $T$ one by one:
\begin{enumerate}[(a)]
    \item \emph{Injectivity.}  If $T(x)=T(y)$ then $c\|x-y\|\le0$, so $x=y$. Hence, $T$ is injective
    
    \item \emph{Properness.}  From the lower bound,
          $\|T(x)\|\ge c\|x\|-\|T(0)\|$, hence $\|x\|\to\infty\implies\|T(x)\|\to\infty$.
          Thus inverse images of bounded sets are bounded (and closed by continuity), hence compact.
    \item \emph{Openness.}  For any ball $B(x,r)$ one has
          $B(T(x),cr)\subset T(B(x,r))\subset B(T(x),Cr)$,
          so $T$ maps open sets to open sets.
    \item \emph{Surjectivity.}  The image $T(\mathbb{R}^{d})$ is non-empty, open (3) and closed
          (properness $\,+$\, continuity).  Since $\mathbb{R}^{d}$ is connected,
          the only such subset is $\mathbb{R}^{d}$ itself.
    \end{enumerate}
    Consequently, $T$ is a bijection.

    \noindent\textbf{Step 2: Lipshitzness of inverse mapping.}
    For any two points \(u,v\in\mathbb R^{d}\) and set  
    \(x:=T^{-1}(u),\;y:=T^{-1}(v)\).
    This gives
    \[
            c\|x-y\|
            \le
            \|T(x)-T(y)\|
            \;=\;
            \|u-v\|.
    \]
    Rearranging yields
    \[
            \|T^{-1}(u)-T^{-1}(v)\|
            \;=\;
            \|x-y\|
            \le
            \frac1c\,\|u-v\|,
            \qquad\forall u,v\in\mathbb R^{d}.
    \]
    Hence \(T^{-1}\) is globally Lipschitz with 
    \[
        \Lip(T^{-1})\le1/c.
    \]
    Consequently, $T$ is a bijection and a bi-Lipschitz homeomorphism of $\mathbb{R}^{d}$ onto itself.
\end{proof}

\begin{lemma}[Bi-Lipschitz gradient $\Longleftrightarrow$ uniform Hessian bounds]\label[lemma]{lem:bilipshitz-potential}
    Let $\Phi:\mathbb{R}^d\to\mathbb{R}$ be a $C^{2}$, convex function and set $T=\nabla\Phi$.
    The following are equivalent:
    \begin{enumerate}[(a)]
    \item {Uniform Hessian bounds:}  
          There exist constants $0<m\le M<\infty$ such that, for all $x\in\mathbb{R}^d$,
          \begin{align}
              m\,I_d \;\preceq\; \nabla^2\Phi(x) \;\preceq\; M\,I_d .\label{eq:hessian}
          \end{align}
    \item {Global bi-Lipschitz gradient:}  
          There exist constants $0<m\le M<\infty$ such that, for all $x,y\in\mathbb{R}^d$,
          \begin{align}
              m\|x-y\|
              \le
              \|\nabla\Phi(x)-\nabla\Phi(y)\|
              \le
              M\|x-y\|.  \label{eq:bi-lipshitz}
          \end{align}
    \end{enumerate}
    % Under either (and hence both) conditions, if $T$ is a bijection, then its inverse satisfies $\Lip(T^{-1})\le 1/m$.
\end{lemma}
\begin{proof}%[Proof of \Cref{lem:bilipshitz-potential}]
    We split the proof into different steps.
    
    \noindent\textbf{Step 1: (a) $\Rightarrow$ (b).}
    For $x,y\in\mathbb{R}^d$ define $\gamma(t)=y+t(x-y)$, $t\in[0,1]$.  
    By the fundamental theorem of calculus,
    \[
      \nabla\Phi(x)-\nabla\Phi(y)
      \;=\;
      \int_{0}^{1} \nabla^2\Phi(\gamma(t))\,(x-y)\rd t .
    \]
    Applying the bounds \eqref{eq:hessian} inside the integral gives
    \[
      m\|x-y\|
      \le
      \|\nabla\Phi(x)-\nabla\Phi(y)\|
      \le
      M\|x-y\|,
    \]
    establishing \eqref{eq:bi-lipshitz}.

    \noindent\textbf{Step 2: (b) $\Rightarrow$ (a).}
    Fix $x\in\mathbb{R}^d$ and a unit vector $v$.  
    For $t>0$ let $y=x+tv$.  Using \eqref{eq:bi-lipshitz},
    \[
      m t
      \le
      \|\nabla\Phi(x+tv)-\nabla\Phi(x)\|
      \le
      M t .
    \]
    Divide by $t$ and send $t\downarrow0$ to obtain
    \(
      m\le\|\nabla^2\Phi(x)v\|\le M.
    \)
    Since $v$ is arbitrary, this yields \eqref{eq:hessian}.
\end{proof}

\begin{proposition}[High–probability Lipschitz bound for the regression estimator]\label[proposition]{prop:Lip-bound}
    Under \Crefrange{asm:abs-continuity}{asm:data}, assume that the estimator $\hat\mu$ satisfies the mean--squared error
    \(\displaystyle
      \varepsilon
      :=\|\hat\mu-\mu\|_{L_{2}(\PP_X)}\in(0,1/\sqrt{2}).
    \)    
    Define the data–dependent radius
    \begin{align}
        R_\varepsilon
        :=\lambda\,
          \sqrt{d+2\log(1/\varepsilon)},\label{eq:Reps}
    \end{align}
    and the ball
    \[B:=\{x\in\RR^{d}:\|x\|\le R_\varepsilon\}.\]
    Then there exists a universal constant $C=4\sqrt2$, such that
    \[
        \PP_{Z}(
            \Lip(\hat\mu; B)
            \;\le\;
            L_\mu
            +
            C\,
            L_\mu^{1/2}\,
            \lambda^{-1/2}\,
            [d+2\log(1/\varepsilon)\,]^{-1/4}\,
            \varepsilon^{1/2}
        )
        \;\ge\;
        1-\varepsilon^{2}.
    \]
\end{proposition}
\begin{proof}%[Proof of \Cref{prop:Lip-bound}]
    Throughout the proof, we rely on two arguments, {truncation} and a {covering–number} bound.
    We split the proof into multiple steps.
    
    \noindent\textbf{Step~1:  Truncation of the input domain.}
    
    For $Z\sim\cN(0,I_{d})$ the standard tail bound
    \(\PP(\|Z\|>t)\le\exp(-t^{2}/2)\) implies
    \[
        \PP(\|Z\|>R_\varepsilon/\lambda)
        \;\le\;
        \exp(-\tfrac12[d+2\log(1/\varepsilon)])
        \;=\;
        \varepsilon^{2}.
    \]
    Because $X=T^{-1}(Z)$ and $T^{-1}$ is $\lambda$–Lipschitz,
    \(\|X\|\le\lambda\|Z\|\); hence, with probability at least
    \(1-\varepsilon^{2}\), $X\in B$.
    
    \noindent\textbf{Step~2: A finite $h$–net of the truncated ball.}
    Conditioned on event $X\in B$, and fix a mesh size \(h>0\).  The ball \(B\) admits an $h$–net
    \(\cN_{h}=\{x_{1},\dots,x_{N_{h}}\}\) of cardinality
    \(N_{h}\le(1+2R_\varepsilon/h)^{d}\).
    Write \(\Delta_\mu:=\hat\mu-\mu\).
    By Cauchy–Schwarz,
    \begin{align}
        \max_{x_{i}\in\cN_{h}}
              |\Delta_\mu(x_{i})|
        \;\le\;
        \frac{\|\Delta_\mu\|_{L_{2}(\PP_X)}}
             {\sqrt{\PP_X(B)}}
        \;\le\;
        \frac{\varepsilon}{\sqrt{1-\varepsilon^{2}}}
        \;\le\;
        2\varepsilon . \label{eq:Delta-mu-cover}
    \end{align}

    \noindent\textbf{Step 3: Lipschitz bound on the ball \(B\).}
    For arbitrary \(x,y\in B\) choose \(x_{i},x_{j}\in\cN_{h}\) with
    \(\|x-x_{i}\|\le h\) and \(\|y-x_{j}\|\le h\).
    Insert and remove \(\mu\):
    \begin{align*}
        |\hat\mu(x)-\hat\mu(y)|
        &\le
        |\hat\mu(x)-\hat\mu(x_{i})|
        +|\hat\mu(x_{i})-\hat\mu(x_{j})|
        +|\hat\mu(x_{j})-\hat\mu(y)|                                                  \\
        &\le
        L_\mu(\|x-x_{i}\|+\|x_{j}-y\|)
        +|\Delta_\mu(x_{i})|
        +|\Delta_\mu(x_{j})|                                                           \\
        &\le
        L_\mu(\|x-y\|+2h)+4\varepsilon,
    \end{align*}
    where the second line uses the $L_\mu$–Lipschitzness of $\mu$ and the third invokes \eqref{eq:Delta-mu-cover}.  Dividing by $\|x-y\|$ and taking the supremum over \(x,y\in B\) yields
    \begin{align}
        \Lip(\hat\mu;B)
        \;\le\;
        L_\mu
        +2L_\mu h/R_\varepsilon
        +4\varepsilon/h. \label{eq:lip-hat-mu}
    \end{align}

    \noindent\textbf{Step 4: Optimizing the mesh size.}
    Minimizing the right‐hand side of \eqref{eq:lip-hat-mu} over \(h>0\) gives the
    choice
    \[
        h^{*}=\sqrt{\frac{2\varepsilon R_\varepsilon}{L_\mu}}.
    \]
    Substituting it into \eqref{eq:lip-hat-mu} yields that
    \begin{align}
        \Lip(\hat\mu;B)
        &\le
        L_\mu
        +
        4\sqrt{2}\sqrt{\frac{L_\mu\varepsilon}{R_\varepsilon}}
          \nonumber\\
        &=
        L_\mu
        +
        C\,
        L_\mu^{1/2}\,
        \lambda^{-1/2}\,
        [d+2\log(1/\varepsilon)\,]^{-1/4}\,
        \varepsilon^{1/2},
        \label{eq:B-bound}
    \end{align}
    where the last equality is from the definition of $R_{\varepsilon}$ in \eqref{eq:Reps} and $C=4\sqrt{2}$.
    This finishes the proof.
\end{proof}

% \clearpage
\subsection{Knothe-Rosenblatt transport}\label{app:sub:KR}

The classical Brenier map is the unique solution of the quadratic optimal-transport problem  
$c(x,y)=\|x-y\|_{2}^{2}$ \citep{brenier1991polar}.
A series of papers by \citet{carlier2010knothe,bonnotte2013knothe} shows that the triangular Knothe-Rosenblatt rearrangement $T_{\KR}$ may be recovered from Brenier maps by anisotropically rescaling the cost:
\[
   c_{\varepsilon}(x,y)
   \;=\;
   \sum_{j=1}^{d}\lambda_{j}(\varepsilon)\,(x_{j}-y_{j})^{2},
   \qquad
\]
where $\lambda_{1}(\varepsilon)\gg\lambda_{2}(\varepsilon)\gg\cdots\gg \lambda_{d}(\varepsilon)$ and $  \lambda_{j+1}(\varepsilon)\,/\,\lambda_{j}(\varepsilon) \rightarrow0$ as $\varepsilon\to0$.
Typical weights are $\lambda_{j}(\varepsilon)=\varepsilon^{\,j-1}$.
Let $T_{\varepsilon}$ be the $\cW_{2}$-optimal map for $c_{\varepsilon}$.
Then, whenever the source measure $\PP$ is absolutely continuous and the target $\QQ$ is non-degenerate, \citet[Theorem 3.1]{carlier2010knothe} implies that
\[
        T_{\varepsilon}\xrightarrow{L_{2}(\PP)}T_{\KR}
        \quad\text{ as }\;\varepsilon\to 0.
\]
Hence $T_{\KR}$ inherits a genuine optimal-transport interpretation: it is the map one obtains when infinitely prioritising the first
coordinate over the second, the second over the third, and so on.
The resulting continuation path $\varepsilon\mapsto T_{\varepsilon}$ provides a homotopy strategy for numerical solvers that bridges the inexpensive $T_{\KR}$ and the fully isotropic Brenier map \citep{carlier2010knothe}.

\medskip
Below, we recall a self-contained existence lemma.

\begin{lemma}[Knothe-Rosenblatt (K-R) transport]\label[lemma]{lem:KR-map}
    For two Borel probability measures $\PP$ and $\QQ$ on $\RR^{d}$, consider the transport map $T:X\mapsto Z$ such that $T_\sharp\PP=\QQ$:
    \[
       T(x_1,\dots,x_d)
          =\left(T_1(x_1),\;T_2(x_1,x_2),\;\dots,\;
                 T_d(x_1,\dots,x_d)\right)
       \quad\text{with}\quad T_\sharp\PP=\QQ ,
    \]
    where each coordinate is given by:
    \begin{align*}
        T_1(x_1)
           &= F_{Z_1}^{-1}\left( F_{X_1}(x_1) \right),\\[3pt]
         T_j(x_{1:j})
           &= F_{Z_j\mid Z_{1:(j-1)}}
              ^{-1} \left(
                   F_{X_j\mid X_{1:(j-1)}}(x_j\mid x_{1:(j-1)})
                   \,\middle|\,
                   z_{1:(j-1)}
               \right), \quad 2\le j\le d,
    \end{align*}
    and $z_{1:(j-1)}:=T_{1:(j-1)}(x_{1:(j-1)})$.

    If (i) $\PP$ is absolutely continuous and (ii) each one-dimensional \emph{conditional} cdf of $\QQ$ is continuous and strictly increasing, then $T$ is well-defined and unique within the monotone triangular class $\PP$-almost surely.
\end{lemma}
\begin{proof}

We split the proof into 2 steps.

\noindent\textbf{Step 1: Existence.}
The absolute continuity of $\PP$ possesses a joint density $f(x)$; in particular every conditional distribution $F_{X_j\mid X_{1:j-1}}(\,\cdot\,\mid x_{1:j-1})$ is well defined.
Thus, we may factorise $\PP$ sequentially:
\[
   \rd\PP(x)
      = f_{X_1}(x_1)\;
        f_{X_2\mid1}(x_2\mid x_1)\;
        \cdots\;
        f_{X_d\mid1:(d-1)}(x_d\mid x_{1:d-1})\,
        \rd x .
\]
Choose coordinates of $T$ by matching conditional quantiles:
\begin{align*}
    T_1(x_1)
       &= F_{Z_1}^{-1}\left( F_{X_1}(x_1) \right),\\[3pt]
     T_j(x_{1:j})
       &= F_{Z_j\mid Z_{1:(j-1)}}
          ^{-1} \left(
               F_{X_j\mid X_{1:(j-1)}}(x_j\mid x_{1:(j-1)})
               \,\middle|\,
               z_{1:(j-1)}
           \right), \quad 2\le j\le d,
\end{align*}
where $z_{1:(j-1)}:=T_{1:(j-1)}(x_{1:(j-1)})$.
Each step is well-defined because the inner cdf is continuous and the outer inverse cdf exists and is unique by the second assumption.  
Induction shows $T_\sharp\PP=\QQ$, establishing the existence.

%----------------------------------------------------
\noindent\textbf{Step 2: Uniqueness (within the monotone triangular class).}

Restrict to maps satisfying $x_j\;\mapsto\;T_j(x_{1:j}) $ is non-decreasing for every fixed $x_{1:(j-1)}$.
Let $T$ and $\tilde T$ be two such transports.  
Because both first coordinates push $X_1$ to the same measure $\QQ_1$ and both are monotone, we must have $T_1(x_1)=\tilde T_1(x_1)$ $\PP$-almost surely.
By induction, we have $T_j=\tilde T_j$ for $j=2,\dots,d$ $\PP$-almost surely.
Hence, uniqueness holds in this class.
\end{proof}

When $\QQ=\Unif[0,1]^{\otimes d}$, the following lemma provides efficient influence function for $\theta_{jl}(\PP)= \EE[(\partial_{z_l}S_j(Z))^2]$, where $S:=T^{-1}$.

\begin{lemma}[Knothe-Rosenblatt weight EIF]
\label[lemma]{lem:eif-Rosenblatt-mixed}

Let $X=(X_1,\dots ,X_d)^\top$ have a joint distribution $\PP$
that admits a Lebesgue density. Define
\[
\begin{aligned}
   T(X)
      &=\bigl(
          F_{1}(X_1),\;
          F_{2\mid1}(X_2\mid X_1),\;
          \dots,\\
      &\qquad
          F_{d\mid1:(d-1)}\left(X_d\mid X_{1:(d-1)}\right)
        \bigr)\\
      &=:Z\sim\Unif(0,1)^{\otimes d}.
\end{aligned}
\]
Write $S:=T^{-1}$.
For any pair of indices $(j,l)\in[d]^2$ define $H_{jl}(X)
:=\left\{\partial_{z_l}S_j(Z)\right\}^{2}$ and $ \theta_{jl}(\PP):=\EE_{\PP}\!\left[ H_{jl}(X) \right]$.
\begin{enumerate}[(i)]
    \item If $l>j$ then $H_{jl}(X)\equiv 0$, so $\theta_{jl}(\PP)=0$ and the efficient influence function is identically~$0$.

    \item If $l\le j$, the parameter is non-degenerate and its efficient influence function (EIF) is
    \begin{equation}\label{eq:eif-mixed}
       \varphi_{jl}(x)
          = H_{jl}(x)-\theta_{jl}
            \;-\; 2\,H_{jl}(x)\,
                  \left\{
                      \ind\{X_l\le x_l\}
                      -F_{l\mid1:(l-1)}\!\left(x_l\mid x_{1:(l-1)}\right)
                  \right\},
    \end{equation}
    which attains the semiparametric efficiency bound.
    % for every regular parametric sub-model $\{\PP_t: t\in(-\varepsilon,\varepsilon)\}$ with score $\dot\ell(X)=\partial_t\!\log\rd\PP_t(X)|_{t=0}$,
    % \[
    %      \frac{\rd}{\rd t}\Big|_{t=0}\theta_{jl}(\PP_t)
    %      \;=\;
    %      \EE_{\PP}\!\left[
    %          \varphi_{jl}(X)\,\dot\ell(X)
    %      \right].
    % \]
\end{enumerate}
\end{lemma}
\begin{proof}%[Proof of \Cref{lem:KR-map}]
    If $l>j$, because the inverse K-R map is lower-triangular, $S_j$ depends only on $(z_1,\dots ,z_j)$, hence $\partial_{z_l}S_j\equiv 0$ whenever $l>j$; all assertions are immediate.

    Next, we analyze the second case in two steps.
    First, we introduce some notation.
    Fix $(j,l)$ with $l\le j$.
    Denote the lower-triangular Jacobian $J_{T}(X):=\left[\partial_{x_k}T_j(X)\right]_{j,k\le d}$ and
    \[
    \begin{aligned}
    d_j(X)
        &:=J_{T}(X)_{jj}
          =f_{\,j\mid 1:(j-1)}\!\left(X_j\mid X_{1:(j-1)}\right),\\
    b_{jk}(X)
        &:=J_{T}(X)_{jk}
          =\partial_{x_k}F_{\,j\mid 1:(j-1)}
            \!\left(X_j\mid X_{1:(j-1)}\right),
          \quad k<j.
    \end{aligned}
    \]    
    Because $J_{T}(X)$ is unit-triangular up to the positive diagonal $\{d_j\}$, its inverse $J_{S}(Z)=J_{T}^{-1}(X)=\left[\gamma_{jk}(X)\right]_{j,k\le d}$ is also lower-triangular.

    Set
    \[
       W:=X_{1:(l-1)},\qquad
       V:=X_l,\qquad
       U:=X_{(l+1):d},
    \]
    so that $X=(W,V,U)$ and the factorisation of $\PP$ reads $f_{X} = f_W  f_{V\mid W} f_{U\mid W,V}$.    
    Because $S_j$ depends only on $X_{1:j}$, $H_{jl}$ may be viewed as a measurable functional of $(W,V,U_{1:(j-l)})$; its exact algebraic form, is multiplicatively proportional to $d_{l}(X)^{-2}$:
    \begin{equation}\label{eq:H-factorisation}
          H_{jl}(X)=\gamma_{jl}(X)^{2}=R_{jl}(X)d_l(X)^{-2},
          \qquad R_{jl}(X)=\left\{d_l(X)\,\gamma_{jl}(X)\right\}^{2}.
    \end{equation}
    Alternatively, because the inverse-Jacobian entries satisfy
    \[
        \gamma_{ll}(X)=\frac{1}{d_l(X)},\qquad
        \gamma_{jl}(X)=-\frac{1}{d_j(X)}
                   \sum_{k=l}^{j-1}b_{jk}(X)\,\gamma_{kl}(X),
    \]
    the weight $R_{jl}$ can be computed recursively:
    \[
       R_{ll}(X)=1,\qquad
       R_{jl}(X)=
          \left[
            -\sum_{k=l}^{j-1}
               \frac{b_{jk}(X)}{d_j(X)}
               \sqrt{R_{kl}(X)}
          \right]^{2},\qquad l<j.
    \]
    The factor $R_{jl}$ depends on lower-order conditional cumulative distribution functions only through the values of $(W, V, U)$ and therefore behaves as a fixed, bounded weight in the influence-function calculation.

    \paragraph{Pathwise differentiability.}
    Let $\{\PP_t:|t|<\varepsilon\}$ be a regular sub-model with score
    $\dot\ell(X)$.
    Write  
    $f_{V\mid W,t}$ for the conditional density of $V$ given $W$ under
    $\PP_t$ and set $\dot f_{V\mid W}:=\partial_t f_{V\mid W,t}|_{t=0}$.
    A simple differentiation gives
    \[
       \dot f_{V\mid W}(v\mid w)
            = f_{V\mid W}(v\mid w)
              \left\{\dot\ell(w,v,U)-\EE[\dot\ell(X)\mid W=w]\right\}.
    \]
    Hence, differentiating  
    $\theta_{jl}(t):=\EE_{\PP_t}[H_{jl}(X)]$ and using
    \eqref{eq:H-factorisation},
    \[
       \dot\theta_{jl}
         =\EE\!\left[
             \dot H_{jl}(X)
             \;+\;
             H_{jl}(X)\,\dot\ell(X)
           \right]
         =\EE\!\left[
             \left\{-2H_{jl}(X)\,\eta(X)+H_{jl}(X)-\theta_{jl}\right\}
             \dot\ell(X)
           \right],
    \]
    where
    $\eta(X):=\dot\ell(X)-\EE\!\left[\dot\ell(X)\mid W\right]=\dot\ell(X)-\EE[\dot\ell(X)\mid X_{1:(l-1)}]$.
    Now note the well-known representation  
    \(
        \eta(X)
          =\dfrac{\partial}{\partial v}
           \left\{\ind\{V\le v\}
                  -F_{V\mid W}(v\mid W)
           \right\}\Big|_{v=V};
    \)
    integrating once in $v$ yields  
    \(
        \eta(X)
          = f_{V\mid W}(V\mid W)\,
            \left\{\ind\{V\le x_l\}
                  -F_{V\mid W}(x_l\mid W)
            \right\},
    \)
    so that  
    $-2H_{jl}(X)\eta(X)=
     -2H_{jl}(X)\{\ind\{X_l\le x_l\}-F_{l\mid1:(l-1)}(x_l\mid x_{1:(l-1)})\}$.
    Substituting back proves that \eqref{eq:eif-mixed} is a centered influence function.
    
    %------------------------------------------------------------
    
    \paragraph{Efficiency.}
    Because $\varphi_{jl}$ lies in the tangent space and reproduces the pathwise derivative for every sub-model, it is the unique canonical gradient.
    Square integrability follows from $\EE[R_{jl}(X)]<\infty$ and $\EE[f_{V\mid W}^{-4}(V\mid W)]<\infty$ (which holds whenever the joint density is locally bounded and strictly positive on compact sets).
\end{proof}

Some comments on \Cref{lem:KR-map} follow.

Firstly, the proof of \Cref{lem:KR-map} hinges only on the multiplicative decomposition \eqref{eq:H-factorisation}; it does not require an explicit closed form for $R_{jl}$. 
Hence, the lemma extends to any smooth functional of the K-R map whose density-dependence enters solely through
$f_{l\mid1:(l-1)}^{-2}$.

Secondly, the EIF is identically zero for $l>j$, reflecting the fact that the triangular Rosenblatt inverse does not respond to perturbations in later coordinates when reconstructing an earlier component.

\clearpage
\section{Simulation details}\label{app:sec:simu}
    \subsection{Implementation details}\label{app:subsec:implement}
    \subsubsection{Semicontinuous EOT plan estimation}

    Given centered data \(X \in \RR^{n \times d}\) with sample covariance \( \hat\Sigma = X^\top X / n \), we compute the eigendecomposition \(\hat\Sigma = V \Lambda V^\top\).
    When \(\hat\Sigma\) is full rank, define the whitening matrix \( L^{-1} = V \Lambda^{-1/2} V^\top \) and the square-root matrix \( L = V \Lambda^{1/2} V^\top \).
    The whitened data are \( \tilde X = X L^{-1} \).
    When \(n<d\), we instead work in the rank-\(k\) principal subspace of \(X\), where
    \(k=\mathrm{rank}(X)\), using the truncated decomposition \( \hat\Sigma = V_k \Lambda_k V_k^\top \),
    and define \( L^{-1} = V_k \Lambda_k^{-1/2} \), \( L = V_k \Lambda_k^{1/2}, \) so that \(\tilde X \in \RR^{n\times k}\).

    We consider the semi-continuous entropy-regularized optimal transport problem with source and target:
    \[
    \hat\mu = \frac1n \sum_{i=1}^n \delta_{\tilde X_i},
    \qquad
    \nu = \cN_k(0,I_k),
    \]
    under cost \( c(x,z)=\|x-z\|_2^2, \) and entropic regularization parameter \(\varepsilon>0\). Let \(\gamma_\varepsilon \in \Pi(\hat\mu,\nu)\)
    denote the corresponding EOT coupling. Its Gibbs form is
    \[
    \gamma_\varepsilon(\rd x,\rd z)
    =
    \frac1n \sum_{i=1}^n p_{\varepsilon,i}(z)\,\delta_{\tilde X_i}(\rd x)\,\rd z,
    \]
    where
    \[
    p_{\varepsilon,i}(z)
    \propto
    \exp\!\left(
    f_i-\frac{\|\tilde X_i-z\|_2^2}{\varepsilon}
    \right)\varphi_k(z),
    \]
    \(f_1,\ldots,f_n\) are the source Schr\"odinger potentials and \(\varphi_k\) is standard Gaussian density on \(\RR^k\).

    Because the target is Gaussian and the cost is quadratic, the conditional law
    \(Z\mid X=\tilde X_i\) is available in closed form. Completing the square yields
    \[
    Z \mid X=\tilde X_i
    \sim
    \cN_k\!\left(
    s_\varepsilon \tilde X_i,\;
    \tau_\varepsilon I_k
    \right),
    \qquad
    s_\varepsilon := \frac{2}{2+\varepsilon},
    \qquad
    \tau_\varepsilon := \frac{\varepsilon}{2+\varepsilon}.
    \]
    In particular, the conditional mean does not depend on the fitted potentials \(f_i\). Hence
    the forward barycentric map is explicit:
    \[
    T_\varepsilon(x) := \EE_{\gamma_\varepsilon}[Z\mid X=x] = s_\varepsilon x.
    \]
    Accordingly, we define the latent representation by
    \[
    Z_i = T_\varepsilon(\tilde X_i)=s_\varepsilon \tilde X_i,
    \qquad i=1,\ldots,n.
    \]
    If one wishes to generate full draws from the semi-continuous EOT coupling rather than use
    its barycentric projection, one may sample
    \[
    Z_i^{\mathrm{draw}} = s_\varepsilon \tilde X_i + \tau_\varepsilon^{1/2}\xi_i,
    \qquad
    \xi_i \stackrel{\mathrm{iid}}{\sim} \cN_k(0,I_k).
    \]

    Compared with the discrete-target approximation, this semi-continuous formulation requires
    neither Monte Carlo target points nor Sinkhorn iterations on an \(n\times m\) cost matrix.
    The forward latent variables are obtained directly from the closed-form conditional mean.

    In practice, the estimation of \eqref{def:hat-phi-Z} can still be stabilized by Monte Carlo
    integration over the intervened coordinate. More specifically, we define the LOCO-type and
    CPI-type estimators as
    \begin{align*}
        \hat{\phi}_{Z_j}^{\loco}(\PP)
        &=
        \PP_n\!\left[
        \bigl(Y-\PP_{Z_j^{(j)},m}\{\hat\eta(Z^{(j)})\}\bigr)^2
        -
        \bigl(Y-\hat\eta(Z)\bigr)^2
        \right],
    \end{align*}
    and
    \begin{align*}
        \hat{\phi}_{Z_j}^{\cpi}(\PP)
        &=
        \frac12
        \PP_n\!\left\{
        \PP_{Z_j^{(j)},m}
        \left[
        \bigl(Y-\hat\eta(Z^{(j)})\bigr)^2
        -
        \bigl(Y-\hat\eta(Z)\bigr)^2
        \right]
        \right\}.
    \end{align*}
    Here \(\PP_n\) denotes the empirical average over the \(n\) observations, and
    \(\PP_{Z_j^{(j)},m}\) denotes the empirical average over \(m\) draws of the intervened
    coordinate \(Z_j^{(j)}\). In our implementation, we use \(\hat{\phi}_{Z_j}^{\loco}(\PP)\)
    with \(m=50\) by default.

    \subsubsection{Semicontinuous backward attribution}\label{app:subsec:EOT}

    For Gaussian covariates, the Gaussian Schr\"odinger bridge gives a closed-form decoder \(\EE[X\mid Z=z]\). For non-Gaussian covariates, the full decoder is generally nonlinear.
    However, under the empirical-source / Gaussian-target semi-continuous formulation above, the forward conditional \(Z\mid X\) is explicit for any source distribution.
    Although the semi-continuous formulation yields a closed-form forward conditional
    \[
    Z \mid X=x_i \sim \cN_k(s_\varepsilon x_i,\tau_\varepsilon I_k),
    \qquad
    s_\varepsilon=\frac{2}{2+\varepsilon},
    \qquad
    \tau_\varepsilon=\frac{\varepsilon}{2+\varepsilon},
    \]
    it does not imply a simple Gaussian expression for the backward conditional \(X\mid Z=z\)
    in general. Instead, since the source measure is empirical,
    \[
    \hat\mu=\frac1n\sum_{i=1}^n \delta_{\tilde X_i},
    \]
    the backward conditional is a discrete posterior on the observed support points:
    \[
    \PP_{\gamma_\varepsilon^*}(X=\tilde X_i \mid Z=z)
    =
    \alpha_i(z),
    \qquad
    \alpha_i(z)
    =
    \frac{\exp\!\left(f_i-\|\tilde X_i-z\|_2^2/\varepsilon\right)}
    {\sum_{r=1}^n \exp\!\left(f_r-\|\tilde X_r-z\|_2^2/\varepsilon\right)}.
    \]
    Therefore the exact backward barycentric projection is
    \[
    S_\varepsilon(z):=\EE_{\gamma_\varepsilon^*}[X\mid Z=z]
    =
    \sum_{i=1}^n \alpha_i(z)\,\tilde X_i.
    \]
    Moreover, differentiating the softmax weights gives
    \[
    \nabla_z S_\varepsilon(z)
    =
    \frac{2}{\varepsilon}
    \left\{
    \sum_{i=1}^n \alpha_i(z)\,\tilde X_i \tilde X_i^\top
    -
    S_\varepsilon(z)S_\varepsilon(z)^\top
    \right\}
    =
    \frac{2}{\varepsilon}\Cov_{\gamma_\varepsilon^*}(\tilde X\mid Z=z).
    \]
    Hence the exact backward map and its Jacobian are available in closed form at the sample
    level. In practice, however, evaluating \(S_\varepsilon(z)\) and \(\nabla_z S_\varepsilon(z)\)
    for every \(z\) requires the Schr\"odinger potentials \(f_i\) and repeated softmax summation
    over all \(n\) source points. 
    
    To obtain a simpler and more stable summary for downstream
    attribution, we approximate the backward map $S_{\varepsilon}(z)$ by a global linear projection \(M_w\):
    \[
    M_w
    :=
    \argmin_{M\in\RR^{k\times k}}
    \EE_{\gamma_\varepsilon}\!\left[
    \|\tilde X - M Z\|_2^2
    \right]
    =
    \EE_{\gamma_\varepsilon}[\tilde X Z^\top]
    \Bigl(\EE_{\gamma_\varepsilon}[ZZ^\top]\Bigr)^{-1}.
    \]
    Using the closed-form conditional
    \(
    Z\mid X=\tilde X_i \sim \cN_k(s_\varepsilon \tilde X_i,\tau_\varepsilon I_k)
    \),
    we obtain
    \[
    \EE_{\gamma_\varepsilon}[\tilde X Z^\top]
    =
    \frac1n \sum_{i=1}^n \tilde X_i \EE[Z^\top\mid \tilde X_i]
    =
    s_\varepsilon \hat\Sigma_w,
    \]
    where
    \(
    \hat\Sigma_w := \frac1n \sum_{i=1}^n \tilde X_i \tilde X_i^\top
    \).
    Likewise,
    \[
    \EE_{\gamma_\varepsilon}[ZZ^\top]
    =
    \frac1n \sum_{i=1}^n \EE[ZZ^\top\mid \tilde X_i]
    =
    \tau_\varepsilon I_k + s_\varepsilon^2 \hat\Sigma_w.
    \]
    Therefore the backward map admits the closed form
    \begin{equation}\label{eq:Mw-semicont}
        M_w
        =
        s_\varepsilon \hat\Sigma_w
        \bigl(\tau_\varepsilon I_k + s_\varepsilon^2 \hat\Sigma_w\bigr)^{-1}.
    \end{equation}
    Since \(\tau_\varepsilon>0\), the matrix
    \(
    \tau_\varepsilon I_k + s_\varepsilon^2 \hat\Sigma_w
    \)
    is always invertible, even when \(k<d\) or \(\hat\Sigma_w\) is rank-deficient. Thus the
    semi-continuous backward map is numerically stable without any additional debiasing or
    trace normalization.

    The corresponding attribution matrix in the original feature space is
    \(
    W = L M_w.
    \)
    When \(L\in\RR^{d\times k}\) is obtained from the truncated eigendecomposition, we have
    \(W\in\RR^{d\times k}\), and the original-feature importance is mapped back by
    \(
    \hat\phi_X = (W\odot W)\,\hat\phi_Z,
    \) where \(\odot\) denotes elementwise multiplication.

    \subsection[Model M3]{Detailed calculation under Model \hyperref[M3]{(M3)}}\label{app:subsec:M3}

    We derive the theoretical DFI values for the latent features $Z$ and original features $X$ under the specified model.
    For a Gaussian $X$, the optimal transport map (EOT with $\varepsilon=0$) to $Z \sim \cN_5(0, I_5)$ is the linear transformation $Z = \Sigma^{-1/2}X$. The inverse map is $X = LZ$, where $L = \Sigma^{1/2}$. Given the block-diagonal structure of $\Sigma$, $L$ is also block-diagonal:
    \begin{align*}
        X_1 &= aZ_1 + bZ_2 & X_4 &= aZ_4 + bZ_5 \\
        X_2 &= bZ_1 + aZ_2 & X_5 &= bZ_4 + aZ_5 \\
        X_3 &= Z_3
    \end{align*}
    where $a = \frac{1}{2}(\sqrt{1+\rho}+\sqrt{1-\rho})$ and $b = \frac{1}{2}(\sqrt{1+\rho}-\sqrt{1-\rho})$. Note that $a^2+b^2=1$, $ab=\rho/2$.
    
    The regression function $\eta(Z) = \EE[Y \mid Z]$ becomes:
    \begin{align*}
        \eta(Z) &= \eta_1(Z_1, Z_2) \ind_{\{Z_3>0\}} + \eta_2(Z_4, Z_5) \ind_{\{Z_3<0\}},
    \end{align*}
    where $\eta_1(Z_1, Z_2) = 1.5\left[\tfrac{\rho}{2}(Z_1^2+Z_2^2) + Z_1Z_2\right]$ and $\eta_2(Z_4, Z_5) = \tfrac{\rho}{2}(Z_4^2+Z_5^2) + Z_4Z_5$.

    \noindent\textbf{DFI for Latent Features ($\phi_{Z_j}$)}
    The latent importance score is $\phi_{Z_j} = \EE[\VV(\eta(Z) \mid Z_{-j})]$. 
    % Since the components of $Z$ are independent, we can compute this for each $j \in \{1, \dots, 5\}$.
    \begin{itemize}
        \item For $Z_1$ and $Z_2$: By symmetry, $\phi_{Z_1} = \phi_{Z_2}$. The importance arises from the variance of $\eta_1$ conditional on one of its components, averaged over the indicator.
        \begin{align*}
            \phi_{Z_1} = \phi_{Z_2} = \frac{1}{2}\left(\EE[\eta_1^2] - \EE[(\EE[\eta_1 \mid Z_1])^2]\right) = \frac{9}{16}\rho^2 + \frac{9}{8}
        \end{align*}
    
        \item For $Z_3$: The importance is driven by the switching between the two regression components.
        \begin{align*}
            \phi_{Z_3} = \EE\left[\frac{1}{4}(\eta_1 - \eta_2)^2\right] = \frac{1}{4}\left(\VV(\eta_1-\eta_2) + (\EE[\eta_1-\eta_2])^2\right) = \frac{7}{8}\rho^2 + \frac{13}{16}
        \end{align*}
    
        \item For $Z_4$ and $Z_5$: By symmetry, $\phi_{Z_4} = \phi_{Z_5}$ arises from the variance of $\eta_2$.
        \begin{align*}
            \phi_{Z_4} = \phi_{Z_5} = \frac{1}{2}\left(\EE[\eta_2^2] - \EE[(\EE[\eta_2 \mid Z_4])^2]\right) = \frac{1}{8}\rho^2 + \frac{1}{4}
        \end{align*}
    \end{itemize}
    
    \noindent\textbf{DFI for Original Features ($\phi_{X_l}$)}
    The importance is attributed back to the original features via $\phi_{X_l} = \sum_{j=1}^5 L_{lj}^2 \phi_{Z_j}$.
    
    \begin{itemize}
        \item For $X_1$ and $X_2$: The importance is a weighted average of importances of $Z_1$ and $Z_2$.
        \begin{align*}
           \phi_{X_1} = a^2\phi_{Z_1} + b^2\phi_{Z_2} = (a^2+b^2)\phi_{Z_1} = \phi_{Z_1} = \frac{9}{16}\rho^2 + \frac{9}{8} \\
           \phi_{X_2} = b^2\phi_{Z_1} + a^2\phi_{Z_2} = (a^2+b^2)\phi_{Z_1} = \phi_{Z_1} = \frac{9}{16}\rho^2 + \frac{9}{8}
        \end{align*}
    
        \item For $X_3$: The feature is independent and its importance is preserved.
        \begin{align*}
            \phi_{X_3} = 1^2 \cdot \phi_{Z_3} = \frac{7}{8}\rho^2 + \frac{13}{16}
        \end{align*}
    
        \item For $X_4$ and $X_5$: Similar to $X_1, X_2$, the total importance within the correlated block is preserved and distributed.
        \begin{align*}
            \phi_{X_4} = a^2\phi_{Z_4} + b^2\phi_{Z_5} = (a^2+b^2)\phi_{Z_4} = \phi_{Z_4} = \frac{1}{8}\rho^2 + \frac{1}{4} \\
            \phi_{X_5} = b^2\phi_{Z_4} + a^2\phi_{Z_5} = (a^2+b^2)\phi_{Z_4} = \phi_{Z_4} = \frac{1}{8}\rho^2 + \frac{1}{4}
        \end{align*}
    \end{itemize}

In summary, the DFI values as a function of $\rho$ is given by \Cref{tab:M3}.

\begin{table}[!ht]
    \centering
    \begin{tabular}{c|ccccc}
        \toprule
        $j$ &  1 & 2 & 3 & 4 & 5\\\midrule
        $\phi_{Z_j}$ & $\frac{9}{16}\left(\rho^2 + 2\right)$ & $\frac{9}{16}\left(\rho^2 + 2\right)$ & $\frac{1}{16}\left(14\rho^2 + 13\right)$ & $\frac{1}{4}\left(\rho^2 + 2\right)$ & $\frac{1}{4}\left(\rho^2 + 2\right)$ \\
        $\phi_{X_j}$ & $\frac{9}{16}\left(\rho^2 + 2\right)$ & $\frac{9}{16}\left(\rho^2 + 2\right)$ & $\frac{1}{16}\left(14\rho^2 + 13\right)$  & $\frac{1}{8}\left(\rho^2 + 2\right)$  & $\frac{1}{8}\left(\rho^2 + 2\right)$ \\\bottomrule
    \end{tabular}
    \caption{Theoretical DFI values under model \hyperref[M3]{(M3)} with $\varepsilon=0$.}
    \label{tab:M3}
\end{table}

For particular values of $\rho$ as in \Cref{fig:simu-3}, we have the following:
\begin{itemize}
    \item For $\rho = 0.2$: $\phi_{Z_1} = \phi_{Z_2} = \phi_{X_1} = \phi_{X_2} = 1.1475$, $\phi_{Z_3} = \phi_{X_3} =  0.8475$, and $\phi_{Z_4} = \phi_{Z_5} = \phi_{X_4} = \phi_{X_5} =0.255$.

    \item For $\rho = 0.8$: $\phi_{Z_1} = \phi_{Z_2} = \phi_{X_1} = \phi_{X_2}  = 1.485$, $\phi_{Z_3} = \phi_{X_3} = 1.3725$, and $\phi_{Z_4} = \phi_{Z_5} = \phi_{X_4} = \phi_{X_5} = 0.33$.
\end{itemize}

\clearpage
\subsection{Extra simulation results}\label{app:subsec:extra-simu}
    \begin{figure}[!ht]
        \centering
        \includegraphics[width=0.5\linewidth]{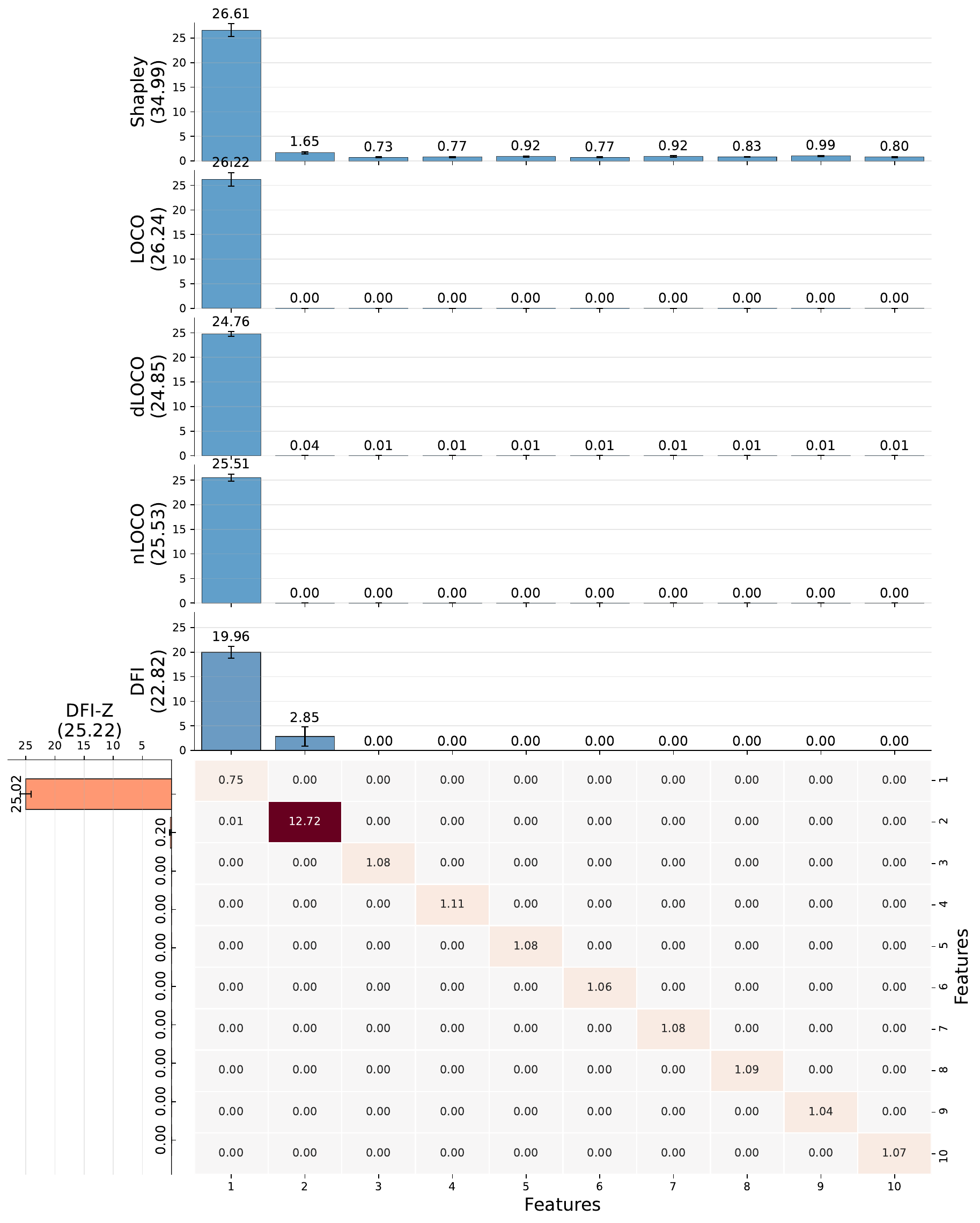}
        \caption{Simulation results for model \hyperref[M4]{(M4)}.
        The interpretation of the bars, error bars, and heat-map is identical to \Cref{fig:simu-1}.
        The number in parentheses is the total estimated importance, which should be close to the signal variance $\VV[\EE[Y\mid X]] = 25$.
        }
        \label{fig:simu-4}
    \end{figure}

    \begin{figure}[!ht]
        \centering
        \includegraphics[width=0.38\linewidth]{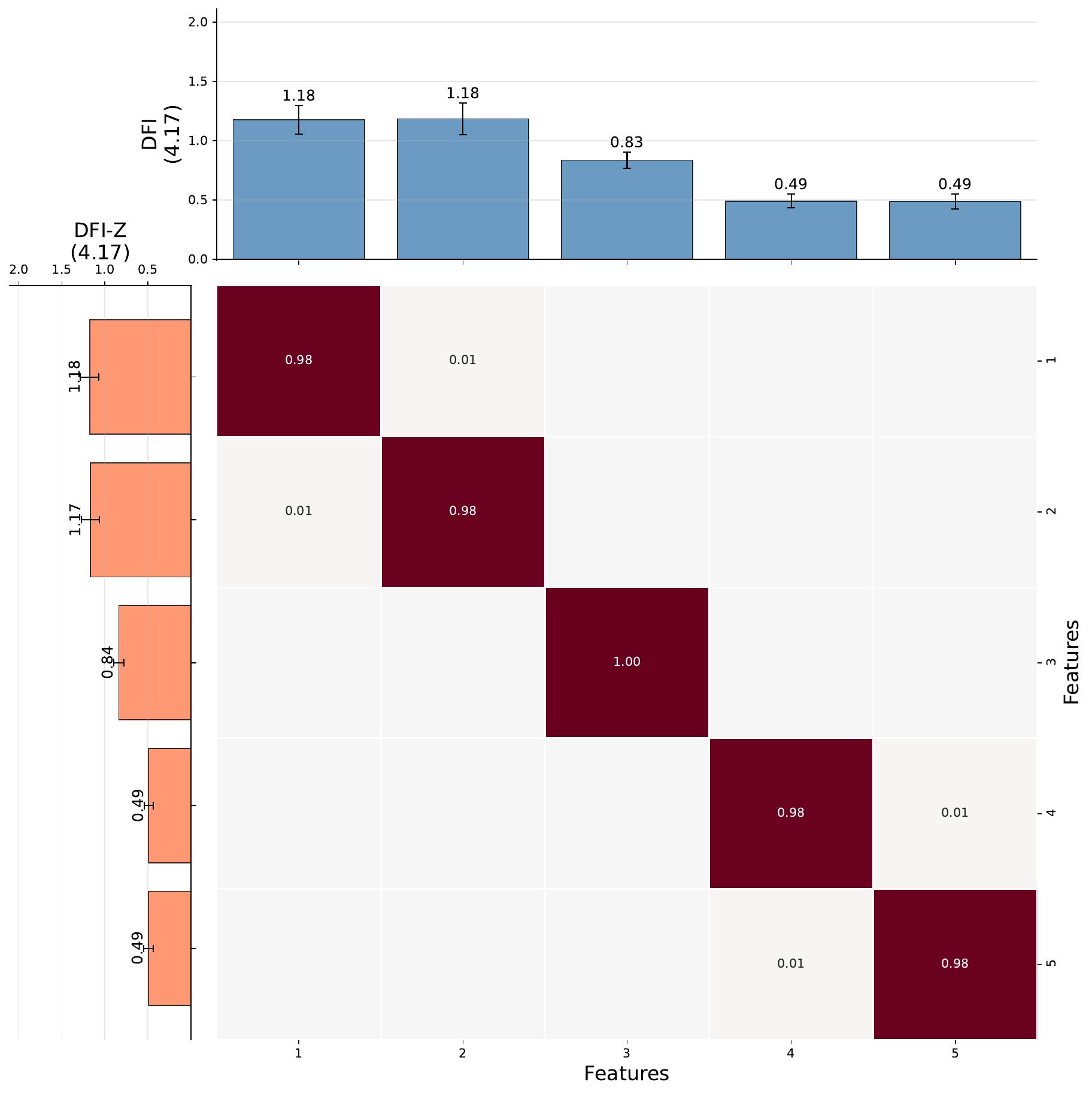}
        \includegraphics[width=0.38\linewidth]{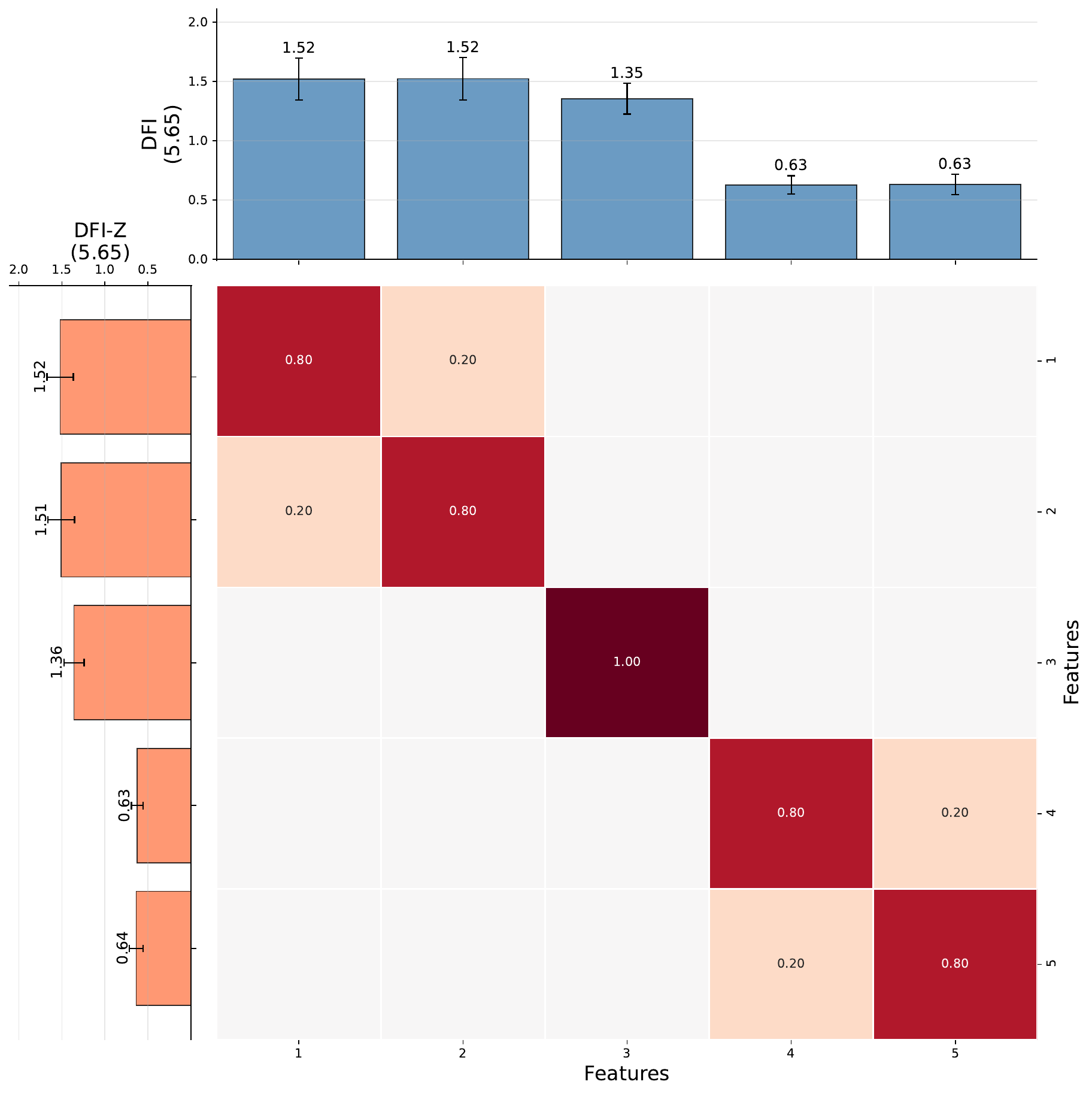}
        \caption{Simulation results for model \hyperref[M3]{(M3)} when the regression function $\mu$ is known.  
        (a) weak correlation ($\rho = 0.2$);  
        (b) strong correlation ($\rho = 0.8$).  
        The interpretation of the bars, error bars, and heat-map is identical to \Cref{fig:simu-1}.
        % The number in parentheses is the total estimated importance, which should be close to the signal variance $\VV[\EE[Y\mid X]] = (26+27\rho^2)/16$.
        }
        \label{fig:simu-3-supp}
    \end{figure}

    \clearpage

    \begin{figure}[!ht]
        \centering
        \includegraphics[width=0.95\linewidth]{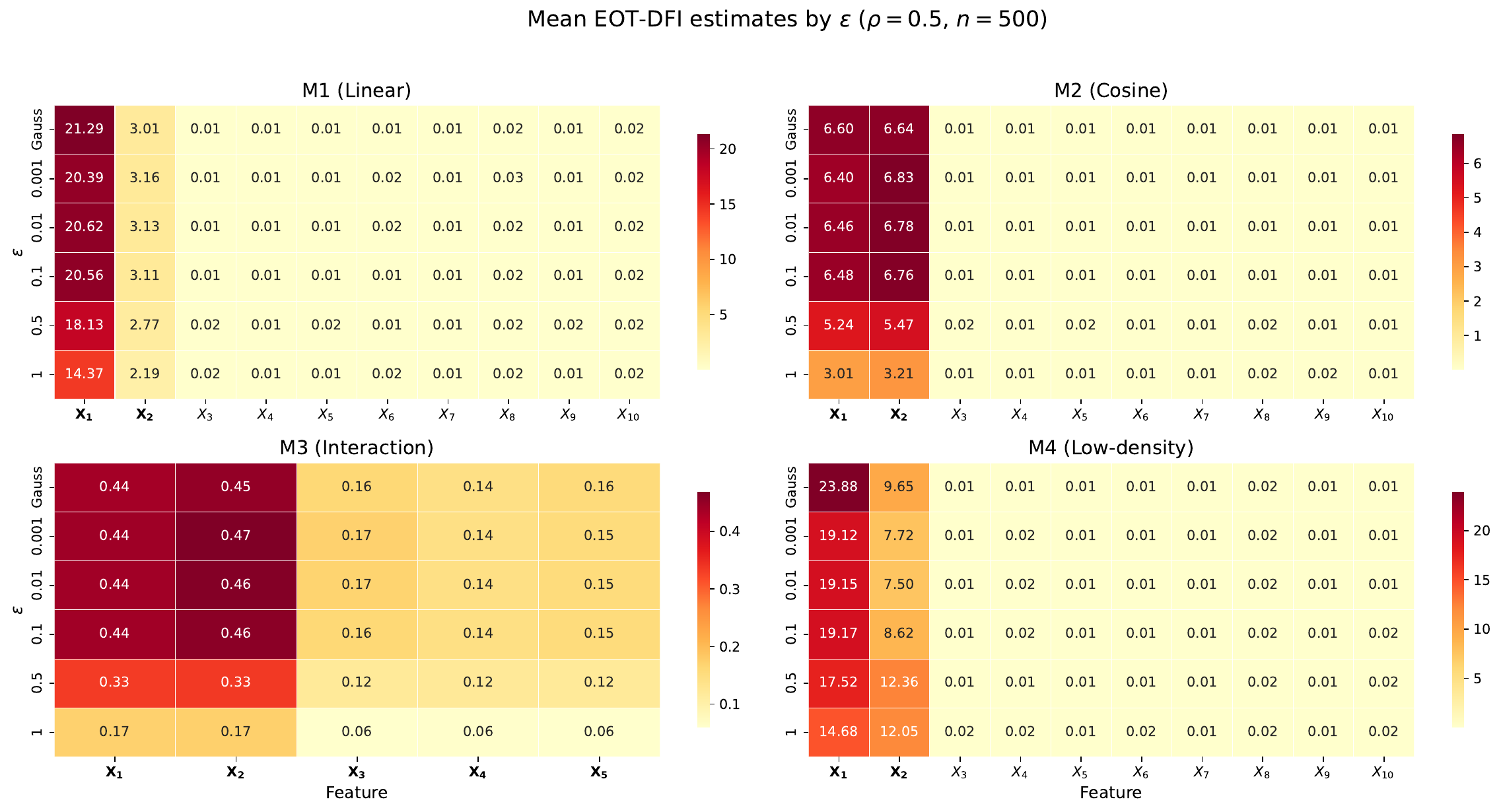}
        \caption{
            Sensitivity of EOT-based attribution to the regularization parameter $\varepsilon$ under model \hyperref[M3]{(M3)} with $\rho = 0.5$. The attribution weights are relatively stable across a range of $\varepsilon$, with the oversampled target method showing less sensitivity than the analytical solution.
        }\label{fig:sensitivity-epsilon}
    \end{figure}

    \begin{figure}[!ht]
        \includegraphics[width=\textwidth]{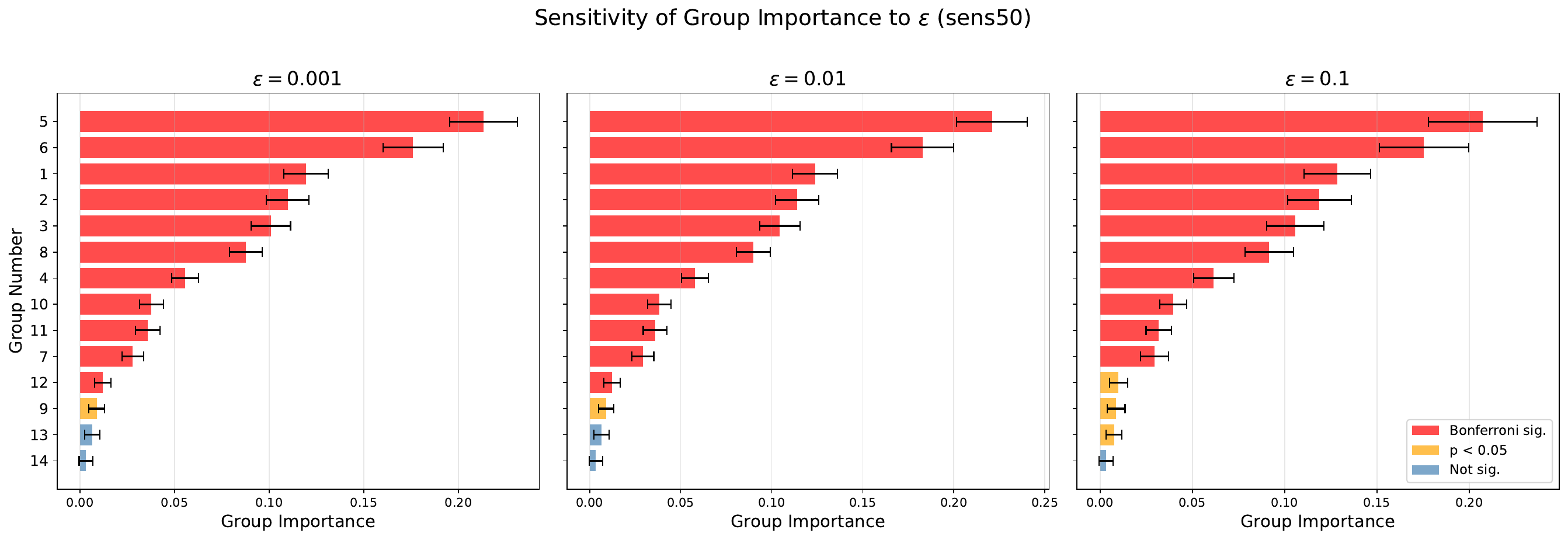}
        \caption{
            Sensitivities of EOT-based attribution to the regularization parameter $\varepsilon$ of importance on antibody group features.
            The ranks are relatively stable across a range of $\varepsilon$.
        }\label{fig:sensitivity-epsilon-sens50}
    \end{figure}

\clearpage
\bibliographystyle{apalike}
\bibliography{ref} 
\end{document}